\documentclass[11pt]{article}
\usepackage[utf8]{inputenc}
\usepackage[T1]{fontenc}
\usepackage{newtxtext}
\usepackage[letterpaper,left=1.05in,right=1.05in,top=0.95in,bottom=0.95in,
            headheight=14pt,headsep=18pt]{geometry}
\usepackage{microtype}
\usepackage{xcolor}
\definecolor{runninggray}{gray}{0.40}
\usepackage{fancyhdr}
\renewcommand{\headrulewidth}{0.3pt}
\renewcommand{\headrule}{\hbox to\headwidth{\color{runninggray}\leaders\hrule height \headrulewidth\hfill}}
\fancypagestyle{plain}{\fancyhf{}\fancyfoot[C]{\small\thepage}\renewcommand{\headrulewidth}{0pt}}
\makeatletter
\renewcommand{\maketitle}{%
  \thispagestyle{plain}%
  \begin{center}
    {\fontsize{20}{24}\selectfont\bfseries\@title\par}
    \ifx\@author\@empty\else\vskip 14pt{\normalsize\begin{tabular}[t]{c}\@author\end{tabular}\par}\fi
    \ifx\@date\@empty\else\vskip 6pt{\small\@date\par}\fi
  \end{center}
  \vspace{10pt}}
\makeatother
\renewenvironment{abstract}
  {\begin{list}{}{\setlength{\leftmargin}{18pt}\setlength{\rightmargin}{18pt}}%
   \item\relax\small\noindent\textbf{Abstract.}\enspace\ignorespaces}
  {\end{list}\vspace{3pt}}
\usepackage{titlesec}
\titleformat{\section}{\large\bfseries}{\thesection}{0.7em}{}
\titleformat{\subsection}{\normalsize\bfseries}{\thesubsection}{0.7em}{}
\titleformat{\subsubsection}{\normalsize\bfseries}{\thesubsubsection}{0.7em}{}
\titlespacing*{\section}{0pt}{18pt plus 3pt minus 2pt}{8pt}
\titlespacing*{\subsection}{0pt}{12pt plus 2pt minus 1pt}{5pt}
\titlespacing*{\subsubsection}{0pt}{10pt plus 2pt minus 1pt}{4pt}

\usepackage{amsmath,amssymb,amsthm}
\usepackage{newtxmath}
\newtheorem{theorem}{Theorem}
\newtheorem{lemma}[theorem]{Lemma}
\newtheorem{proposition}[theorem]{Proposition}
\newtheorem{corollary}[theorem]{Corollary}
\theoremstyle{definition}

\newtheorem{example}{Example}
\theoremstyle{remark}

\DeclareMathOperator{\Var}{Var}
\DeclareMathOperator{\spn}{sp}

\usepackage{graphicx}
\graphicspath{{figures/}}
\usepackage{booktabs,longtable,array,float}
\usepackage[font=small,labelfont=bf,labelsep=period]{caption}
\usepackage[algo2e,ruled]{algorithm2e}
\SetAlFnt{\small}
\SetAlCapFnt{\small}
\SetAlCapNameFnt{\small\bfseries}

\usepackage[authoryear,round]{natbib}

\usepackage{xurl}
\definecolor{linkblue}{RGB}{30,65,115}
\usepackage[colorlinks=true,linkcolor=linkblue,citecolor=linkblue,
            urlcolor=linkblue]{hyperref}
\hypersetup{pdftitle={Average-Reward Policy Identification from a Single Continuing Trajectory},
            pdfauthor={Tongxi Wang}}

\title{Average-Reward Policy Identification\\from a Single Continuing Trajectory}
\author{Tongxi Wang\\[3pt]
  {\small Southeast University}\\[2pt]
  {\small\href{mailto:tongxi_wang@seu.edu.cn}{\texttt{tongxi\_wang@seu.edu.cn}}}}
\date{}

\begin{document}
\maketitle
\begin{abstract}
In reinforcement learning, an agent must explore an unknown environment to learn how to make good decisions. Unlike methods that can freely sample any state--action pair, an agent learning from a single trajectory must spend additional time navigating between states. It is therefore natural to expect that learning from a single trajectory requires more experience.
In this paper, we show that this additional cost is not always necessary. For average-reward reinforcement learning, we develop an algorithm that learns where to collect information while ensuring that the required samples can be obtained efficiently. We establish an upper bound on the number of interactions needed to identify a near-optimal policy. Furthermore, we prove a matching information theoretic lower bound, showing that, for a broad class of environments and at sufficiently high accuracy, learning from a single trajectory is no harder in the worst case than learning with unrestricted access to state action samples.
Our results reveal that the additional complexity cost of navigation can be avoided by adapting information gathering requirements to how the environment provides information.
\end{abstract}
\par\smallskip\noindent{\small\textbf{Keywords:}
reinforcement learning, policy identification, sample
complexity, adaptive sampling\par}

\section{Introduction}
Reinforcement learning uses experience to learn effective decisions in
an unknown environment. In tasks that operate continuously, performance
depends on the rewards accumulated over the long run. Average reward
captures this objective.
Policy identification asks how much experience is needed before the learner
can recommend a policy with a specified performance guarantee~\citep{tdk,zc}.
The amount of experience required depends on both statistical estimation
and data collection. Actions determine both observations and future sampling.

A generative model allows queries to any state--action pair, providing a
statistical benchmark that separates estimation from navigation. On one
continuing trajectory, obtaining the same useful samples may require many
intervening steps. Can a learner attain that benchmark while counting every
exploration and navigation step? We study this question with a fixed known
reward table. Even with immediate rewards known, the learner must estimate how
transitions determine future state visits and accumulated rewards.

\citet[Section~7]{tdk} pose this question for the $SAD/\epsilon^2$ rate.
The precision term of Online-DFE is $\widetilde O(SAD^2/\epsilon^2)$;
directly combining a generative planner~\citep{zc} and sample
collector~\citep{gosprl} also retains this extra diameter factor in its
diameter-based upper bound (Appendix~\ref{app:comparison}). Attaining the generative rate remains open.

We establish matching worst-case rates over a model class for continuing and
generative access under the conditions of Corollary~\ref{cor:access-matching}.
Both models have complexity $\widetilde\Theta(SA\overline D/\epsilon^2)$
for two-successor classes throughout $0<\epsilon\le1/32$, and
for unrestricted support at sufficiently high accuracy. The continuing
learner requires no structural parameters and counts every interaction,
including exploration and unsuccessful attempts. Its output is a
deterministic stationary policy, with stopping and all-state quality
guaranteed jointly. The lower-bound kernels share one known reward.
The refined upper bound also adapts to support and optimal bias span.
In these regimes, restricting where samples can be obtained does not
increase the worst-case complexity order: learning the sampling allocation
allows exploration and collection to fit within the statistical budget.

\paragraph{Our approach.}
Uniform requests can provide statistical coverage while requiring frequent
visits to states that the dynamics rarely reach. A generic collector
pays for this mismatch through a diameter factor. Two mathematical connections yield coverage and low collection cost.

First, inverse-count exploration produces an empirical allocation with
statistical coverage. A time-uniform trajectory flow bound shows that
the same visit counts can be covered by a balanced flow of comparable
mass, up to an additive cost. This holds for any nonanticipating
controller, so it applies directly to the adaptive exploration history.
The learned requests can then be collected at constant flow cost per
unit mass once startup is paid.

Second, nonuniform estimation converts these requests into an all-state
policy guarantee. A negative term from value truncation offsets the
error due to transient flow imbalance. The statistical error therefore
scales as $M^{-1/2}$ and the transient correction as $M^{-1}$, where $M$
is the requested mass. Combining this separation with low collection
cost gives the precision terms $SA(B_*+1)/\epsilon^2+SAD/\epsilon$.
An observable width certificate and a search over budgets complete the
learner, with all attempts counted and polynomial bit work on the same
success event.

\paragraph{Related work.}
Generative identification permits independent row sampling and
span-agnostic stopping~\citep{zc}; Online-DFE~\citep{tdk} pays for
continuing interaction. Behavior-policy methods such as
SAVIC+~\citep{lee} obtain prior-free all-state identification under
coverage parameters of the behavior policy, which are distinct from
controlled diameter. Discounted identification, offline learning and
regret impose different output or information conditions~\citep{lrp,fqi,focus26}.

Navigation-constrained allocation has direct precedents.
\citet{navigation} combine information requirements with stationary-flow
constraints and track a learned allocation to identify the unique optimal
policy in a discounted MDP. Their principal sample-complexity guarantee
is instance-dependent and asymptotic as $\delta\to0$.
Here the objective is fixed-accuracy average-reward identification, with
a nonasymptotic joint stopping-and-quality guarantee. Our finite-time
trajectory flow and controller bounds make allocation learning and
collection affordable without structural inputs, leading to the stated
classwise matching result. Optimistic collection~\citep{gosprl} already
makes prescribed requests executable; our analysis also pays for learning
which requests to make. The statistic
uses quantile planning and monotonicity~\citep{zzc}, span-agnostic
stopping~\citep{zc}, and source-state decoupling~\citep{zcplugin}.
Our analysis makes these tools effective with nonuniform weights fixed
after a pilot, and bounds the interactions needed to obtain those samples. Table~\ref{tab:comparison} and Appendix~\ref{app:comparison} compare the upper bounds.

Sections~\ref{sec:setting}--\ref{sec:theory} define the problem and state the results.
Sections~\ref{sec:algorithm}--\ref{sec:lower-overview} give the algorithm and the
upper- and lower-bound arguments; Section~\ref{sec:scaling} examines a prediction of the statistical analysis.

\section{Setting}\label{sec:setting}
There are $S\ge2$ states and $A\ge1$ actions. A \emph{row} $i=(s,a)$ is
the transition distribution $P_i$ of one state--action pair; write $K=SA$.
The unknown kernel is communicating, with controlled diameter
$D=\max_{s,t}\min_\pi\mathbb E_s^\pi\tau_t<\infty$.
The fixed reward table $r\in[0,1]^K$ and inputs
$\epsilon\in(0,1]$, $\delta\in(0,1)$ are rational and known.
The learner starts at any state and observes one trajectory, without reset
or generative access. It must return a deterministic stationary policy
whose average-reward gain is within $\epsilon$ of the optimal gain from
every initial state.

For a stationary policy $\pi$, define its gain by
\[
 g_P^\pi(s)=\lim_{T\to\infty}\frac1T
 \mathbb E_s^\pi\!\left[\sum_{t=0}^{T-1}r(s_t,a_t)\right].
\]
An optimal Bellman bias satisfies
$g_P^*+h^*(s)=\max_a\{r(s,a)+P_{sa}h^*\}$, with optimal gain
$g_P^*$ independent of the initial state. Writing $\tau$ for the
data-dependent stopping time, the required joint event is
\begin{equation}\label{eq:joint-guarantee}
 \Pr_P\!\left(\tau\le T_*(P,\epsilon,\delta),\quad
 \max_s\{g_P^*-g_P^{\widehat\pi}(s)\}\le\epsilon\right)\ge1-\delta.
\end{equation}
The learner stops using observed data. The bound $T_*$ is used only in analysis.

Let $d_i=|\operatorname{supp}P_i|$,
$J_{\rm supp}=\sum_i d_i$.
Define $B_*=\spn(h^*)$ for an optimal Bellman bias with $B_*\le D$, where
$\spn(h)=\max_s h(s)-\min_s h(s)$.
Write $A_{\rm mix}=D\sum_i\min\{d_i,D\}$ for the startup expression.

For the access-model comparison, a generative learner may adaptively
request independent successors from any row. For either access model,
the complexity of a model class is the smallest common deterministic budget $T$
for which the joint stopping-and-correctness event holds on every
instance in the class. This criterion controls both stopping and policy quality from every state.

\section{Main Results}\label{sec:theory}
\subsection{Upper Bound}
Theorem~\ref{thm:main-total} separates the cost of learning the allocation
from the subsequent precision cost. The support-dependent term is paid at
startup; the accuracy terms depend on the fixed reward through $B_*$.

\begin{theorem}[Policy identification from one trajectory]\label{thm:main-total}
Under the model of Section~\ref{sec:setting}, Algorithm~\ref{alg:main} returns a deterministic stationary
policy with all-state gain gap at most $\epsilon$, jointly with stopping
within
\[
 \widetilde O\!\left(D\sum_i\min\{d_i,D\}
       +\frac{SA(B_*+1)}{\epsilon^2}+\frac{SAD}{\epsilon}\right)
\]
interactions, with probability at least $1-\delta$.
It needs no support, diameter, mixing-time or minimum-transition input.
Its complete rational realization has polynomial bit work on that event.
The full controller, schedule and proof are in
Appendix~\ref{app:rowwise-mixed}.
\end{theorem}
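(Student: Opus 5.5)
The plan is to split the trajectory into three phases and bound each phase's interactions separately on one good event of probability at least $1-\delta$. Phase one is a pilot run by inverse-count exploration. Phase two uses the pilot to fix nonuniform request weights $w_i$ and collects $M w_i$ samples of row $i$ with an optimistic collector. Phase three is a span-agnostic stopping test based on a width certificate, wrapped in a doubling search over the mass $M$. Every attempt of the search is charged to the total. The good event is the intersection of three families of inequalities:
\begin{itemize}
\item time-uniform Bernstein/empirical-Bernstein bounds for every row and every count;
\item the time-uniform trajectory flow bound, which holds for any nonanticipating controller;
\item concentration for the collector's excursion lengths.
\end{itemize}
A union bound over $K$ rows and $O(\log)$ budgets allocates $\delta$.

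\textbf{Step 1 (startup).} Under inverse-count exploration, the time until every row has roughly $\min\{d_i,D\}$ samples should be bounded by $\widetilde O(D\sum_i\min\{d_i,D\})$, since reaching a deficient row costs $O(D)$ in expectation. After that, each row either has its support essentially revealed or already has $D$ samples. The trajectory flow bound then says the empirical visit counts are dominated by a balanced stationary flow of comparable mass, up to an additive $\widetilde O(SAD)$. Consequently, requests proportional to that flow can be collected at $O(1)$ steps per unit mass once startup is paid.

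\textbf{Step 2 (statistical error).} With $n_i\ge Mw_i$ samples, I would run the quantile/monotone planner of \citet{zzc} with the source-state decoupling of \citet{zcplugin} to get a deterministic policy $\widehat\pi$. The target gap bound is
\[
 \max_s\{g^*-g^{\widehat\pi}(s)\}\lesssim \sqrt{\frac{SA(B_*+1)\log}{M}}+\frac{SAD\log}{M}.
\]
The first term comes from a variance-weighted sum $\sum_i \mu_i\sqrt{\Var_{P_i}(h^*)/n_i}$, combined with Cauchy–Schwarz and the total-variance identity $\sum_i\mu_i\Var_{P_i}(h^*)\le O(B_*+1)$. The second term is the transient flow imbalance. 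The truncation of the value at span $B_*$ yields a negative term that cancels the cross term, so the imbalance enters only at order $M^{-1}$. Setting the bound to $\epsilon$ gives $M\approx SA(B_*+1)/\epsilon^2+SAD/\epsilon$.

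\textbf{Step 3 (stopping and adaptivity).} The observable width certificate upper-bounds the gap on the good event. Once $M$ exceeds the value above, the certificate falls below $\epsilon$, so the doubling search stops within a constant factor of that $M$. The total cost is a geometric sum, and no $D$, $B_*$ or support input is ever used. Polynomial bit work follows because every step is a finite LP or value iteration on rational data with truncated precision.

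The main obstacle is Step 2's cancellation: showing that the imbalance between the requested weights and the collected flow contributes only $SAD/M$ rather than $\sqrt{SAD^2/M}$. My first attempt would use a telescoping argument along the flow with the truncated bias. The plan is to bound the imbalance by $D\cdot\|\text{flow defect}\|_1$ and absorb the square-root interaction into the negative truncation term via $2ab\le a^2+b^2$.
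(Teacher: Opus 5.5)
Your outline follows the paper's architecture: inverse-count pilot, frozen weights, quota collection, width certificate, search. However, the two properties that make the pilot useful are asserted rather than proved.

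\textbf{Coverage is never established.} The trajectory flow bound controls only the collection cost $\mathcal C_P(w)$. Your Step~2 silently replaces the coverage coefficient $G_P(w)=\max_{x\in\Omega(P)}\sum_i x_i/w_i$ by $SA$; Cauchy--Schwarz only gives $\sum_i\mu_i/n_i\le G_P(w)/M$. Nothing in Step~1 bounds $G_P(w)$, and cheap-to-collect weights (e.g.\ one policy's occupation plus a tiny uniform component) can cover other policies' flows arbitrarily badly. The paper gets $G_P(w)\le32K(1+\ln(n+1))$ from three facts: the controller's invocation inequality $t\le8\sum_{u<t}b_u(i_u)/g_u+a(N,\eta)$; each optimistic gain $g_u$ dominates $\max_{x\in\Omega(P)}x^\top b_n$ for the final inverse-count reward $b_n$; and the harmonic sum $\sum_u b_u(i_u)\le K(1+\ln(n+1))$.

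\textbf{Collection time has no mechanism.} ``Collected at $O(1)$ steps per unit mass'' needs an argument that works with an unknown model. Bounding excursion lengths charges about $D$ per requested sample and reinstates the diameter factor. The paper reuses the same invocation inequality with unfinished-quota rewards. Their optimal gain is at least (remaining units)$/\mathcal C_P(q)$, with $\mathcal C_P(q)\le3M/2+K(D+1)$, so productive visits cost only a harmonic sum. The startup $D\sum_i\min\{d_i,D\}$ is the additive term $a(N,\eta)$, produced by using rowwise the smaller of a sparse-support and a scalar-window optimistic prediction. Your claim that ``reaching a deficient row costs $O(D)$'' presupposes a navigation policy the learner does not have.

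\textbf{The search is incomplete.} The sufficient pilot length depends on the unknown $D$ and $d_i$. After a too-short pilot, the near-uniform requests can cost order $D$ interactions per unit mass. So the schedule must also search over pilot lengths and cap every attempt, rather than double $M$ after one pilot; the paper enumerates pairs $(p,m)$ under cap $2^j$.

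\textbf{The transient term is attributed to the wrong imbalance.} The certificate uses exactly the prescribed prefixes $n_i=\lceil Mw_i\rceil$, so there is no mismatch between requests and collected samples. The non-stationary object is the comparator's stopped discounted occupation $d$, started at a minimizer of the capped pessimistic value and stopped on reaching the cap. It satisfies $dP-d_S=\lambda(\xi-\delta_{s_0})$, so $d^\top t$ with $t_i=\alpha/n_i$ is not bounded by $\kappa=\alpha G_P(w)/M$. The paper bounds it by $\kappa+\lambda c_z$ using a scaled Bellman bias for the auxiliary reward $1/w_i$, with $c_z=\alpha C_0/M$ and $C_0\le DG_P(w)$. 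It then absorbs the resulting $u\sqrt{Hc_z\lambda}$ into the retained term $-\Delta\lambda/2$.

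\textbf{The truncation level is wrong.} The retained term needs slack $\Delta=H-\spn(U_\gamma)\ge H/2$, hence a virtual cap $H\ge\max\{1,4B_*\}$. Capping at span $B_*$ guarantees no slack, because the comparator's discounted value is only known to have span at most $2B_*$. Since $B_*$ is unknown, concentration at the fitted fixed points must also hold simultaneously for all dyadic virtual caps, via source-state decoupling.

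\textbf{The upper endpoint is missing.} Your Step~2 bounds the policy gap (at best $g^*-L$), whereas $U-L\le\epsilon$ also needs $U-g^*$ small. The quantity $U-g^*$ contains $(1-\gamma)\spn(V^+_\gamma)$. The paper controls it only through a self-consistency bootstrap showing that the optimistic fixed point has span at most $4(B_*+1)$ at a grid discount with $1-\gamma\asymp\epsilon/(B_*+1)$.
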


The three terms pay for exploration and collection startup, statistical
precision, and the transient correction in this upper bound, respectively.
Since $B_*\le D$ and $\epsilon\le1$, Theorem~\ref{thm:main-total} implies
the diameter-dependent bound
\begin{equation}\label{eq:diameter-bound}
 \widetilde O(A_{\rm mix}+SAD/\epsilon^2).
\end{equation}
A weaker startup bound using total support follows from
$\sum_i\min\{d_i,D\}\le\min\{J_{\rm supp},SAD\}$; its expanded form is
recorded in Appendix~\ref{app:preliminaries}.

The guarantee counts pilots, navigation, surplus observations and every
unsuccessful attempt. Suppressed logarithms may depend on
$S,A,D,1/\epsilon,1/\delta$. Polynomial bit work holds on the same success
event; no expected stopping-time bound on failure histories is asserted.

Relative to $SAD/\epsilon^2$, the precision term has factor
$(B_*+1)/D+\epsilon$: small optimal span helps at high accuracy, while
startup and the $1/\epsilon$ transport term still depend on diameter.

\subsection{Matching Rates}\label{sec:matching}
The next consequence identifies when the continuing upper bound matches
hardness even with arbitrary row queries. Section~\ref{sec:setting} defines class complexity through a common
budget and joint guarantee.

\begin{corollary}[Matching continuing and generative access]\label{cor:access-matching}
Let $S\ge12$, $A\ge3$, $0<\epsilon\le1/32$, $0<\delta\le1/16$, and
\[
 \overline D\ge\max\{32,8(\lceil\log_{A-1}\lfloor S/3\rfloor\rceil+1)\}.
\]
Consider the class of communicating kernels with at most $S$ states,
at most $A$ actions per state, diameter at most $\overline D$, and total
support at most $J_0\in[2SA,S^2A]$, together with fixed known rational reward tables
in $[0,1]$. Require a deterministic stationary output policy that is
$\epsilon$-optimal from every state, jointly with stopping within a common
budget with probability at least $1-\delta$ on every instance.
If
\[
 \epsilon^2\min\{J_0/(SA),\overline D\}\le1,
\]
both continuing and generative access have class complexity
$\widetilde\Theta(SA\overline D/\epsilon^2)$.
The upper bound holds for every known reward table; the lower-bound
family shares one table across all kernels.
\end{corollary}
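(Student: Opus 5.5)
The corollary has two halves: an upper bound for continuing access, which also covers generative access, and a lower bound for generative access, which also covers continuing access. Since continuing access is a special case of generative access, a generative learner can simulate any continuing learner by requesting successors of the row it currently occupies. So it suffices to prove (i) the continuing upper bound $\widetilde O(SA\overline D/\epsilon^2)$ uniformly over the class, and (ii) a generative lower bound $\Omega(SA\overline D/\epsilon^2)$, up to logarithms, over the class with one shared reward table.

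\textbf{Upper bound.} I would instantiate Theorem~\ref{thm:main-total} on each instance of the class. Because $B_*\le D\le\overline D$ and $\epsilon\le1$, the precision terms obey
\[
\frac{SA(B_*+1)}{\epsilon^2}+\frac{SAD}{\epsilon}\le\frac{3SA\overline D}{\epsilon^2}.
\]
The startup term satisfies
\[
D\sum_i\min\{d_i,D\}\le\overline D\min\{J_0,SA\overline D\}=SA\overline D\,\min\{J_0/(SA),\overline D\}.
\]
The hypothesis $\epsilon^2\min\{J_0/(SA),\overline D\}\le1$ then bounds this by $SA\overline D/\epsilon^2$. The algorithm takes no structural inputs, so one algorithm works for the whole class. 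The logarithms depend only on $S,A,\overline D,1/\epsilon,1/\delta$, so taking $T$ to be the resulting bound at $D=\overline D$ gives a common deterministic budget.

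\textbf{Lower bound.} I would build a hard family in the style of \citet{zc}'s generative average-reward lower bounds, adapted to total support at most $2SA\le J_0$, i.e., two successors per row.
\begin{itemize}
\item Use about $S/3$ "gadget" states, each with $A-1$ actions leading with high probability to a reward-$1$ state and a reward-$0$ state.
\item Give each gadget state a holding structure so that a perturbation of the transition probability by $\Theta(\epsilon/\overline D)$ around a base value $\Theta(1/\overline D)$ changes the gain by $2\epsilon$.
\item Use the remaining actions to form an $(A-1)$-ary tree of depth $\lceil\log_{A-1}\lfloor S/3\rfloor\rceil$ connecting the states. This keeps the diameter at most $\overline D$, which is where the hypothesis $\overline D\ge 8(\lceil\log_{A-1}\lfloor S/3\rfloor\rceil+1)$ and $\overline D\ge32$ enter.
\end{itemize}
The reward table is fixed and puts rewards only on designated absorbing-like states, so only the kernels vary. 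Each perturbed row has Bernoulli KL divergence $\Theta(\epsilon^2/\overline D)$ per sample. A standard change-of-measure (Fano/Le Cam) argument over $\Theta(SA)$ independent perturbable rows then shows that an $\epsilon$-optimal all-state policy must identify the best action in a constant fraction of gadgets. With $\delta\le1/16$, this forces $\Omega(\overline D/\epsilon^2)$ samples per row, hence $\Omega(SA\overline D/\epsilon^2)$ in total. The condition $\epsilon\le1/32$ keeps the perturbed probabilities valid.

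\textbf{Main obstacle.} I expect the hardest step to be the diameter control in this construction, while holding the support to two successors, keeping the reward shared, and keeping every gadget's gain sensitivity at $\epsilon$ from all states. The $\Theta(1/\overline D)$ holding probabilities and the tree navigation must together keep the diameter at most $\overline D$ without diluting the gain gap. I would first try the two-state "slow leak" gadget with the tree depth budgeted at $\overline D/8$, and verify the diameter by explicit hitting-time computation.
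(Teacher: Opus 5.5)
Your upper-bound half matches the paper. You apply Theorem~\ref{thm:main-total}, bound the precision terms by $O(SA\overline D/\epsilon^2)$ using $B_*\le D\le\overline D$ and $\epsilon\le1$, bound the startup by $\overline D\min\{J_0,SA\overline D\}$, and absorb it with the matching condition. The simulation argument transferring the generative lower bound to continuing access is also the paper's.

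The lower-bound half has a genuine gap in the reduction. You claim that an all-state $\epsilon$-optimal policy ``must identify the best action in a constant fraction of gadgets,'' via Fano/Assouad over independently perturbable rows. Under the average-reward criterion this is false. The gain from every state, including transient ones, is the gain of the recurrent class the policy routes to, so one good gadget suffices. Concretely, suppose each gadget independently carries one improved candidate action. A learner can ignore all gadgets but one, test that gadget's $A-1$ candidate rows with $O(\overline D\epsilon^{-2}\log(A/\delta))$ samples each, and route every state there. This is all-state $\epsilon$-optimal after roughly $A\overline D/\epsilon^2$ samples. So no product-type family can produce the factor $SA$.

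The missing idea is a needle-in-a-haystack family with one common base model. In the paper, the base kernel has $p_{jl}=q=8/\overline D$ everywhere, so every recurrent class has gain at most $1/2$. Alternative $(j,l)$ changes only row $(x_j,l)$, to $q(1-8\epsilon)$. This raises that pair's gain to $1/(2-8\epsilon)>1/2+2\epsilon$.

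All-state $\epsilon$-optimality in alternative $(j,l)$ then forces a global output event $E_{jl}$: the output's sole recurrent class is $\{x_j,y_j\}$ and it plays action $l$ at $x_j$. These $m(A-1)$ events are defined identically under every kernel and are pairwise disjoint. Hence at least $m(A-1)-2$ of them have base probability at most $1/2$.

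For each such pair, truncate the learner at the common budget $T$ and use the adaptive chain rule $\mathrm{KL}(\mathbb P_0^T\Vert\mathbb P_{jl}^T)=\mathbb E_0N_{jl}(T)\,\mathrm{kl}(q,q(1-8\epsilon))$. Binary data processing gives $\mathbb E_0N_{jl}(T)\cdot 2048\epsilon^2/\overline D\ge\frac14\log(1/\delta)$; this is where $\delta\le1/16$ enters. Since all counts are taken under the single base law, $\sum_{jl}\mathbb E_0N_{jl}(T)\le T$, and summing yields $T\ge[m(A-1)-2]\overline D\log(1/\delta)/(8192\epsilon^2)$. Disjointness of the output events, not per-gadget correctness, is what forces $\Omega(\overline D/\epsilon^2)$ samples on almost every row.
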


The matching condition absorbs startup: its ratio to
$SA\overline D/\epsilon^2$ is
$\epsilon^2\min\{J_0/(SA),\overline D\}\le1$.
Let $T^\star_{\rm gen}$
and $T^\star_{\rm cont}$ denote the complexities for the same class,
accuracy, confidence and joint stopping-and-output requirement in
Corollary~\ref{cor:access-matching}. For an absolute $c>0$, the comparison is
\[
 \underbrace{c\frac{SA\overline D}{\epsilon^2}\log\frac1\delta}
 _{\text{generative lower bound}}
 \;\le\;T^\star_{\rm gen}\;\le\;T^\star_{\rm cont}
 \;\le\;\underbrace{\widetilde O\!\left(\frac{SA\overline D}{\epsilon^2}\right)}
 _{\text{continuing upper bound}}.
\]
The middle inequality holds because generative access can simulate the
continuing learner. Theorem~\ref{thm:lower} gives the first inequality
with at most two successors per row; Theorem~\ref{thm:main-total} gives
the last. The resulting ranges, under the other conditions of
Corollary~\ref{cor:access-matching}, are:
\begin{center}
\begin{tabular}{@{}ll@{}}
\toprule Model class & Established matching range\\\midrule
At most two successors per row & $0<\epsilon\le1/32$\\
Unrestricted support & $\epsilon^2\min\{S,\overline D\}\le1$\\
\bottomrule
\end{tabular}
\end{center}
For unrestricted support outside this condition, access-model matching
remains open. The lower bound is classwise and uses a shared reward table;
Appendix~\ref{app:matching-summary} gives the complete comparison.

\subsection{Comparison with Prior Bounds}
Table~\ref{tab:comparison} separates startup from precision costs.
Substituting $B_*\le D$, our leading precision term is
$\widetilde O(SAD/\epsilon^2)$, compared with
$\widetilde O(SAD^2/\epsilon^2)$ in the other displayed bounds.
This improvement concerns the precision term of these upper bounds;
startup and span refinements are retained.
\begin{table}[H]\centering
\caption{Continuing policy-identification upper bounds, suppressing logarithms. $B_*=\spn(h^*)\le D$. The second row is a derived composition, derived in Appendix~\ref{app:comparison}.}\label{tab:comparison}
\small\setlength{\tabcolsep}{4pt}
\begin{tabular}{@{}>{\raggedright\arraybackslash}p{.24\linewidth}p{.28\linewidth}p{.41\linewidth}@{}}
\toprule Method & Startup & Precision\\\midrule
Online-DFE~\citep{tdk} & $S^2AD^3$ & $SAD^2/\epsilon^2$\\\addlinespace
Generative + GOSPRL\newline\citep{zc,gosprl} & $S^2AD^{3/2}$ & $SAD(B_*+1)/\epsilon^2$\\\addlinespace
Theorem~\ref{thm:main-total} & $D\sum_i\min\{d_i,D\}$ & $SA(B_*+1)/\epsilon^2$\newline $\quad{}+SAD/\epsilon$\\
\bottomrule\end{tabular}\end{table}

\section{Algorithm}\label{sec:algorithm}
Each attempt explores and freezes $w$, collects fresh prefixes of lengths
$q_i=\lceil Mw_i\rceil$, and computes $[L,U]$; width at most $\epsilon$
ends the search. Certificate validity holds for every positive $w$.
Pilot quality determines the required mass and collection cost.
Algorithm~\ref{alg:main} searches over pilot lengths and masses, clearing
memory between attempts while the physical trajectory continues.

\subsection{Sampling Allocation}\label{sec:allocation}
\begin{example}[Equal coverage, different collection cost]\label{ex:two-state}
With two identical actions per state, the chain below admits two
allocations with $G_P=4$ and flow costs $(1+p)/(2p)$ and $1$, for
$0<p\le1/2$.
\begin{center}
\includegraphics[width=\linewidth]{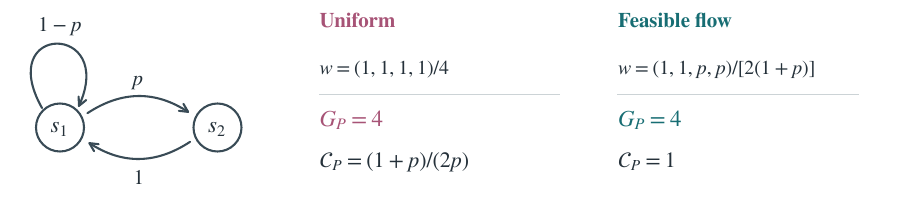}
\end{center}
Uniform requests assign half the samples to state 2, whose stationary
frequency is $p/(1+p)$. Matching this frequency lowers collection cost.
Weights are ordered as $(s_1,a_1),(s_1,a_2),(s_2,a_1),(s_2,a_2)$.
The identical actions isolate allocation geometry; equal $G_P$ alone
leaves policy-identification information unspecified
(Appendix~\ref{app:two-state-check}).
\end{example}

An allocation $w_i>0$, $\sum_iw_i=1$, divides statistical requests among
rows. Its statistical quality depends on inverse sample weights along
all feasible long-run occupations. Its collection cost also depends on
navigation: supplying a difficult row may require many visits elsewhere.
Let $\Omega(P)$ denote normalized stationary state--action flows. These
two requirements lead to
\begin{align}
 G_P(w)&=\max_{x\in\Omega(P)}\sum_i\frac{x_i}{w_i},\label{eq:coverage-quantity}\\
 \mathcal C_P(w)&=\min_{y\ge w}\left\{\sum_i y_i:
       \sum_a y_{sa}=\sum_i y_iP_i(s)\ \text{for every }s\right\}.\label{eq:collection-quantity}
\end{align}
The coverage coefficient $G_P(w)$ measures the first requirement.
The ideal collection cost $\mathcal C_P(w)$ is the minimum balanced mass
that supplies every requested proportion. For quotas $q_i=\lceil Mw_i\rceil$,
these definitions give
\[
 \max_{x\in\Omega(P)}\sum_i\frac{x_i}{q_i}\le\frac{G_P(w)}{M},
 \qquad \mathcal C_P(Mw)=M\mathcal C_P(w).
\]
The controller analysis in Section~\ref{sec:proof-overview} converts this
balanced-flow cost into a bound on finite random collection time,
including transient visits. Both quantities describe the unknown kernel
and guide the analysis; the learner estimates its allocation from visits.

A known kernel suggests how to meet both requirements. Choose
\[
 w^\star\in\arg\max_{w\in\Omega(P)}\sum_i\log w_i.
\]
Communication ensures a positive feasible flow, so a maximizer is positive.
First-order optimality gives
\[
 \sum_i\frac{x_i-w_i^\star}{w_i^\star}\le0
 \quad\text{for every }x\in\Omega(P),
 \qquad G_P(w^\star)=SA,\quad\mathcal C_P(w^\star)=1.
\]
The inequality gives coverage $SA$, with equality at $x=w^\star$;
feasibility gives unit collection cost. This known-model design
(Appendix~\ref{app:static-design}) motivates learning both properties
to within constant or logarithmic factors from finite interaction.

The next controller learns coverage of order $SA$ and bounded collection
cost from visits. Statistical estimation then determines the mass to request.

\subsection{Exploration and Collection}
The same controller serves two purposes: inverse-count rewards encourage
exploration during the pilot, and indicators of unfinished quotas guide
collection. If $c_i$ counts pilot visits, the reward $b_t(i)=1/(c_i+1)$
is the derivative of $\sum_i\log(c_i+1)$: another visit is more valuable
at a less visited row. After $n$ interactions the pilot freezes
$w_i=(c_i+1)/(n+K)$; write $w_{\min}=\min_iw_i$.
The smoothed weights are positive, and the analysis establishes both
coverage and low collection cost for these empirical proportions.

For each row, the controller forms two upper predictions of the expected
next-state value. The first maximizes over a confidence region for the
whole transition distribution and benefits from sparse support. The
second uses recent scalar evaluations at value vectors fixed before
each transition, giving an error bound independent of support size.
On their common confidence event, the smaller prediction remains
optimistic. Using it in each rounded Bellman backup lets different rows
benefit from different bounds, yielding the startup term
$D\sum_i\min\{d_i,D\}$. This controller builds on optimistic sample
collection~\citep{gosprl}; its analysis handles the decreasing auxiliary
rewards used in both phases. Appendix~\ref{app:mixed-spec} specifies these updates and restarts.

After the pilot, a fresh controller collects $q_i=\lceil Mw_i\rceil$
successors per row using reward $\mathbf1\{k_i<q_i\}$, where $k_i$ is
its visit count. The nominal mass $M$ differs from $N=\sum_iq_i$.
All visits update navigation information; the statistic uses the first
$q_i$ post-pilot successors per row. Within an invocation, restarts
retain visits and nested confidence regions but clear scalar windows.
Between attempts all algorithm memory is cleared and the physical
trajectory continues. Each attempt costs at most $2U_j$, with confidence
allowances specified in Appendix~\ref{app:outer-schedule}.

\begin{algorithm2e}[H]
\caption{Continuing policy identification}\label{alg:main}
\DontPrintSemicolon
\KwIn{States, actions, known rational $r$, accuracy $\epsilon$, confidence $\delta$; one continuing trajectory.}
\KwOut{Deterministic stationary policy $\widehat\pi$.}
\For{stage $j=0,1,\ldots$}{
 Set cap $U_j=2^j$ and each event allowance $\eta_j=\delta/[64(j+1)^4]$\;
 \For{$(p,m)\in\{0,\ldots,j\}^2$, lexicographically}{
  Clear algorithm memory at the current physical state; run the $2^p$-step pilot and freeze $w_i=(c_i+1)/(2^p+K)$\;
  Set $M=2^m$; \textbf{if} $M<2/w_{\min}$, \textbf{continue} to the next attempt\;
  Set $q_i=\lceil Mw_i\rceil$; start a fresh zero-count quota controller and collect these prefixes for at most $U_j$ interactions\;
  \textbf{if} quotas are incomplete, \textbf{continue} to the next attempt\;
  Compute residual-corrected $[L,U]$ and $\widehat\pi$ from these prefixes\;
  \textbf{if} $U-L\le\epsilon$, \textbf{return} $\widehat\pi$\;
 }
}
\end{algorithm2e}

\subsection{Stopping Rule}
Since sufficient pilot and sample sizes are unknown, stopping requires
both a lower bound on the output policy and an upper bound on optimal gain:
\[
 L\le g_P^{\widehat\pi}(s)\quad\text{for every }s,\qquad g_P^*\le U.
\]
The learner accepts when $U-L\le\epsilon$. On the common confidence
event, the bounds hold for any positive frozen allocation; its geometry
determines how quickly the width shrinks.
For each row, the quantile functional $F_i^-(v)$ truncates the upper tail
of $v$ under the fixed-prefix empirical distribution and subtracts an
empirical-variance penalty. Reflection gives $F_i^+(v)=-F_i^-(-v)$.
For each chosen discount, solve
\[
 V_\gamma^\pm(s)=\max_a\{r_{sa}+\gamma F_{sa}^\pm(V_\gamma^\pm)\}.
\]
The fixed-point concentration argument bounds the true expectation
$P_iv$ between these predictions at the value vectors used in this
analysis. The lower solve uses $\gamma_-=1-1/(M+1)$ and its greedy policy
$\widehat\pi$. At its exact fixed point, $F^-\le P$ implies
\[
\begin{aligned}
 r_{\widehat\pi}+P_{\widehat\pi}V^- -V^-
 &=P_{\widehat\pi}V^- -\gamma_-F^-_{\widehat\pi}(V^-)\\
 &\ge(1-\gamma_-)P_{\widehat\pi}V^-\\
 &\ge(1-\gamma_-)\min_sV^-(s)\,\mathbf1.
\end{aligned}
\]
Summing conditional expectations along a trajectory telescopes the
value terms. Their bounded endpoint difference vanishes after division
by time, giving the all-state lower bound
$L=(1-\gamma_-)\min_sV^-(s)$.
The upper prediction gives the reverse inequality for every action;
a finite dyadic discount search yields
$U=\min_\gamma(1-\gamma)\max_sV^+_\gamma(s)$.
Both endpoints are clipped to $[0,1]$. Algorithm~\ref{alg:main} tests the
width after each completed batch, with the numerical residual corrections
and finite fallback in Appendix~\ref{app:nonuniform-statistic}.

\section{Proof Overview}\label{sec:proof-overview}
Two connections establish the interaction bound. The first turns an
adaptive exploration history into requests with low balanced-flow cost.
The second converts their nonuniform sample counts into an all-state
certificate, with a linear transient correction. We state their
consequences first, then explain the arguments in
Sections~\ref{sec:trajectory-flow} and~\ref{sec:nonuniform-estimation}.

\subsection{Interaction Bound}
\begin{lemma}[Coverage and collection cost of the pilot allocation]
\label{lem:main-pilot}
For a fixed inverse-count invocation, on the joint controller and
trajectory flow-balance event specified in Appendix~\ref{app:rowwise-mixed},
a pilot of length
$n=\widetilde O(A_{\rm mix})$ suffices to give
\[
 G_P(w)\le32K(1+\ln(n+1)),\qquad
 \mathcal C_P(w)\le\tfrac32,\qquad
 w_{\min}^{-1}\le n+K=\widetilde O(A_{\rm mix}).
\]
The sufficient length is an analysis quantity unknown to the learner.
\end{lemma}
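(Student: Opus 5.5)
The plan is to treat the pilot as an online approximate maximizer of the concave potential $\Phi(c)=\sum_i\log(c_i+1)$ over trajectory visit counts. Then the three claims follow from (i) a regret bound for the optimistic controller against the best stationary flow, (ii) the first-order optimality argument of Section~\ref{sec:allocation} applied to the empirical proportions, and (iii) the trajectory flow-balance bound turning near-balanced visit counts into a balanced flow of comparable mass.

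\textbf{Step 1 (controller regret).} On the joint controller event, the optimistic controller with rewards $b_t(i)=1/(c_i+1)$ is compared against any stationary flow $x\in\Omega(P)$. Each backup uses the smaller of the two optimistic predictions. Summed over $n$ steps, the per-row bonus terms should give an additive regret $\widetilde O(A_{\rm mix})$ up to $\log$ factors from the decreasing rewards:
\[
 \sum_{t<n} b_t(i_t)\ \ge\ n\sum_i x_i\, \bar b(i)-\widetilde O(A_{\rm mix}).
\]
Here $\bar b$ denotes a suitably averaged reward. Since the rewards decrease, $\sum_t b_t(i_t)\le \sum_i \ln(c_i+1)+K$ telescopes. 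Conversely, $\sum_i x_i/(c_i+1)$ at time $n$ lower bounds the averaged $\bar b$ after a doubling or peeling argument over epochs, which costs the factor $1+\ln(n+1)$.

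\textbf{Step 2 (coverage).} Combining the two sides gives
\[
 \max_x \sum_i \frac{x_i n}{c_i+1}\ \lesssim\ K(1+\ln(n+1))+\widetilde O(A_{\rm mix})/\min_i(\cdot).
\]
The additive term is absorbed once $n\ge C\,A_{\rm mix}\log(\cdot)$. Rewriting with $w_i=(c_i+1)/(n+K)$ then yields $G_P(w)\le 32K(1+\ln(n+1))$, where $32$ collects the constants. The bound $w_{\min}^{-1}\le n+K$ is immediate from $c_i\ge0$.

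\textbf{Step 3 (collection cost).} The time-uniform flow bound says any nonanticipating history has counts $c$ with $\bigl|\sum_a c_{sa}-\sum_i c_iP_i(s)\bigr|$ controlled up to an additive $\widetilde O(D\cdot\text{stuff})$ martingale and boundary term. Completing the defect by a correction flow routed through shortest-diameter paths gives a balanced $y\ge c+\mathbf1$ with $\sum_i y_i\le n+K+\widetilde O(A_{\rm mix})$. Dividing by $n+K$ and taking $n$ a large enough multiple of $A_{\rm mix}$ gives $\mathcal C_P(w)\le 3/2$.

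\textbf{Main obstacle.} I expect the hardest part to be Step 1: showing that the rowwise mixed confidence bounds give regret scaling with $D\sum_i\min\{d_i,D\}$ under rewards that shrink over time. The controller restarts interact with the changing rewards. I would first try epoch-freezing, where rewards are held fixed until some count doubles, and bound the number of epochs by $K\log n$.
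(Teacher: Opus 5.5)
Your Step~1--2 regret bound has the wrong scaling, so the absorption claimed in Step~2 does not follow. The quantity you must bound is $n\beta$ with $\beta=\max_{x\in\Omega(P)}\sum_i x_i/(c_i(n)+1)$. The lemma asserts this is only of order $K\log n$, the same size as the total reward collected. A regret that is additive in reward units, as you display it, therefore gives only $G_P(w)=(n+K)\beta\lesssim K(1+\ln(n+1))+\widetilde O(A_{\rm mix})$. Since $A_{\rm mix}\ge KD$, this reinstates the diameter factor. Lengthening the pilot cannot absorb it, because neither side depends on $n$ beyond logarithms.

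What is needed is a scale-invariant guarantee whose additive term is measured in \emph{time}. The paper's controller runs at an optimistic gain scale $g$ with rewards rescaled by $1/g$, restarting a gain epoch when $\max V/B$ falls below $g/2$. On its event every prefix satisfies $t\le8\sum_{u<t}b_u(i_u)/g_u+a(N,\eta)$ (Lemma~\ref{lem:mixed-invocation}), so the additive loss in reward units is multiplied by the gain. Because rewards only decrease and $V$ remains a true supersolution, $g_u\ge g^*(b_u)\ge\beta$. The harmonic sum $\sum_{u<n}b_u(i_u)\le K(1+\ln(n+1))$ then gives $n\le8K(1+\ln(n+1))/\beta+a(n,\eta)$. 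With $n\ge2a(n,\eta)$ and $n\ge K$ this yields $(n+K)\beta\le32K(1+\ln(n+1))$.

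No peeling is needed: monotone counts give $b_u(i)\ge1/(c_i(n)+1)$ pointwise, and the logarithm comes from the harmonic sum. Your proposed freezing over up to $K\log n$ count-doubling epochs would risk paying the startup that many times. The controller needs no freezing, because decreasing rewards preserve the supersolution invariant. Since the lemma is stated on the controller event, you can simply invoke that invocation bound.

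Step~3 also fails as described. The per-state imbalance $\sum_a c_{sa}-\sum_i c_iP_i(s)$ equals a boundary term plus the martingale $\sum_{u<t}(\mathbf 1\{s_{u+1}=s\}-P_{i_u}(s))$. Its conditional variance can be of order the visit count, so it grows like $\sqrt n$ and is not bounded by an $n$-independent $\widetilde O(D\cdot\ldots)$ term. Routing this imbalance at cost up to $D$ per unit yields a correction of order $D\sqrt{SnL}$, which forces $n\gtrsim D^2SL$. That is not $\widetilde O(A_{\rm mix})$ for two-successor kernels with $D\gg A$, where $A_{\rm mix}\le2SAD$. Relatedly, a leading coefficient exactly $1$ on $n$ with an $n$-independent remainder is not attainable in general: $\mathcal C_P(c(t))-t$ can be of order $\sqrt{Dt}$.

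The paper avoids the per-state loss by LP duality. It bounds $\mathcal C_P(c(t))\le t+h(s_0)-h(s_t)+M_t(h)$ for an optimizing potential in $\mathcal F=\{h\in[0,D]^S:P_ih-h(s_i)\le1\}$. The drift constraint and the variance identity bound the accumulated conditional variance of $M_t(h)$ by $O(Dt+D^2L_*)$ rather than the per-state $O(D^2 t)$. Freedman bounds at the at most $\binom{K+2S}{S}$ vertices then extend to all of $\mathcal F$ by linearity, giving $\mathcal C_P(c(t))\le\frac54t+15DL_*$ (Proposition~\ref{rc:prop:vertex-history}). Subadditivity with $\mathcal C_P(\mathbf 1)\le K(D+1)$ and $n\ge60DL_*+4K(D+1)$ then give $\mathcal C_P(w)\le3/2$. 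On the stated flow-balance event you can invoke this proposition directly instead of re-deriving it.
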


\begin{lemma}[Sample requirements and collection cost]\label{lem:main-demand}
Fix a finite pilot history and its positive frozen $w$. For each prescribed
batch $M\ge2/w_{\min}$, the virtual fixed-prefix statistic is sound with
conditional probability at least $1-\eta$. At a sufficiently large prescribed
mass it also accepts on that event, with sufficient mass of order
\[
 M_* =\widetilde O\!\left(
    \frac{G_P(w)(B_*+1)}{\epsilon^2}
    +\frac{DG_P(w)}{\epsilon}+\frac1{w_{\min}}\right).
\]
If $\mathcal C_P(w)\le3/2$, the combined controller collects the
$q_i=\lceil Mw_i\rceil$ successors in
$T_{\rm collect}(M)=\widetilde O(A_{\rm mix}+M)$ interactions on its
confidence event, when $M\ge2/w_{\min}$ and the cap is sufficient.
\end{lemma}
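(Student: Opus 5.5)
The proof has three parts, one per claim of Lemma~\ref{lem:main-demand}: soundness, acceptance at mass $M_*$, and collection time.

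\textbf{Soundness.} Condition on the pilot history, so $w$ and hence the quotas $q_i=\lceil Mw_i\rceil$ are fixed before collection. The first $q_i$ post-pilot successors of row $i$ are then i.i.d.\ draws from $P_i$; this follows from the strong Markov property applied row by row, the usual ``stack of samples'' argument. The statistic is therefore distributed exactly as a generative statistic with nonuniform per-row counts. On this stack, I would use source-state decoupling~\citep{zcplugin} together with an empirical Bernstein bound. For each row and each value vector used in the analysis, this gives $F_i^-(v)\le P_iv\le F_i^+(v)$, with a union bound over the $K$ rows and the dyadic discount grid at total level $\eta$. The telescoping argument of the stopping-rule subsection then yields $L\le g_P^{\widehat\pi}(s)$ for every $s$ and $g_P^*\le U$. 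The condition $M\ge2/w_{\min}$ ensures $q_i\ge2$, so the variance penalties are well defined.

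\textbf{Acceptance.} I bound the width $U-L$ on the same event. By monotonicity of the quantile functionals~\citep{zzc}, each $V^\pm$ lies within $\gamma\sum_t\gamma^t$ of the true discounted optimal value, where the per-step errors are the bonuses $\sqrt{\Var_{P_i}(V)\log/q_i}+\spn(V)\log/q_i$. Summed along the occupation measure $x$ of the relevant policy, the variance terms are controlled by Cauchy--Schwarz:
\[
\sum_i x_i\sqrt{\Var_i/q_i}\le\Big(\sum_i x_i/q_i\Big)^{1/2}\Big(\sum_i x_i\Var_i\Big)^{1/2}.
\]
By the display after~\eqref{eq:coverage-quantity}, the first factor is at most $(G_P(w)/M)^{1/2}$. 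The second factor is the total variance of the bias, which is of order $B_*+1$ after normalizing by $1-\gamma$ with $\gamma$ chosen near $1-\epsilon/B_*$ in the dyadic search. This produces the $M^{-1/2}$ term $\sqrt{G_P(w)(B_*+1)/M}$.

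The $M^{-1}$ terms, $\spn\cdot G_P(w)/M$, are where $D$ enters. The lower solve uses $\gamma_-=1-1/(M+1)$, so $V^-$ can have span of order $D$. Here I would use the negative truncation term from the quantile functional to absorb the transient flow imbalance, since the empirical occupation is not exactly in $\Omega(P)$, giving a correction of order $DG_P(w)/M$. Requiring both terms to be at most $\epsilon/4$, and adding $1/w_{\min}$ for the admissibility condition, gives $M_*$. This step is the main obstacle: $\widehat\pi$'s occupation is only approximately stationary, and showing that the imbalance costs $D/M$ rather than $D/\sqrt M$ requires pairing the truncation term against the bias difference.

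\textbf{Collection.} Let $y$ be a minimizer in~\eqref{eq:collection-quantity} for $Mw$; its mass is at most $\tfrac32M$. The quota-reward controller is optimistic, so I would apply the optimistic-collection regret bound~\citep{gosprl} with the mixed support/scalar confidence widths. The number of interactions is then at most the balanced mass $\tfrac32M$, plus a $K$ rounding term, plus the controller's startup regret. The startup regret is $\widetilde O(D\sum_i\min\{d_i,D\})=\widetilde O(A_{\rm mix})$, accounting for the decreasing indicator rewards through restarts. Together these give $T_{\rm collect}(M)=\widetilde O(A_{\rm mix}+M)$ whenever the cap $U_j$ exceeds this quantity.
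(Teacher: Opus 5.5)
Your soundness sketch essentially follows the paper: virtual row streams fixed by the pilot and defined without conditioning on completion, source-state decoupling, one-sided quantile bounds, and telescoping. The acceptance step, however, has a genuine gap, and it sits exactly where you locate the obstacle. The bound $\sum_i x_i/q_i\le G_P(w)/M$ holds only for $x\in\Omega(P)$. The occupation that controls $L=(1-\gamma_-)\min_sV^-(s)$ is that of a comparator started at a minimizer of $V^-$, at the fixed discount $\gamma_-=1-1/(M+1)$; the dyadic search concerns only the upper endpoint. That occupation can load rows with tiny $w_i$.

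Your proposed fix does not supply the needed term. The negative term does not come from the quantile functional's tail truncation. Moreover, $\spn(V^-)$ has no a priori $O(D)$ or $O(B_*+1)$ bound, since pessimism can push it to order $1/(1-\gamma_-)$. So neither your span-scaled bonuses nor $\sum_ix_i\Var_{P_i}(V^-)$ are controlled. The missing ideas are:
(i) An analysis-only witness $t_i+P_iz\le\kappa+z(s_i)$ for $t_i=\alpha/n_i$, obtained by scaling an optimal Bellman bias $z_0$ of the auxiliary reward $1/w_i$ by $\alpha/M$. This gives $\kappa=\alpha G_P(w)/M$ and $c_z=\spn(z)=\alpha C_0/M\le\alpha DG_P(w)/M$. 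For an occupation with imbalance $dP-d_S=\lambda(\xi-\delta_{s_0})$ it yields $d^\top t\le\kappa+\lambda c_z$, and this is where $D$ enters.
(ii) Comparison with an optimal policy through \emph{virtual} clipped fixed points $V=C_HTV$ with $H\asymp B_*+1$, sharing $\gamma_-$ and $\alpha$ and satisfying $V_H\le V_\infty$. The comparator is started at a minimizer and stopped on first reaching the cap. This gives $d^\top\mathbb V_Ph\le8H+2HE_d$ with no span bound on $V^-$, and it retains the negative term $-ab\Delta/(1-b)$ with $\Delta\ge H/2$.
(iii) The absorption $\sup_{\lambda\ge0}\{u\sqrt{Hc_z\lambda}-\Delta\lambda/2\}=O(c_z)$.

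Because the caps are data-dependent, your concentration event must cover the two-parameter decoupling grid (source output and cap) together with coordinate truncations, not just the computed vectors. The upper endpoint needs a separate step you omit. Its bonuses scale with $\spn(V^+)$ itself, and the paper proves $\spn(V^+)\le4(B_*+1)$ by a self-bounding inequality at a grid discount $a\in[\epsilon/(128b),\epsilon/(64b)]$, $b=B_*+1$. There the imbalance enters only as $(a/\gamma)c_z$.

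The collection step also does not follow from the citation you give. GOSPRL's Theorem~1 charges $\widetilde O(BD+S^2AD^{3/2})$ for $B$ prescribed samples. With $\sum_iq_i\approx M$ it therefore returns $\widetilde O(MD)$, which is exactly the diameter-per-sample cost this lemma removes. Nor does an additive regret bound convert balanced mass into time, because the optimal gain of the unfinished-quota reward shrinks as quotas fill.

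The paper instead uses the normalized invocation bound $t\le8\sum_{u<t}b_u(i_u)/g_u+a(N,\eta)$ of Lemma~\ref{lem:mixed-invocation}, with $a=\widetilde O(A_{\rm mix})$ coming from rowwise use of the support and scalar bounds. The key observation is that when $m$ quota units remain, normalizing a balanced flow dominating $q$ gives $g_u\ge g^*(b_u)\ge m/\mathcal C_P(q)$. Each productive visit lowers $m$ by one, so a harmonic sum gives $\sum_ub_u(i_u)/g_u\le\mathcal C_P(q)H_{\sum_iq_i}$; surplus visits are charged on the left side. Finally, subadditivity and $\mathcal C_P(\mathbf 1)\le K(D+1)$ give $\mathcal C_P(q)\le\frac32M+K(D+1)$. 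Note that the rounding costs $K(D+1)$ in flow mass rather than $K$, which $A_{\rm mix}\ge KD$ absorbs.
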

Here $\eta$ is a local conditional failure allowance. The statistical
statement is defined on virtual row streams even when a collection cap
fails; the collection statement requires a sufficient cap.

To see how the lemmas combine, a sufficient pilot gives
$G_P(w)=\widetilde O(K)$, $\mathcal C_P(w)\le3/2$, and
$1/w_{\min}=\widetilde O(A_{\rm mix})$. Substitute these into the
sufficient mass in Lemma~\ref{lem:main-demand}, then its collection bound:
\[
 n+T_{\rm collect}(M_*)=
 \widetilde O\!\left(A_{\rm mix}
       +\frac{K(B_*+1)}{\epsilon^2}+\frac{KD}{\epsilon}\right).
\]
The enumeration reaches such
an attempt with only logarithmic overhead, while a summable allocation of
conditional failure probabilities keeps the total below $\delta$
(Appendix~\ref{app:outer-schedule}). Certificates remain sound at poor
pilots. Conditioning on each finite entry history preserves the law of
the unused samples, including on paths where collection later fails.

The pilot controls statistical demand through $G_P(w)$, and the
collector converts that demand into interactions. The next two arguments
establish these properties for the same random allocation.

\subsection{Trajectory Flow}\label{sec:trajectory-flow}
The first connection covers realized visit counts by a balanced flow.
\begin{proposition}[Flow cost of an adaptive trajectory]\label{prop:main-flow}
Under the model of Section~\ref{sec:setting}, let $c_i(t)$ count visits
to row $i$ through time $t$. For any nonanticipating controller and
$\eta\in(0,1)$, with probability at least $1-\eta$, simultaneously for
all integers $t\ge0$,
\begin{equation}\label{eq:main-history-flow}
 \mathcal C_P(c(t))\le\tfrac54t+
 O\!\left(D[S\log(e(A+2))+\log(1/\eta)]\right).
\end{equation}
\end{proposition}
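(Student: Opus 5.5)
Since $\mathcal C_P(c(t))$ is a minimum over balanced flows dominating $c(t)$, it suffices to exhibit one such flow $y\ge c(t)$ of total mass at most $\tfrac54t+O(D[S\log(e(A+2))+\log(1/\eta)])$. The natural candidate starts from the realized visit counts $c(t)$ themselves, which are nearly balanced. The inflow to state $s$ by time $t$ is $\sum_i c_i(t)P_i(s)$ in expectation, whereas the realized entries equal the outflow $\sum_a c_{sa}(t)$ up to a $\pm1$ boundary term at the initial and current states. Write
\[
 \xi_s(t)=\sum_{u<t}\bigl(\mathbf 1\{s_{u+1}=s\}-P_{i_u}(s)\bigr)
\]
for the martingale imbalance. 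Then $c(t)$ violates balance at $s$ by at most $|\xi_s(t)|+1$.

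\textbf{Step 1: repair cost.} I will repair the imbalance by adding circulation mass. Each unit of deficit at $s$ is routed from an excess state $s'$ back to $s$ along a policy that reaches $s$ from $s'$ in expected time at most $D$. Concretely, I expect a lemma stating that for any signed vector $e$ with $\sum_s e_s=0$, there is a nonnegative flow $z$ with $c+z$ balanced and $\sum z\le D\sum_s e_s^+$. This follows by adding, for each unit transfer, the discounted or occupancy measure of the hitting-time policy. Its mass equals the expected hitting time, and its flow defect is exactly $\delta_{s'}-\delta_s$. Hence
\[
 \mathcal C_P(c(t))\le t+D\sum_s\bigl(|\xi_s(t)|+1\bigr).
\]
A plain Azuma estimate would give $\sum_s|\xi_s|\lesssim S\sqrt{t\log}$, which carries an extra $S$ and cannot be absorbed into $\tfrac14 t$ with an additive term of the stated form. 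The $\tfrac54$ slack indicates that the argument should instead trade the square root against a fraction of $t$ through AM--GM.

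\textbf{Step 2: a self-normalized time-uniform bound.} I will use Freedman's inequality on the vector martingale, tested against sign patterns $\sigma\in\{\pm1\}^S$; there are $2^S$ of them, which produces the $S\log$ term. The quadratic variation of $\sum_s\sigma_s\xi_s$ is at most $\sum_{u<t}\Var_{P_{i_u}}(\sigma)\le t$. A stitched, or peeling, Freedman bound uniform over $t$ then gives
\[
 \sum_s|\xi_s(t)|\le\sqrt{2t\,(S\log 2+\log(1/\eta)+\log\log t)}+O(S+\log(1/\eta)).
\]
Multiplying by $D$ and applying AM--GM in the form $D\sqrt{2tX}\le\tfrac t4+2D^2X$ would yield an additive term of order $D^2$, not $D$. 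So I need a sharper variance proxy.

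\textbf{Step 3: the main obstacle.} The difficulty is replacing $D\sum_s|\xi_s|$ by a quantity of order $\sqrt{D\,t\,\log}$. I would not repair each state separately. Instead I would use the potential $\phi_s$ given by the minimal expected hitting time to a reference state, or more precisely a dual witness of the LP defining $\mathcal C_P$. By LP duality, $\mathcal C_P(c)$ equals $\max$ over $(\lambda\ge0,\phi)$ of $\sum_i\lambda_ic_i$ subject to $\lambda_i\le1+\phi(s_i)-P_i\phi$, where $\spn(\phi)\lesssim D$ is attainable. The bound then reduces to controlling
\[
 \sum_{u<t}\bigl(\phi(s_{u+1})-P_{i_u}\phi\bigr)
\]
uniformly over an $\epsilon$-net of the feasible $\phi$. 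The net should have size about $(A+2)^{S}$, reflecting deterministic policies and hitting targets, and this matches the $S\log(e(A+2))$ term. The telescoping of $\phi(s_t)-\phi(s_0)$ costs $D$. The key additional point is a variance bound $\Var_{P_i}\phi\le$ a constant times $D$ multiplied by the one-step drift $1+\phi(s_i)-P_i\phi$, a Bernstein-type self-bounding property of hitting-time potentials. Summed over time, this bounds the quadratic variation by $D$ times the flow mass itself. Freedman then gives a deviation $\sqrt{D\,\mathcal C\,X}\le\tfrac14\mathcal C+DX$, and solving $\mathcal C\le t+\tfrac14\mathcal C+O(DX)$ produces the stated bound up to adjusting constants. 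Establishing this self-bounding variance inequality for the dual potentials, uniformly over the net, is where I expect the real work to lie.
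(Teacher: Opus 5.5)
Your Step 3 follows the paper's architecture: the dual LP, a span-$D$ dual potential, concentration of $M_t(h)=\sum_{u<t}\{h(s_{u+1})-P_{i_u}h\}$ over a finite family, and absorption into $t/4$. The two ingredients you leave open are exactly where it breaks as written.

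First, the pointwise bound $\Var_{P_i}\phi\le cD\,(1+\phi(s_i)-P_i\phi)$ is false, even at vertices of the feasible set. Take $A=1$ and states $s,s_+,s_-$: $s$ moves to $s_\pm$ with probability $1/2$ each, and $s_\pm$ return to $s$, so $D=4$. The vertex $\phi(s)=1$, $\phi(s_+)=4$, $\phi(s_-)=0$ has zero slack on the row at $s$ but variance $4$ there. Only a telescoping version holds. With $z=h-D\in[-D,0]^S$ and drift $d_i=P_iz-z(s_i)\le1$, the identity $\Var_{P_i}z=P_iz^2-z(s_i)^2-2z(s_i)d_i-d_i^2$ gives $\Var_{P_i}z\le P_iz^2-z(s_i)^2+2D$. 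Alternatively, with $h\ge0$ one gets $\Var_{P_i}h\le P_ih^2-h(s_i)^2+2D(1+h(s_i)-P_ih)$. The path sum of the last term is at most $2D\,\mathcal C_P(c(t))$ by weak duality; this is the correct form of your self-bounding idea. Summing along the path, the squares telescope only up to a second martingale $Q_t=\sum_{u<t}\{z(s_{u+1})^2-P_{i_u}z^2\}$, which needs its own exponential bound. Its conditional variance is at most $4D^2\Var_{P_{i_u}}z$, so $-Q_t\le V_t/4+16D^2L_*$ with $L_*=\max\{1,\ln(2\binom{K+2S}{S}/\eta)\}$. Only then does $V_t\le 3Dt+23D^2L_*$ follow. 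A fixed-$\lambda$ exponential supermartingale with $\lambda=1/(12D)$ and Ville's inequality then gives $M_t\le t/4+14DL_*$ for all $t$ simultaneously. This avoids the $\log\log t$ that the peeled Freedman bound of your Step 2 would leave, which the statement's $t$-free additive term does not allow.

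Second, the finite family cannot be an $\epsilon$-net. A sup-norm net of $[0,D]^S$ at constant resolution has about $(D/\epsilon)^S$ points and perturbs $M_t$ by $O(\epsilon t)$. That would put $S\log D$ into the additive term instead of the diameter-free $S\log(e(A+2))$. The missing idea is that, for each fixed path, $M_t(h)$ is linear in $h$. Take the vertices of $\mathcal F=\{h\in[0,D]^S:P_ih-h(s_i)\le1\}$, a polytope cut out by $K+2S$ inequalities, hence with at most $\binom{K+2S}{S}\le(e(A+2))^S$ vertices. The same bound at every vertex extends by convex combination to all of $\mathcal F$ with no approximation error, so the path-dependent dual optimizer may be selected after seeing $c(t)$. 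Your ``policies and hitting targets'' heuristic reproduces the count but not this mechanism. You must also justify that the optimizer can be taken in $\mathcal F$. The paper shows that a dual-feasible reward has optimal gain $g\le1$, then stops its Bellman equation along a hitting policy to obtain span at most $gD\le D$.
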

Thus every adaptive trajectory supplies counts that a balanced flow can
cover at comparable mass, after an additive transient cost. This bound
applies throughout exploration, including before stationary behavior
emerges. The controller converts this flow bound into a collection-time guarantee.
The additive form matters: after normalization by the pilot length, the
transient contribution decreases, while the leading flow cost stays
bounded. This allows a single sufficiently long pilot to support larger
subsequent requests. The proof below establishes this property directly
for the visit counts generated during learning, including their
history-dependent choice of actions.

\paragraph{Proof idea.}
Let $c_i(t)$ be row counts through time $t$ and $i_u=(s_u,a_u)$.
For a fixed potential $h$, replacing observed successors by their
conditional means leaves the exact residual
\[
 \sum_i c_i(t)\{h(s_i)-P_i h\}
 =h(s_0)-h(s_t)+M_t(h),\quad
 M_t(h)=\sum_{u<t}\{h(s_{u+1})-P_{i_u}h\}.
\]
The endpoint term is bounded by the span of $h$. The martingale term is
more delicate: the potential that prices a realized request is selected
after observing the trajectory. Applying concentration to that fitted
potential as if it were fixed would be invalid.

The dual characterization of the collection cost is
\[
 \mathcal C_P(c)=\max_{b\ge0,h}
 \left\{c\cdot b:b_i+P_i h-h(s_i)\le1\ \text{for all }i\right\}.
\]
An optimizing bias can be replaced by one in
$\mathcal F=\{h\in[0,D]^S:P_i h-h(s_i)\le1\}$.
This deterministic polytope has at most $\binom{K+2S}{S}$ vertices.
For each vertex, a variance identity and two all-time martingale bounds
control the residual. The same event then holds for every potential in
$\mathcal F$ by linearity of $M_t(h)$, allowing the path-dependent
optimizer to be selected afterwards. This proves the stated flow bound;
Proposition~\ref{rc:prop:vertex-history} gives the full argument and constants.
The deterministic vertex set is used only in the analysis.

This bound makes the link to the empirical allocation explicit.
Subadditivity and the regenerative cost bound
$\mathcal C_P(\mathbf1)\le K(D+1)$ imply
\[
 \mathcal C_P(w)
 \le\frac{5n/4+O(D[S\log(e(A+2))+\log(1/\eta)])+K(D+1)}{n+K},
 \qquad w_i=\frac{c_i(n)+1}{n+K}.
\]
Once the pilot absorbs the additive terms, its allocation is affordable.
Inverse-count exploration separately bounds
$G_P(w)=(n+K)\max_{x\in\Omega(P)}\sum_i x_i/(c_i(n)+1)$
by a harmonic-sum argument. The same adaptive history thus provides coverage and low collection cost.

The controller supplies the other part of this argument. With probability
at least $1-\eta$, every prefix up to invocation cap $N_{\rm cap}$ obeys
(Lemma~\ref{lem:mixed-invocation})
\[
 t\le8\sum_{u<t}\frac{b_u(i_u)}{g_u}+a(N_{\rm cap},\eta),\qquad
 a(N_{\rm cap},\eta)=\widetilde O\!\left(D\sum_i\min\{d_i,D\}\right),
\]
where $g_u$ is the optimistic gain scale. For inverse-count rewards,
this bounds time through a harmonic sum and yields coverage. For quota
rewards, it charges productive visits while counting all navigation.
The proof uses the smaller error bound separately on each row; the
learner implements this choice through its two optimistic predictions.

To turn flow cost into collection time, use the unfinished quotas. Flow subadditivity gives
$\mathcal C_P(q)\le3M/2+K(D+1)$ at an affordable allocation.
If $u$ quota units remain, the binary unfinished-row reward has optimal
gain at least $u/\mathcal C_P(q)$. Charging productive visits through a
harmonic sum and using the invocation bound for surplus visits gives
$T_{\rm collect}(M)=\widetilde O(A_{\rm mix}+M)$.
The $1/w_{\min}$ term in Lemma~\ref{lem:main-demand} covers prefix
rounding. This completes the passage from an adaptive history to
requests that can be collected at low interaction cost.

At this point the sampling problem is reduced to choosing the mass $M$.
The remaining task is statistical: a bound expressed through stationary
flows must certify a policy from every initial state. This is where the
auxiliary bias and the transient correction enter.

\subsection{Nonuniform Estimation}\label{sec:nonuniform-estimation}
Stationary mean error must be converted into policy quality from every
starting state. Transient visits need not be flow-balanced, so an
auxiliary bias must control their contribution. The resulting error,
at exact fixed points with the prescribed discount, has the form
\[
 g_P^*-L\ \lesssim\
 \sqrt{\frac{(B_*+1)\alpha G_P(w)}{M}}+\frac{\alpha C_0}{M}.
\]
Thus coverage contributes demand proportional to $G_P(w)(B_*+1)/\epsilon^2$,
while transient effects contribute $C_0/\epsilon$.
Here $\alpha$ is the confidence factor and $C_0$ is the span of a Bellman
bias $z_0$ for auxiliary reward $1/w_i$, whose gain is $G_P(w)$.
It satisfies $C_0\le DG_P(w)$ and is distinct from the target bias span
$B_*$. Write $n_i=q_i=\lceil Mw_i\rceil$ for the statistical sample counts
and $t_i=\alpha/n_i$. The scaled bias $z=(\alpha/M)z_0$ satisfies
\begin{equation}\label{eq:main-nonuniform-witness}
 t_i+P_i z\le\kappa+z(s_i),\qquad
 \kappa=\frac{\alpha G_P(w)}{M},\qquad
 c_z:=\spn(z)=\frac{\alpha C_0}{M}.
\end{equation}
Flow imbalance enters the variance term through the auxiliary span
$c_z$. Value truncation supplies a negative term that controls this
contribution linearly. The cancellation is the second connection.

For the lower endpoint, compare with an optimal policy through a virtual
pessimistic value function capped at height $H\asymp B_*+1$, with
$H\ge\max\{1,4B_*\}$.
Let $\Delta=H-\spn(U_\gamma^*)\ge H/2$, where $U_\gamma^*$ is the
true discounted value of that policy. The key algebraic step is
\begin{equation}\label{eq:main-transient-absorption}
 \sup_{\lambda\ge0}\left\{u\sqrt{Hc_z\lambda}
                -\frac{\Delta\lambda}{2}\right\}
 =\frac{u^2Hc_z}{2\Delta}=O(c_z),
 \qquad \Delta\ge H/2,
\end{equation}
for an absolute constant $u$. The retained negative term absorbs the
square-root cost of flow imbalance, leaving a correction linear in $c_z$.

To obtain these terms, start at a minimum of the capped value and stop
on reaching its cap. Let $b=\mathbb E\gamma^\tau$, $a=1-\gamma$, and
$\lambda=a/[\gamma(1-b)]$. The normalized occupation $d$ before this
stop has a terminal flow imbalance. Applying
\eqref{eq:main-nonuniform-witness} gives
$d^\top t\le\kappa+\lambda c_z$, so the quantile variance bound contributes
$u\sqrt{H(\kappa+\lambda c_z)}$.
The same stopped comparison yields the negative term
$-ab\Delta/(1-b)$. Under the nontrivial-regime conditions in
Appendix~\ref{app:auxiliary-coverage}, the remaining linear terms are
absorbed while retaining $-\Delta\lambda/2$.
Equation~\eqref{eq:main-transient-absorption} then controls the transient
contribution by $O(c_z)$; the discount remainder completes the bound.
The transfer from
virtual caps to the computed policy, including explicit numerical
corrections, is in Theorem~\ref{thm:auxiliary-oracle}.
The upper endpoint needs a separate self-consistency argument bounding
the optimistic value span by $4(B_*+1)$ (Appendix~\ref{app:span-adaptive}).
Two dependencies must be handled before this calculation is legitimate.
First, the prescribed post-pilot row prefixes are defined on independent
row streams before collection, including paths where quotas never finish.
Conditioning on future quota completion would select their values.
Second, the Bellman vectors are fitted to those same prefixes.
Source-state decoupling replaces one state's fixed-point output by a
scalar $u$, and uses a second scalar cap for the other outputs. The
resulting virtual vectors do not use that state's outgoing samples.
A finite grid in the two scalars, including reflected and truncated
vectors, yields simultaneous concentration at the actual fitted fixed
points for both signs and all tested discounts. The virtual caps allow
this concentration argument to apply to the computed fixed points.
Appendix~\ref{app:auxiliary-coverage} gives the construction and event count.

Together these arguments shrink the certificate when
$\kappa\lesssim\epsilon^2/(B_*+1)$ and $c_z\lesssim\epsilon$, giving
sufficient mass
\[
 \widetilde O\!\left(
 \frac{G_P(w)(B_*+1)}{\epsilon^2}+\frac{C_0}{\epsilon}
 +\frac1{w_{\min}}\right).
\]
Thus $C_0\le DG_P(w)$ contributes to the $1/\epsilon$ term without
multiplying the $1/\epsilon^2$ demand.

The two connections now fit together. Coverage controls the leading
statistical term, flow cost controls acquisition, and the cancellation
above confines transient effects to the lower-order accuracy term.
Substituting the pilot guarantees yields the upper bound in
Theorem~\ref{thm:main-total}.

\paragraph{Computation.}
The full schedule, including support queries, rounded backups and
statistical solves, uses
\[
 T_*\,\mathrm{poly}(S,A,L_{\rm in},\log(2+T_*),\log(2/\epsilon))
\]
bit operations on the same success event. Here $L_{\rm in}$ is the input encoding length
(Appendix~\ref{app:computation}).

\section{Lower Bound}\label{sec:lower-overview}
The access-model comparison requires hardness that survives removing
navigation constraints. Theorem~\ref{thm:lower} supplies it with one known
reward and at most two successors per row, even if the learner may query
rows directly. Proving hardness with this stronger access makes the
comparison complete: any continuing learner can be simulated by a
generative learner using one query per transition. A lower bound on the
latter therefore applies to both models. The construction below ensures
that identifying the output policy requires information from many rows,
even when the learner is free to choose those rows.

Put $m=\lfloor S/3\rfloor$, $h=\lceil\log_{A-1}m\rceil$ and
$q=8/\overline D$. A routing tree of height at most $h$ joins $m$
leaf gadgets. Gadget $j$ contains a reward-one state $x_j$ and a
reward-zero state $y_j$. Each of $A-1$ candidate actions at $x_j$ moves
to $y_j$ with probability $p_{jl}$; $y_j$ returns with probability $q$.
An extra action leaves the gadget through the tree. The base kernel has
$p_{jl}=q$. Alternative $(j,l)$ changes only that row to
$p_{jl}=q(1-8\epsilon)$, keeping rewards and support identical.

The changed gadget has gain $1/(2-8\epsilon)>1/2+2\epsilon$;
all other recurrent choices have gain at most $1/2$. All-state
$\epsilon$-optimality therefore requires the output policy to route to
the changed gadget and select its changed action. This produces disjoint
output events $E_{jl}$ under the common support, avoiding any claim that
actions in merely transient states must be individually optimal.
Controlled routes take at most
$1/q+2h+1/[q(1-8\epsilon)]\le\overline D$ in expectation, and the
construction uses at most $3m-1\le S$ states.

For a learner truncated at the common budget $T$, the adaptive KL chain
rule gives
\[
 \mathrm{KL}(\mathbb P_0^T\Vert\mathbb P_{jl}^T)
 =\mathbb E_0N_{jl}(T)\,\mathrm{kl}(q,q(1-8\epsilon)),
 \qquad \mathrm{kl}(q,q(1-8\epsilon))=
 O(\epsilon^2/\overline D).
\]
At least $m(A-1)-2$ disjoint events have base probability at most $1/2$;
correctness makes their respective alternative probabilities at least
$1-\delta$. Binary data processing then requires KL at least
$\Omega(\log(1/\delta))$. Summing the required row counts gives
\[
 T=\Omega\!\left(\frac{SA\overline D}{\epsilon^2}\log(1/\delta)\right).
\]
The diameter factor persists even with generative access because each
queried sample carries only $O(\epsilon^2/\overline D)$ information
about a change that affects long-run performance. On this family,
diameter therefore measures statistical difficulty as well as navigation
cost. The upper bound shows that, in the matching range, controlled
collection fits within this unavoidable statistical budget.

The conditions in Section~\ref{sec:matching} ensure all constants and
routing bounds used here. Appendix~\ref{app:lower-proof} includes the
truncation argument and complete proof. Together with the upper bound,
this establishes the stated matching regimes. Matching in the dense coarse-accuracy range remains open.

\section{Discussion and Conclusion}\label{sec:scaling}
The matching result shows that navigation constraints can be absorbed
within the statistical budget for the model classes and accuracy ranges
of Corollary~\ref{cor:access-matching}. The learner discovers which
requests the environment can supply, pays for acquiring them, and
certifies their sufficiency using observed data. All three operations
are covered by one finite interaction guarantee.

\paragraph{Numerical illustration.}
Appendix~\ref{app:additional-results} illustrates the statistical bound's
separation of bias span from a comparison policy's hitting time at fixed
sample sizes. This experiment uses uniform allocation, a special case
also covered by prior generative analyses~\citep{zc}, to isolate the
fixed-prefix certificate. Allocation learning and the adaptive schedule
are analyzed in the theoretical guarantees. The access-model matching result rests on the upper and
lower bounds above.

The mathematical contribution is the connection between these operations.
An adaptive history yields a low-cost balanced flow even before stationary
behavior emerges. Nonuniform estimation then preserves the statistical
rate by controlling transient error linearly in inverse sample mass.
Together, these facts explain how one continuing trajectory reaches the
generative worst-case rate in the stated regimes.

For unrestricted support at coarse accuracy, the startup upper bound can
still dominate. Resolving that range requires a sharper understanding of
the interactions needed to learn a useful allocation. A useful next step is to determine how accurately the allocation must
be learned at each target policy accuracy. This would clarify when less
exploration suffices for certification, sharpening startup while
preserving the established precision dependence. Our bounds isolate this question and remove the extra leading
diameter factor throughout the matching range.

\clearpage
\bibliographystyle{plainnat}
\bibliography{references}

@inproceedings{zzc,
  author = {Matthew Zurek and Guy Zamir and Yudong Chen},
  year = {2025},
  title = {{Optimal Single-Policy Sample Complexity and Transient Coverage for Average-Reward Offline RL}},
  booktitle = {Advances in Neural Information Processing Systems},
  volume = {38},
  pages = {187213--187277},
}

@inproceedings{tdk,
  author = {Adrienne Tuynman and R\'emy Degenne and Emilie Kaufmann},
  year = {2024},
  title = {{Finding Good Policies in Average-Reward Markov Decision Processes Without Prior Knowledge}},
  booktitle = {Advances in Neural Information Processing Systems},
  volume = {37},
  pages = {109948--109979},
}

@inproceedings{zc,
  author = {Matthew Zurek and Yudong Chen},
  year = {2025},
  title = {{Span-Agnostic Optimal Sample Complexity and Oracle Inequalities for Average-Reward RL}},
  booktitle = {Proceedings of the 38th Conference on Learning Theory},
  volume = {291},
  pages = {6156--6209},
  series = {Proceedings of Machine Learning Research},
}

@inproceedings{navigation,
  author = {Al Marjani, Aymen and Aur\'elien Garivier and Alexandre Proutiere},
  year = {2021},
  title = {{Navigating to the Best Policy in Markov Decision Processes}},
  booktitle = {Advances in Neural Information Processing Systems},
  volume = {34},
  pages = {25852--25864},
}

@inproceedings{gosprl,
  author = {Jean Tarbouriech and Matteo Pirotta and Michal Valko and Alessandro Lazaric},
  year = {2021},
  title = {{A Provably Efficient Sample Collection Strategy for Reinforcement Learning}},
  booktitle = {Advances in Neural Information Processing Systems},
  volume = {34},
}

@misc{lee,
  author = {Jongmin Lee and Ernest K. Ryu and Vaneet Aggarwal},
  year = {2026},
  title = {{Learning Policy from a Single Trajectory in Average-Reward Markov Decision Process}},
  eprint = {2606.16729},
  howpublished = {arXiv preprint arXiv:2606.16729},
  archivePrefix = {arXiv},
}

@misc{wwy,
  author = {Jinghan Wang and Mengdi Wang and Lin F. Yang},
  year = {2022},
  title = {{Near Sample-Optimal Reduction-based Policy Learning for Average Reward MDP}},
  eprint = {2212.00603},
  howpublished = {arXiv preprint arXiv:2212.00603},
  archivePrefix = {arXiv},
}

@misc{lrp,
  author = {Joseph Lazzaro and Alessio Russo and Aldo Pacchiano},
  year = {2026},
  title = {{Non-Asymptotic Best Policy Identification Guarantees in Online Reinforcement Learning}},
  eprint = {2607.17201},
  howpublished = {arXiv preprint arXiv:2607.17201},
  archivePrefix = {arXiv},
}

@misc{jiao,
  author = {Yuchen Jiao and Jiin Woo and Gen Li and Gauri Joshi and Yuejie Chi},
  year = {2026},
  title = {{Sample Complexity of Average-Reward Q-Learning: From Single-agent to Federated Reinforcement Learning}},
  eprint = {2601.13642},
  howpublished = {arXiv preprint arXiv:2601.13642},
  archivePrefix = {arXiv},
}

@inproceedings{fqi,
  author = {Jongmin Lee and Ernest K. Ryu},
  year = {2025},
  title = {{Finite-Time Bounds for Average-Reward Fitted Q-Iteration}},
  booktitle = {Advances in Neural Information Processing Systems},
  volume = {38},
  pages = {132144--132177},
}

@inproceedings{zcplugin,
  author = {Matthew Zurek and Yudong Chen},
  year = {2025},
  title = {{The Plug-in Approach for Average-Reward and Discounted MDPs: Optimal Sample Complexity Analysis}},
  booktitle = {Proceedings of the 36th International Conference on Algorithmic Learning Theory},
  volume = {272},
  pages = {1386--1387},
  series = {Proceedings of Machine Learning Research},
}

@inproceedings{focus26,
  author = {Guy Zamir and Matthew Zurek and Yudong Chen},
  year = {2026},
  title = {{Optimal Variance-Dependent Regret Bounds for Infinite-Horizon MDPs}},
  booktitle = {Proceedings of the 39th Conference on Learning Theory},
  volume = {336},
  pages = {7016--7061},
  series = {Proceedings of Machine Learning Research},
}

\clearpage
\appendix
\numberwithin{theorem}{section}
\numberwithin{equation}{section}
\numberwithin{figure}{section}
\numberwithin{table}{section}
\section{Additional Preliminaries}\label{app:preliminaries}
\subsection{Known-model occupation design}\label{app:static-design}
If $P$ were known, a positive maximizer of $\sum_i\log w_i$ over
$\Omega(P)$ would satisfy $G_P(w)=K$ and $\mathcal C_P(w)=1$.
Communication supplies a positive feasible flow by mixing regenerative
cycles through all rows. The log objective tends to minus infinity at
the boundary; its first-order condition gives
$\sum_i x_i/w_i\le K$ for every feasible $x$, with equality at $x=w$.
This known-model observation motivates learning adequate proportions from the current physical
state with unknown $P$ and no structural inputs, at the stated startup
cost, and connect them to reliable observable stopping.
The pilot uses auxiliary rewards $b_t(i)=1/(c_i+1)$ to encourage coverage.
The policy guarantee concerns the target reward $r$, fixed and known
before data collection.

The symbols $c_i$, $k_i$, $q_i$, $M$ and $N$ denote pilot counts,
collector counts, requested row-prefix lengths, nominal mass and total
requested samples, respectively. $B_*$ concerns the target reward;
$C_0$ concerns auxiliary reward $1/w_i$. Only the analysis uses $D,d_i,G_P,\mathcal C_P,B_*,C_0$.

\subsection{Total-support bound}
Since $\sum_i\min\{d_i,D\}\le\min\{J_{\rm supp},SAD\}$,
the diameter bound~\eqref{eq:diameter-bound} also yields
\[
\widetilde O(D\min\{J_{\rm supp},SAD\}+SAD/\epsilon^2).
\]
This weaker expression follows for the same algorithm.

\subsection{Two-state allocation identities}\label{app:two-state-check}
For the two-state example in Section~\ref{sec:allocation}, every stationary flow has state
marginal $\mu=(1,p)/(1+p)$. Uniform row weights give
$G_P=4\sum_s\mu_s=4$. Equal splitting within each state gives
$\sum_{s,a}x_{sa}/(\mu_s/2)=\sum_s2=4$ as well.
A balanced flow of mass $C$ must supply total rare-state quota $1/2$,
so $C\mu_1\ge1/2$. Equal action splitting attains
$C=(1+p)/(2p)$ for uniform requests. The occupation allocation is itself a
balanced unit flow, and total requested mass one implies $C\ge1$.

\section{Algorithm Details}\label{app:algorithm-details}
This appendix specifies the controller and statistic used in
Algorithm~\ref{alg:main}. Their probability and progress arguments appear in
Appendix~\ref{app:upper-proofs}; bit computation is in Appendix~\ref{app:computation}.
\subsection{Rounded exploration controller}\label{app:rounded-controller}
Consider a finite communicating transition kernel $P$ with $S\ge2$ states,
$A\ge1$ actions, $K=SA$ rows and diameter $D$. Neither $P$ nor $D$ is an
input. Fix an interaction cap $N\ge1$ and confidence $\delta\in(0,1)$.
Rewards $b_t(i)$ are either inverse counts $1/(n_t(i)+1)$ or indicators
of unfinished integer row quotas. They decrease coordinatewise.
Neither physical state nor invocation counts are reset.

At scale $d=2^j$, put $B=2d$, $\alpha=1/B$, $\gamma=1-\alpha$ and
$\theta=[256B(N+1)]^{-1}$. Initialize $V=B\mathbf1$. At gain-epoch
entry set $g=\max_s V(s)/B$, clear scalar buffers, and use
\[
 \delta_{j,h}=\frac{\delta}{12(j+1)^2(h+1)^2},\quad
 L_{\rm sc}=\left\lceil\log_2\frac{100K(N+1)^6}{\delta_{j,h}}\right\rceil,
 \qquad L=L_{\rm sc},\quad W=\lceil2^{20}BL\rceil.
\]
The index $h$ starts at zero on each scale. Each row stores the values
$Y_t=V_t(s_{t+1})/g$ computed using the vector chosen \emph{before}
that transition. Old observations are never reevaluated using a new vector.
For an unobserved row predict $U_i=B$. Otherwise put
$\ell_i=\min\{W,n_i^{\rm epoch}\}$ and use its last $\ell_i$ scalars,
mean $\bar Y_i$, and centered square sum $Q_i$ to compute
\[
 U_i=\min\{B,\bar Y_i+4\sqrt{Q_iL}/\ell_i+32BL/\ell_i\},\qquad
 U_i^+=\min\{B,\lceil256U_i\rceil/256\}.
\]
\paragraph{Order of a decision.}
At physical time $t$, before drawing a successor, let $V^-$ be the
current vector. Use the counts and buffers from transitions $u<t$ to
compute all row predictions, then form all scores
$q_{sa}=b_t(s,a)+\gamma gU^+_{sa}$. With these scores fixed, update
\emph{every state synchronously}:
\[
 V^+(s)=\min\{V^-(s),\theta\lceil\max_a q_{sa}/\theta\rceil\}.
\]
If $\min V^+<\max V^+/2$, double $d$ and reinitialize the scale.
Otherwise, if $\max V^+/B<g/2$, retain $V^+$ and start a new gain epoch.
After either restart, recompute all predictions and scores before testing
again; $t$, the physical state and invocation counts do not change.
If neither test fires, set $V_t=V^+$ and choose an action maximizing
the already computed \emph{unrounded} $q_{s_ta}$, breaking ties by action
index. Draw $s_{t+1}$, append $V_t(s_{t+1})/g$ to that row's buffer,
and update counts (and row regions in the combined controller).
Only then advance to $t+1$ and the new auxiliary rewards.
Thus the stored direction is the post-backup vector, fixed before the
transition. Stop at the
cap or immediately upon quota completion. Scalar buffers contain only
observations from the current gain epoch.

\paragraph{Scalar confidence bounds.}
For predictable coordinatewise decreasing vectors $f_t\in[0,B]^S$
within an epoch, the unrounded predictions above satisfy simultaneously
at queried rows, up to the cap,
\begin{align}
 U_t&\ge P_{i_t}f_t,\label{rc:eq:confidence}\\
 \sum_{t<T}(U_t-P_{i_t}f_t)
 &\le3KBW+10\sqrt{T\mathcal V L/W}+64BLT/W,
 \qquad \mathcal V=\sum_{t<T}\Var_{P_{i_t}}(f_t).
 \label{rc:eq:window}
\end{align}
Its failure allowance is $\delta_{j,h}$.
Its lag accounting charges each nonnegative mean decrease to at most
$W$ later queries; the total is at most $KBW$.
Rounding adds at most $T/256$ to the right side of
\eqref{rc:eq:window} and preserves \eqref{rc:eq:confidence}.

\subsection{Combined controller}\label{app:mixed-spec}
\paragraph{Exact controller.}
Use the controller of Appendix~\ref{app:rounded-controller}, with either
inverse-count or unfinished-quota rewards. Give its scalar construction
allowance $\eta/2$, and maintain the dyadic nested row regions of
Appendix~\ref{app:row-support} at allowance $\eta/2$ throughout each
invocation. At every gain-epoch entry, including entry after a scale
restart, replace the scalar-only logarithm by
\[
 L=\max\left\{L_{\rm sc},\;
 2+\left\lceil\log_2\frac{2^{15}KS(N+1)^4}{\eta}\right\rceil\right\},
 \qquad W=2^{20}BL,
\]
where $L_{\rm sc}$ uses $\delta=\eta/2$ in
Appendix~\ref{app:rounded-controller}. This $L$ is used in both prediction radii and the window length. It dominates the row-confidence logarithms for all retained
prefixes $n\le N$. Before every backup, including zero-transition restart loops,
intersect its predictor with the upward $1/256$ rounding of
$\max_{q\in\mathcal C_i}q(V/g)$. For this support prediction use $V^-/g$ from the decision order above;
all predictions are fixed before the simultaneous update to $V^+$.
Within an epoch, intersect retained upper predictions with
the current support prediction: older vectors dominate the current one,
so these retained bounds remain optimistic. A new observation refreshes
its row's scalar prediction, and the next backup intersects it again.
Keep the nested regions through gain and scale restarts. An empty row region off containment contributes no
coordinate restriction. Stop when quotas are met.

\paragraph{Invocation and data boundaries.}
The pilot and quota collector are distinct instances of this rule.
The pilot starts with zero counts $c_i$ and freezes
$w_i=(c_i+1)/(2^p+K)$ after its last transition. A new quota instance starts
at the resulting physical state, with zero invocation counts $k_i$,
initial simplex row regions and empty scalar buffers, and targets
$q_i=\lceil Mw_i\rceil$. Its reward is $\mathbf1\{k_i<q_i\}$.
Every visit increments $k_i$ and updates the row confidence information,
including surplus visits. Only the first $q_i$ post-pilot successors
enter the statistical histogram. Within an invocation, gain/scale
restarts retain $k_i$ and the nested regions, but discard scalar windows;
a new attempt discards all these data. Thus neither pilot counts nor
surplus tails are silently substituted for the certificate's fixed prefixes.

\subsection{Statistical subroutine}
\label{app:nonuniform-statistic}
This specification collects the statistical definitions used by Algorithm 1
in one place. Its inputs are the known reward, accuracy $\epsilon$, a
failure allowance $\eta$, an integer mass $M$, positive rational weights
$w$ frozen at a finite pilot history, and the first $n_i=\lceil Mw_i\rceil$
fresh successors of each row. The collection must be complete before
the computation is called. Put $N=\sum_i n_i$; $M$ and $N$ generally differ.
The outer algorithm skips $M<2/w_{\min}$, so all invoked counts are
positive and $N\le3M/2$.

\paragraph{Parameters and row functionals.}
Set
\begin{gather*}
 L_g=\lceil\log_2(M+1)\rceil,\quad
 L_h=\lceil\log_2(4M)\rceil,\quad a_{\min}=2^{-L_g},\quad
 J=\lceil128N/a_{\min}^2\rceil,\\
 \alpha=16\left\lceil\log_2
 \frac{64K(S+1)((J+1)^2+1)(L_h+1)(L_g+2)}{\eta}\right\rceil,
 \qquad \beta_i=\frac{\alpha}{\max(n_i-1,1)}.
\end{gather*}
For a vector $v$, when $\beta_i\le1$, let $\xi_i$ be the largest coordinate
value with $\widehat P_i\{v\ge \xi_i\}\ge\beta_i$ and put
$z_j=\min(v_j,\xi_i)$. When $\beta_i>1$, put
$z=(\min v)\mathbf1$. Define
\[
 F_i^-(v)=\max\!\left\{\widehat P_i z-
 \max\{\sqrt{\beta_i\operatorname{Var}_{\widehat P_i}(z)},
                  \beta_i\spn(z)\}-5/N,\ \min v\right\},
 \qquad F_i^+(v)=-F_i^-(-v).
\]
The quantile monotonicity result of \citet[Lemma~B.2 and Appendix~E.1]{zzc}
applies to this penalty, minimum floor and constant subtraction.
Its rowwise argument allows any fixed $\beta_i>0$, including the
minimum-vector branch when $\beta_i>1$. Reflection gives the upper sign.
Monotonicity and constant-shift equivariance imply sup-norm
nonexpansiveness. The range lies between $\min v$ and $\max v$;
reflection preserves these properties. Consequently the discounted
Bellman operators below are contractions on $[0,1/(1-\gamma)]^S$.
Minima and spans include zero-mass coordinates. These functionals are the functionals used in Appendix~\ref{app:auxiliary-coverage}, with
the enlarged common confidence parameter for both signs and all discounts.

\paragraph{Fixed points, output, and numerical acceptance.}
For $T_\gamma^\pm v(s)=\max_a\{r_{sa}+\gamma F_{sa}^\pm(v)\}$,
the lower discount is $\gamma_-=1-1/(M+1)$ and the upper discounts are
$\gamma_j=1-2^{-j}$, $1\le j\le L_g$.
At exact fixed points, use the lower greedy policy and endpoints
$(1-\gamma_-)\min V^-$ and
$\min_j(1-\gamma_j)\max V_j^+$.

For finite computation use outward enclosures. For any discount write
$a=1-\gamma$. Enclose $T_\gamma h-h$ by $[l_s,u_s]$, and set
$c=(\min l+\max u)/2$, $R=(\max u-\min l)/2$.
The shifted vector $W=h+c/a$ is within $\nu=R/a$ of the exact fixed point.
For the lower solve choose the action maximizing
$r_{sa}+\gamma_-F_{sa,\rm lo}^-(W)$, and use
\[
 L=\max\left\{0,\min_s\bigl[
 r_{s\widehat\pi(s)}+F_{s\widehat\pi(s),\rm lo}^-(W)-W_s\bigr]
 -2\nu\right\}.
\]
For each upper solve set $U_j=c+a\max h+R$ and
$U=\min\{1,\min_j U_j\}$. Return the lower policy only if
$U-L\le\epsilon$. Enclose rows to width at most $\epsilon/128$ and
obtain $\nu\le\epsilon/128$ using the complete rounded contraction
fallback of Appendix~\ref{app:auxiliary-coverage} when needed.

Clipping the Bellman output to span $\lceil1/a\rceil$ is inactive on
the invariant cube. It therefore preserves the exact fixed point, and
the resulting monotone contraction satisfies the same residual-to-distance
bound. Constant-shift equivariance also ensures that row evaluations at
$h$ and at $W=h+c/a$ give the same policy and undiscounted residuals.
Thus the residual certificate applies to both evaluations.

The lower residual correction follows from that appendix's policy
extraction argument, and upper residual correction follows by contraction.
Source-state decoupling and the union over signs and discounts in
Appendix~\ref{app:span-adaptive} establish simultaneous soundness on
virtual fixed row prefixes before intersecting their actual availability.
This applies conditional on any finite pilot history with positive frozen
$w$; no coverage, diameter certificate, or true model is an input.
The sufficient mass is then established in that appendix, separately
from this soundness statement. The uniform experiments specialize to
$n_i=n$, $M=N=Kn$.

\paragraph{All-state validity, including multichain outputs.}
Let $L_{\rm raw}$ be the corrected lower endpoint before taking its
maximum with zero. It implies
$r_{s\widehat\pi(s)}+P_{s\widehat\pi(s)}W-W_s\ge L_{\rm raw}$
for every state.
Summing its conditional expectations for $t$ transitions gives
$\mathbb E_s\sum_{u<t}r_u\ge tL_{\rm raw}+W_s-\mathbb E_sW_{s_t}$.
The bounded endpoint term vanishes after division by $t$, so
$g_P^{\widehat\pi}(s)\ge L_{\rm raw}$ for every starting state.
Nonnegative rewards also give $g_P^{\widehat\pi}(s)\ge0$, hence the
reported bound $L=\max\{0,L_{\rm raw}\}$. For an exact upper fixed point $V$,
all actions satisfy $r_i+\gamma P_iV\le V_s$. Hence
$r_i+P_iV-V_s\le(1-\gamma)\max V$; the same telescope bounds every
policy's gain. The residual enclosure enlarges this to $U_j$.
Communication of the controlled MDP guarantees a constant optimal gain.

\subsection{Schedule and confidence}\label{app:outer-schedule}
At outer stage $j$, let $H_j=2^j$. Enumerate $p,m\in\{0,\ldots,j\}$
lexicographically. Each attempt starts fresh algorithm memory at the
current physical state, runs a $2^p$-step pilot, and freezes $w$.
For $M=2^m<2/w_{\min}$ skip after paying the pilot. Otherwise collect
the fresh prefixes with cap $H_j$, and run statistics only after completion.
Return only when the corrected width is at most $\epsilon$; else continue.
The environment is never reset. Each attempt costs at most $2H_j$.
Assign each of the pilot-controller, trajectory flow balance, quota-controller and
statistical events $\eta_j=\delta/[64(j+1)^4]$.

The controller and statistic above are invoked with these event allowances. The joint probability and stopping-cost calculation is in Appendix~\ref{app:rowwise-mixed}.

\section{Proofs of the Upper Bounds}\label{app:upper-proofs}
The overall interaction bound is proved in Appendix~\ref{app:rowwise-mixed}.
Its ingredients can be read separately: trajectory flow balance and pilot
quality in Sections C.3--C.4, statistical guarantees in C.5--C.7, and
controller bounds in C.1--C.2. Appendix~\ref{app:computation} establishes
bit complexity.
\subsection{Row confidence sets}\label{app:row-support}
Fix a row-family failure allowance $\delta_r\in(0,1)$.
These are the confidence regions used by the combined controller, independent
of any coverage test or occupation-design algorithm. For a row with
$n\ge1$ observations and empirical probabilities $\widehat p_j$, let
$\ell_n=\lceil\log_2(32KS(n+1)^2/\delta_r)\rceil$.
The coordinate interval contains exactly those $q_j\in[0,1]$ satisfying
\[
 |q_j-\widehat p_j|\le\sqrt{2\ell_n q_j/n}+2\ell_n/(3n),
\]
with endpoints rounded outward by at most $2^{-16}/(n+1)^2$.
Bernstein's inequality at the true coordinate, followed by a union over
coordinates, rows and prefixes, gives failure at most $\delta_r/2$.
At $n=0$, use the full probability simplex.

\paragraph{Confidence sets and transition-value error.}
Write $K=SA$ and let $\delta_r$ be the row-family failure allowance. For a row with $n\ge1$
observations, let $B$ be its observed support, $k=|B|$, and
$U=\mathcal S\setminus B$. Intersect the coordinate intervals defined above, whose total failure
allowance is already $\delta_r/2$, with
\[
 q(U)\le\min\{1,\beta/n\},\qquad
 \beta=\left\lceil\log_2
 \frac{2KS n(n+1)\binom Sk}{\delta_r}\right\rceil.
\]
Use the simplex when $n=0$. For a fixed subset $B$ of size $k$, observing all $n$ successors in
$B$ has probability at most $\exp[-nP(B^c)]$. Union over the
$\binom Sk$ subsets, $k=1,\ldots,S$, rows and $n\ge1$ gives unseen-mass
failure at most $\delta_r/2$, since $\sum_{n\ge1}1/[n(n+1)]=1$.
The coordinate allowance supplies the other $\delta_r/2$.
Thus containment holds simultaneously at all row prefixes.
The empirical row is feasible. All finite-prefix coordinate bounds have
strict outward slack and the group cap is positive, so a sufficiently
small mixture with the uniform law is feasible. Thus the extended process
communicates even on histories where true containment fails.

We record the quantitative transport argument, including rounding.
For $L\ge\lceil\log_2(32KS(n+1)^2/\delta_r)\rceil$, put $a=L/n$.
Use coordinate endpoint precision
$e\le2^{-16}/(n+1)^2$. The interval's defining Bernstein inequality and
$\sqrt{2ae}+e\le a/3$ imply
$|q_j-\widehat p_j|\le\sqrt{2a q_j}+a$ for any feasible $q$.
For two feasible rows $p,q$, writing $u_j=|q_j-p_j|$ gives
\[
 u_j\le2\sqrt{2ap_j}+\sqrt{2au_j}+2a
      \le2\sqrt{2ap_j}+u_j/2+3a,
 \qquad u_j\le4\sqrt{2ap_j}+6a.
\]
Also $\beta\le2kL$, since
$\log_2\binom Sk\le k\log_2 S\le kL$ and the remaining log is at
most $L$. Center any real vector $v$ at $pv$. On $B$,
Cauchy--Schwarz bounds the error by
$4\sqrt{2ka\Var_p(v)}+6ka\spn(v)$; on $U$ it is at most
$(p(U)+q(U))\spn(v)\le4ka\spn(v)$. As $n+1\le2n$,
\begin{equation}\label{eq:sparse-transport}
 |(q-p)v|\le8\sqrt{\frac{kL\Var_p(v)}{n+1}}
                  +\frac{20kL\spn(v)}{n+1}.
\end{equation}
The range bound handles $n=0$ with effective support one. This is uniform
over all $v$ and all compatible rows, including rows with positive mass
outside $B$.

\paragraph{Dyadic confidence sets.}
Retain the above region at every positive power-of-two row count and
intersect all retained regions. Between these counts keep the previous
intersection. Define
\[
 a_i(n)=\min\bigl(\{1\}\cup
 \{k_i(m)/(m+1):m\le n,\ m\text{ a power of two}\}\bigr).
\]
Containment is inherited from the all-prefix event; no additional union
is needed for the data-dependent intersection. The coefficient is
nonincreasing and, for $n\ge1$, the largest dyadic witness
$m\le n<2m$ gives $a_i(n)\le2d_i/(n+1)$.
At a witness attaining the minimum, both the true row and any compatible
row satisfy \eqref{eq:sparse-transport}. Thus for an envelope $L$ covering
the retained prefixes,
\[
 |(q-P_i)v|\le8\sqrt{La_i(n)\operatorname{Var}_{P_i}(v)}
                  +20La_i(n)\operatorname{sp}(v).
\]
The $n=0$ case uses the range bound. This inequality is uniform over
$v$, so subsequent value-vector updates need no new confidence event.
Merge coordinate lower/upper bounds by maxima/minima and caps on the
same unseen set by minima. Unseen sets form a nested chain, so there are
at most $S+1$ distinct group constraints. Sorting values and greedily
filling the lower-bound-contracted laminar capacities solves each support
query. Empty intersections off the containment event contribute no
coordinate restriction.

\subsection{Exploration Bounds}\label{app:exploration-proof}
\subsubsection{Scalar prediction bound}
We condition only on the finite history at epoch entry. Thus $B,W$ and
the confidence allowance $\eta=\delta_{j,h}$ are fixed. The following
argument applies to any predictable decreasing vectors, without assuming
that controller confidence or a later test succeeds. Write
$\mu_t=P_{i_t}f_t$ and $v_t=\Var_{P_{i_t}}(f_t)$.

\begin{lemma}[A masked martingale bound]
Let $X_t=I_t\widetilde X_t$, where $\widetilde X_t$ are martingale
differences with $|\widetilde X_t|\le b$, $b>0$, and $I_t\in\{0,1\}$
is predictable. Assume $\sum_t I_t\le N$ almost surely for a deterministic
integer $N\ge1$. Put $Z_t=\sum_{u<t}X_u$ and
$V_t=\sum_{u<t}\mathbb E_u X_u^2$.
For any $L\ge1$, except on an
event of probability at most $(N+1)e^{-L}$, simultaneously in $t$,
\[
 Z_t\le 2\sqrt{V_tL}+4bL.
\]
The same assertion holds for $-Z_t$, with its own failure allowance.
\end{lemma}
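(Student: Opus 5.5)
We prove the bound with a Freedman-type exponential supermartingale, peeling over a geometric grid of variance levels, and use the cap $\sum_tI_t\le N$ to make that grid finite with at most $N+1$ points. Because $I_t$ is predictable, $\mathbb E_tX_t=I_t\mathbb E_t\widetilde X_t=0$ and $|X_t|\le b$. For $0<\lambda\le1/b$ the standard inequality $e^{x}\le1+x+x^2$ for $x\le1$ gives $\mathbb E_t e^{\lambda X_t}\le1+\lambda^2\mathbb E_tX_t^2\le\exp(\lambda^2\mathbb E_tX_t^2)$. Hence
\[
 \mathcal M_t(\lambda)=\exp\bigl(\lambda Z_t-\lambda^2V_t\bigr)
\]
is a nonnegative supermartingale with $\mathcal M_0=1$. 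Ville's maximal inequality then gives $\Pr(\exists t:\lambda Z_t-\lambda^2V_t\ge L)\le e^{-L}$ for each fixed $\lambda$, and this bound holds simultaneously over all $t$.

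Next we choose the grid. Since $\mathbb E_uX_u^2\le b^2I_u$, we have $V_t\le b^2N$ almost surely for every $t$. Take $\lambda_k=2^{-k}/b$ for $k=0,1,\dots,K_0$, choosing $K_0$ so that the number of grid points is at most $N+1$; the choice $K_0=\lceil\tfrac12\log_2 N\rceil$ works because $\lceil\tfrac12\log_2N\rceil+1\le N+1$. A union bound over the grid gives failure probability at most $(N+1)e^{-L}$. Off that event, every $t$ and every grid point satisfy $Z_t\le L/\lambda_k+\lambda_kV_t$.

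The remaining step is to pick the grid point adapted to the realized $V_t$. The ideal choice is $\lambda^*=\sqrt{L/V_t}$, which would give $2\sqrt{V_tL}$. We split into two cases.
\begin{itemize}
\item If $\lambda^*\ge1/b$, equivalently $V_t\le b^2L$, use $\lambda_0=1/b$. This gives $Z_t\le bL+V_t/b\le 2bL$.
\item Otherwise take the largest grid point with $\lambda_k\le\lambda^*$. Then $\lambda_k\ge\lambda^*/2$, so $Z_t\le 2L/\lambda^*+\lambda^*V_t=3\sqrt{V_tL}$. This constant is $3$, not $2$, which needs fixing.
\end{itemize}
To reach the constant $2$ in the statement, we refine the grid ratio: use $\lambda_k=\rho^{-k}/b$ with $\rho$ close to $1$, which costs more grid points. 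Alternatively, we absorb the excess into the $4bL$ term. The identity $L/\lambda+\lambda V=2\sqrt{VL}+(\sqrt{L/\lambda}-\sqrt{\lambda V})^2$ lets us bound the discrepancy at the chosen grid point by a multiple of $\max\{bL,\sqrt{VL}\}$ in the worst ratio. A cleaner route is to cover the lower end by the floor case. When $V_t\le 4b^2L$, the point $\lambda_0$ already gives $Z_t\le bL+V_t/b\le 5bL$; that is also slightly too much, so the constants must be tuned.

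The main obstacle is this bookkeeping: fitting a grid that yields $2\sqrt{V_tL}+4bL$ exactly, while keeping the count at most $N+1$. The first thing to try is a ratio $\rho=1+1/\sqrt N$-type grid, bounded by about $N$ points via $\log$ estimates when $N$ is large. For small $N$ the argument falls back to the trivial bound $|Z_t|\le bN$, which gives $bN\le 4bL$ whenever $L\ge N/4$; when $L<N/4$ the union bound over $N+1$ points fits.

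The statement for $-Z_t$ follows by applying the same argument to the martingale differences $-\widetilde X_t$, which satisfy the same hypotheses.
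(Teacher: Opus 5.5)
Your framework is the paper's: a Ville-type exponential supermartingale for each fixed parameter, and a union over at most $N+1$ grid points justified by $V_t\le b^2\sum_uI_u\le Nb^2$. But there is a genuine gap, because the proof stops before the statement.

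\textbf{The gap.} What you actually establish is $Z_t\le3\sqrt{V_tL}+2bL$. The constant $2$ is left as ``bookkeeping'' with several untested options. One of them cannot work as stated: with a dyadic $\lambda$-grid, the excess $\sqrt{V_tL}$ cannot be absorbed into $4bL$, since $V_t$ may be of order $Nb^2$. The root cause is your moment bound $\mathbb E_te^{\lambda X_t}\le\exp(\lambda^2\mathbb E_tX_t^2)$. It lacks the factor $\tfrac12$, so even the ideal $\lambda^*=\sqrt{L/V_t}$ yields exactly $2\sqrt{V_tL}$. Any discrete grid therefore overshoots, and dyadic peeling has no slack.

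\textbf{The paper's route.} The paper uses the Bernstein bound instead. For $0<\lambda<3/b$, the inequality $k!\ge2\cdot3^{k-2}$ gives $\mathbb E_te^{\lambda X_t}\le\exp\bigl(\lambda^2\mathbb E_tX_t^2/[2(1-\lambda b/3)]\bigr)$.
\begin{itemize}
\item For a deterministic level $v$, choose $\lambda=x/(1+bx/3)$ with $x=\sqrt{2L/v}$. This gives the boundary $\sqrt{2vL}+bL/3$ on $\{V_t\le v\}$.
\item Peel over dyadic variance bins: $v=b^2$ when $V_t\le b^2$, and $v=2^kb^2$ when $2^{k-1}b^2<V_t\le2^kb^2$. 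This loses only a factor $\sqrt2$. In a nonbottom bin $v<2V_t$, so $\sqrt{2vL}\le2\sqrt{V_tL}$. In the bottom bin, $b\sqrt{2L}+bL/3\le4bL$.
\item There are $1+\lceil\log_2N\rceil\le N+1$ bins.
\end{itemize}

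\textbf{Completing your own route.} Your suggestion $\rho=1+N^{-1/2}$ does close the proof even with your weaker moment bound, but you must carry it out.
\begin{itemize}
\item Take $\lambda_k=\rho^{-k}/b$ for $0\le k\le K_0=\lceil\ln N/(2\ln\rho)\rceil$.
\item The floor case $V_t\le b^2L$ gives $Z_t\le2bL$ via $\lambda_0$, as you noted.
\item Otherwise $\lambda_{K_0}\le N^{-1/2}/b\le\lambda^*<\lambda_0$. So the largest $\lambda_k\le\lambda^*$ satisfies $\lambda_k>\lambda^*/\rho$.
\item Hence $Z_t\le(1+\rho)\sqrt{V_tL}=2\sqrt{V_tL}+N^{-1/2}\sqrt{V_tL}\le2\sqrt{V_tL}+b\sqrt L$, using $V_t\le Nb^2$.
\item For the grid size, $\ln\rho\ge1/(\sqrt N+1)$ and $\ln N\le\sqrt N$ give $\ln N/(2\ln\rho)\le(N+\sqrt N)/2\le N$. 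Thus $K_0\le N$, i.e., at most $N+1$ grid points.
\end{itemize}
With either repair, the small-$N$ fallback $|Z_t|\le bN$ is unnecessary.
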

\begin{proof}
For $0<\lambda<3/b$, the Taylor series and
$k!\ge2\,3^{k-2}$ for $k\ge2$ imply
\[
 \mathbb E_t e^{\lambda X_t}
 \le \exp\!\left(\frac{\lambda^2\mathbb E_tX_t^2}
                         {2(1-\lambda b/3)}\right).
\]
The product supermartingale therefore gives
$Z_t\le\lambda V_t/[2(1-\lambda b/3)]+L/\lambda$
uniformly in time, outside probability $e^{-L}$.
For the uniform claim, stop the nonnegative supermartingale at its first
crossing of $e^L$. Apply its expectation bound to bounded stopping times,
then pass to the limit.

For a deterministic upper variance $v>0$, choose
$\lambda=\sqrt{2L/v}/(1+(b/3)\sqrt{2L/v})$.
On $V_t\le v$ the resulting boundary is
$\sqrt{2vL}+bL/3$. Use $v=b^2$ for the bottom bin and
$v=2^kb^2$ for bins $2^{k-1}b^2<V_t\le2^kb^2$.
Predictability and the activation budget give
\[
\begin{aligned}
 V_t&=\sum_{u<t}\mathbb E_u[(I_u\widetilde X_u)^2]
     =\sum_{u<t}I_u\mathbb E_u\widetilde X_u^2\\
    &\le b^2\sum_{u<t}I_u\le Nb^2.
\end{aligned}
\]
Hence at most $1+\lceil\log_2 N\rceil\le N+1$ bins suffice. In a nonbottom bin $v\le2V_t$. In the bottom bin,
$b\sqrt{2L}+bL/3\le4bL$. Union over bins proves the claim.
\end{proof}

Fix a deterministic row $i$, a deterministic row-start index $a$, and
a deterministic length
\[
 w\in\{1,\ldots,\min\{W,N\}\}.
\]
Let $n_i(t)$ count earlier visits within the epoch and use the mask
\[
 I_t=\mathbf1\{i_t=i,\ a\le n_i(t)<a+w\}.
\]
The mask, direction and conditional mean are chosen before the next
transition. Each activation increments the row count, so the mask
activates at most $w\le N$ times, including activations whose centered
increment equals zero. Apply the preceding lemma to the masked centered increments
of $Y_t$, $(Y_t-\mu_t)^2$, and $(Y_t-c)^2$ for each fixed
$c$ in the grid $\{jB/(N+1)^2:0\le j\le(N+1)^2\}$.
Use both tails for $Y_t$, the upper tail for the squared residual,
and the lower tail for each fixed-center square. The masks remain
defined when an interval is abandoned; no event conditions on its
completion. At completion the sums are precisely those of that row window.

For one such window, put $V_w=\sum v_t$, $Z_w=\sum(Y_t-\mu_t)$,
$Q_w=\sum(Y_t-\bar Y)^2$, and $A_w=\sum(Y_t-\mu_t)^2$.
The centered-square increments have conditional variance at most
$B^2v_t$. Thus the simultaneous bounds imply
\begin{align}
 |Z_w|&\le2\sqrt{V_wL}+4BL,\label{rc:eq:localZ}\\
 A_w&\le V_w+2B\sqrt{V_wL}+4B^2L
       \le2V_w+5B^2L\le2V_w+8B^2L.\label{rc:eq:localA}
\end{align}
For $M_c=\sum\mathbb E_t(Y_t-c)^2$, the fixed-center bound gives
\[
 M_c\le\sum(Y_t-c)^2+2B\sqrt{M_cL}+4B^2L
 \le\sum(Y_t-c)^2+M_c/2+6B^2L.
\]
Here the centered increments are bounded by $B^2$ and their cumulative
conditional variance is at most $B^2M_c$. Also $V_w\le M_c$.
Select a grid center within $B/(N+1)^2$ of $\bar Y$ only \emph{after}
the event has been made uniform over grid centers. The identity
$\sum(Y_t-c)^2=Q_w+w(c-\bar Y)^2$ and $w\le N$ for a completed
window give
\begin{equation}
 V_w\le2Q_w+14B^2L\le2Q_w+20B^2L.\label{rc:eq:localQ}
\end{equation}
The argument also applies to partial windows when $W>N$.

To verify the probability count, there are at most $K(N+1)^2$ masks,
at most $(N+1)^2+4\le3(N+1)^2$ signed processes per mask, and at most
$N+1$ bins per process. Thus total failure is at most
$3K(N+1)^5e^{-L}\le\eta$ for the unchanged chosen $L$.
There is no additional union over query times because each bound is
uniform in global time. Conditioning on epoch entry fixes all grids;
ending an epoch simply stops its predictable masked processes.

From \eqref{rc:eq:localZ} and \eqref{rc:eq:localQ},
$|Z_w|\le3\sqrt{Q_wL}+13BL$.
The current row mean is at most the average of the historical means,
so the prediction radius, using the actual length $w$, proves
\eqref{rc:eq:confidence} for partial and full windows. This remains
true at intervening backup queries with no new observation of the row:
the historical window is unchanged and the current mean can only decrease.

For the cumulative bound, count only the actual selected-row queries.
At a full-window query let $\mu$ be the current mean and
$d=W^{-1}\sum(\mu_t-\mu)\ge0$. All summands lie in $[0,B]$.
By \eqref{rc:eq:localA},
\[
 Q_w\le\sum(Y_t-\mu)^2
 \le2A_w+2\sum(\mu_t-\mu)^2
 \le4V_w+16B^2L+2BWd.
\]
Combine this with \eqref{rc:eq:localZ}, the prediction formula, and
subadditivity of square roots to obtain
\begin{align*}
 U-\mu&\le d+10\sqrt{V_wL}/W+52BL/W
                         +4\sqrt{2BLd/W}\\
       &\le2d+10\sqrt{V_wL}/W+60BL/W.
\end{align*}
Clipping at $B$ can only decrease this upper bound. For each row,
write its successive mean drops as nonnegative increments. Each drop
occurs in at most $W$ later window lags, with coefficient at most one;
their total is therefore at most $WB$. There are at most $KW$ warm-up
queries, each with excess at most $B$. Each observed conditional variance
belongs to at most $W$ later windows, so Cauchy--Schwarz gives
\[
 \sum_{\text{full queries}}\sqrt{V_w}
 \le\sqrt{TW\mathcal V}.
\]
Summing proves the slightly stronger coefficient $60$ in the last
term of \eqref{rc:eq:window}, and hence its displayed coefficient $64$.
This proves both statistical assertions used by the controller.

\subsubsection{Invariants and restart costs}
\begin{lemma}
On the simultaneous scalar-confidence event, every updated $V$ is a
supersolution of the current true discounted Bellman operator. No scale
larger than the first dyadic $d\ge D$ is reached. At an executed step,
$g/2\le\max V/B\le g$, $V/g\in[B/4,B]^S$, and $\theta/g\le1/256$.
\end{lemma}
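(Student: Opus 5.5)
I will prove the four assertions together by induction over the decision steps of one invocation, working throughout on the simultaneous scalar-confidence event \eqref{rc:eq:confidence}. The key point is that this event applies to any predictable, coordinatewise decreasing vectors. The vectors $f_t=V_t/g$ qualify: within a gain epoch $g$ is fixed, and the update $V^+=\min\{V^-,\cdot\}$ only decreases $V$.

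\textbf{Supersolution.} Write $\mathcal T$ for the true discounted operator with rewards $b_t$ and discount $\gamma$, scaled by $g$: $\mathcal T v(s)=\max_a\{b_t(s,a)+\gamma g P_{sa}(v/g)\}$. The induction hypothesis is that $V^-\ge\mathcal T_{t-1}V^-$.
\begin{enumerate}
\item At initialization $V=B\mathbf 1$. Since $b\le1$ and $\gamma B+1\le B$ when $\alpha=1/B$, this vector is a supersolution. The scale check $B\ge g\cdot(\ldots)$ uses $g=\max V/B=1$ at that point.
\item At each backup, the confidence bound gives $U^+_{sa}\ge U_{sa}\ge P_{sa}(V^-/g)$, so $q_{sa}\ge b_t(s,a)+\gamma gP_{sa}(V^-/g)$. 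The same holds after intersection with the support prediction, because containment holds on the row event.
\item Rounding up by $\theta$ preserves $\ge$. So the candidate dominates $\mathcal T_tV^-$, and $V^-$ itself dominates $\mathcal T_{t-1}V^-\ge\mathcal T_tV^-$, since the rewards decrease. Hence $V^+\ge\mathcal T_tV^-\ge\mathcal T_tV^+$ by monotonicity.
\end{enumerate}
At a gain-epoch restart $V$ is retained while $g$ changes. Because $\mathcal T$ is written in unscaled $V$, the supersolution property does not depend on $g$. This point needs checking; if scaling enters, I will instead define the invariant directly for unscaled values $V$ and the predictions $gU$.

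\textbf{Scale bound.} This is the main obstacle. Every supersolution dominates the true discounted fixed point $V^*_\gamma$, whose span is at most $D$ for rewards in $[0,1]$ in a communicating MDP; this is the standard bound $\spn(V_\gamma^*)\le D\cdot\max b$. The minimal supersolution is $V^*_\gamma$ itself. The harder direction is to show that the iterates stay close enough to $V^*_\gamma$ in span that the test $\min V^+<\max V^+/2$ cannot fire once $B=2d\ge2D$.

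I will first try the following argument. Because $V^+\le V^-$ componentwise and $V^+\ge V^*_\gamma$, each entry lies between $V^*_\gamma$ and the previous iterate. Using the path-to-state argument, $V(s)\ge \gamma^{\tau}\min$-type comparisons along a policy reaching the maximizer give $\max V-\min V\lesssim D\max b+$ rounding. With $\max V\ge B/4$-ish, the ratio then stays at least $1/2$ when $B\ge 2D\cdot(1+1/256)$. Concretely, I will show $\spn(V^+)\le D+$ (confidence slack) using optimism of the lower ends, and then $\min V\ge\max V-D\ge\max V/2$ whenever $\max V\ge 2D$. If $\max V<2D$, the gain-epoch test fires first instead.

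\textbf{Executed-step ranges.} These follow directly from the tests. Neither test fired, so $\max V^+/B\ge g/2$. Also $g=\max V/B$ at epoch entry and $V$ only decreases, so $\max V/B\le g$. The scale test gives $\min V\ge\max V/2\ge gB/4$, hence $V/g\in[B/4,B]^S$. Finally, $g\ge$ its value at scale entry divided by a power of two, and each epoch halves $g$ at most. Together with $\max V\ge$ positive reward mass, this gives $g\ge1/(N+1)$-type lower bounds, since the cumulative reward is at least $1/(N+1)$ per step. Then $\theta/g=[256B(N+1)g]^{-1}\le1/256$ because $Bg=\max V\cdot(\ldots)\ge 1/(N+1)$. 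I will verify this lower bound on $\max V$ from $V\ge\mathcal T V\ge b_{\min}\ge1/(N+1)$ for inverse-count rewards, and $\ge1$ at unfinished quotas.
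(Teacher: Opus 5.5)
Your supersolution induction is sound. Using $U^+\ge P(V^-/g)$ and then $\mathcal T_tV^-\ge\mathcal T_tV^+$ is a harmless variant of the paper's use of $U^+\ge P(V^+/g)$. Your executed-step ranges, and the bound $\theta/g\le1/256$ via $V\ge\max_a b\ge1/(N+1)$, also match the paper.

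The gap is the scale bound. Your concrete plan combines an additive estimate $\spn(V^+)\le D+{}$slack with the claim that if $\max V<2D$, the gain-epoch test fires first. That claim is false. The gain test $\max V^+/B<g/2$ only compares $\max V^+$ with half of the current epoch's entry value $gB$; it has nothing to do with $2D$.
\begin{itemize}
\item In the first epoch at scale $d=D$, executed steps have $\max V\in[D,2D]$. For $\max V\in[D,2D)$, neither your additive bound nor the gain test gives $\min V\ge\max V/2$.
\item After gain restarts, $g$ can fall to about $1/[B(N+1)]$ as the auxiliary rewards decay, and the design relies on this. Executed steps then have $\max V\ll D$, where a span bound of order $D$ says nothing about $\min V/\max V$.
\end{itemize}
Comparing the iterates with $V^*_\gamma$ does not help either, since nothing forces them to be close to it.

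The missing idea is a relative comparison that holds for every supersolution with nonnegative rewards, whatever the size of $\max V$. Let $z$ maximize $V$, and follow a policy whose hitting time $\tau_z$ has mean at most $D$. Because $V\ge b+\gamma P_{sa}V$ for every action and $b\ge0$, iterate along this policy up to $\tau_z\wedge n$ and discard the nonnegative term on $\{\tau_z>n\}$. This gives $V(s)\ge\mathbb E_s[\gamma^{\tau_z}\mathbf 1\{\tau_z\le n\}]V(z)$. Letting $n\to\infty$ and using $\gamma^m\ge1-\alpha m$ yields $\min V\ge(1-\alpha D)\max V$. With $\alpha=1/(2d)$ and $d\ge D$, the right side is at least $\max V/2$, so the scale test cannot fire at any level of $\max V$.

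Since the upward-rounded update is itself a supersolution, no rounding slack enters. Your enlarged threshold $B\ge2D(1+1/256)$ is therefore unnecessary. It is also too weak for the statement: when $D$ is a power of two, the first dyadic $d\ge D$ is $d=D$, and your threshold fails there. Your argument would then only exclude scales beyond $2D$, not scales beyond the first dyadic $d\ge D$ as the lemma claims.
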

\begin{proof}
Initialization is a supersolution since $b\le1$. Historical scalar
directions dominate the current pre- and post-update vectors, since $V$
decreases within an epoch. Thus $U^+\ge P(V_{\rm new}/g)$.
Both $V_{\rm old}$ and the rounded backup dominate
$T_{P,b}V_{\rm new}$: for the former use the previous supersolution,
decreasing rewards and monotonicity, and for the latter use confidence.
Their coordinatewise minimum is therefore a supersolution. A gain
restart retains this invariant; a scale restart reinitializes it.

For any target $z$, follow a hitting policy with mean time at most $D$.
Nonnegative rewards and the supersolution imply
$V(s)\ge\mathbb E[\gamma^{\tau_z}]V(z)$ by stopping at
$\tau_z\wedge n$ and discarding the nonnegative unhit term.
Since $\gamma^m\ge1-\alpha m$,
$\min V\ge(1-\alpha D)\max V$. At $d\ge D$ the scale test cannot fire.
Consequently $d<2\max\{1,D\}$ on this event.

Both update terms are at least $\max_a b(s,a)$; this property holds
pathwise even off confidence. Before the cap and before quota completion,
$\max V\ge1/(N+1)$. The entry gain hence has
$g\ge1/[B(N+1)]$, proving the rounding bound. The two tests give
$\max V/g\ge B/2$ and $\min V/g\ge B/4$, whereas monotonicity
from epoch entry gives $\max V/g\le B$.
\end{proof}

At fixed scale each gain restart reduces $g$ by a factor strictly greater
than two. The positive lower bound above limits their number to
$2+\lceil\log_2(B(N+1))\rceil$. A new scale with empty buffers executes
a transition on its first update: $V_{\rm new}\ge B-1\ge B/2$ and
both tests pass. Thus there are no infinite zero-transition restarts,
even off confidence, before a finite cap.

\subsubsection{Positive potential and absorption}
Within one gain epoch write $f_t=V_t/g$ after the backup,
$r_t=b_t(i_t)/g$, and $e_t=U^+_{i_t}-P_{i_t}f_t$. Let
$R=\sum r_t$ and $E=\sum e_t$. Let $T$ range over epoch prefixes,
including those ending at restarts.

\begin{lemma}
Intersect the scalar event with two potential-martingale events, each
of failure at most $\delta_{j,h}$. Then
\begin{align}
 \mathcal V&\le4B(R+E)+24B^2(S+L),\label{rc:eq:energy}\\
 T&\le6(R+E)+96B(S+L),\label{rc:eq:time}\\
 T&\le8R+160KBW.\label{rc:eq:absorbed}
\end{align}
\end{lemma}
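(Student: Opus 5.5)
We work inside one gain epoch with the potential $f_t=V_t/g\in[B/4,B]^S$ from the invariants lemma, and we use the telescoping identity along the trajectory. The starting point is the supersolution property at the executed action: the executed score gives $f_t(s_t)\ge r_t+\gamma U^+_{i_t}=r_t+\gamma(P_{i_t}f_t+e_t)$, up to the rounding $\theta/g\le1/256$, which we absorb into $e_t$ or into a constant. Hence
\[
 f_t(s_t)-\gamma f_t(s_{t+1})\ge r_t+\gamma e_t-\gamma\xi_t,\qquad \xi_t=f_t(s_{t+1})-P_{i_t}f_t .
\]
For the reverse direction we use $V_{t+1}\le V_t$ coordinatewise. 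We then sum $f_t(s_t)-f_{t+1}(s_{t+1})\ge f_t(s_t)-f_t(s_{t+1})$. Summing $(1-\gamma)f_t(s_{t+1})+[f_t(s_t)-f_t(s_{t+1})]$ over $t<T$, and noting that the second bracket telescopes after dominating $f_{t+1}$ by $f_t$, gives
\[
 \sum_{t<T}\alpha f_t(s_{t+1})\le R+E+f_0(s_0)+\sum_t\xi_t ,
\]
with the signs fixed so that time appears on the left. Since $f\ge B/4$ and $\alpha=1/B$, the left side is at least $T/4$ (hence the $B(S+L)$-type additive terms, since $f_0\le B$). This yields a time bound of the form $T\le4(R+E)+4B+4Z_T$, where $Z_T=\sum\xi_t$.

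The martingale $Z_T$ is controlled by the masked martingale lemma with mask identically one. The lemma needs a deterministic activation budget, which we take as the invocation cap; increments are bounded by $B$. It gives $|Z_T|\le2\sqrt{\mathcal V L}+4BL$ uniformly in time. This is the first potential-martingale event; uniformity in $T$ handles restart-terminated prefixes. Next, for \eqref{rc:eq:energy}, we use the standard variance identity for the potential. Write
\[
 \Var_{P_{i_t}}(f_t)=P_{i_t}f_t^2-(P_{i_t}f_t)^2 .
\]
Telescoping $f_t^2$ in the same way, with $f_t^2(s_t)-(P f_t)^2\le 2B(f_t(s_t)-Pf_t)_+$ and the Bellman inequality above, gives
\[
 \mathcal V\le2B(R+E+\alpha\textstyle\sum f)+B^2+\text{mart}'_T .
\]
Here $\text{mart}'$ is the martingale of $f_t^2(s_{t+1})-P f_t^2$, whose conditional variance is at most $4B^2\Var$; this is the second event. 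The $\alpha\sum f\le T$ term would create circularity. We avoid it by using the sharper form: $1-\gamma$ times $f$ enters with the favorable sign, i.e. we keep it on the left, which is exactly why the energy bound has no $T$. The leftover martingale is absorbed via $2B\sqrt{4\mathcal V L}\le\mathcal V/2+8B^2L$. The additive $S$ comes from the span/endpoint bookkeeping, and we keep constants loose toward $24B^2(S+L)$.

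Substituting \eqref{rc:eq:energy} into the $Z_T$ bound gives $4\sqrt{\mathcal V L}\le\ldots$. Then AM–GM, $2\sqrt{4B(R+E)L}\le(R+E)/2+8BL$, yields \eqref{rc:eq:time} with coefficient $6$ and additive $96B(S+L)$. For \eqref{rc:eq:absorbed} we plug the cumulative window bound \eqref{rc:eq:window} (with the rounding $T/256$) into $E$:
\[
 E\le3KBW+10\sqrt{T\mathcal V L/W}+64BLT/W+T/256 .
\]
Since $W=2^{20}BL$, we have $64BLT/W\le T/2^{14}$. The square-root term, after using \eqref{rc:eq:energy} and $\mathcal V\le 4B(R+E)+\ldots$, becomes $10\sqrt{4BL(R+E)T/W}+\ldots$. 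By AM–GM this is at most $T/100+c(R+E)BL/W$, a small multiple of $R+E$. So $6E\le T/10+\tfrac14(R+E)+18KBW+O(B(S+L))$. Solving \eqref{rc:eq:time} for $T$ then gives $T\le8R+160KBW$, since $B(S+L)\le KBW$ trivially.

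The main obstacle is the sign bookkeeping in the energy identity. A naive telescope produces $\alpha\sum f_t\approx T$ with the wrong sign and makes \eqref{rc:eq:energy} circular. We must use the discount slack on the left together with the monotone decrease of $V$ within the epoch, so that the stale-vector terms have the right sign. The constant-chasing in the last absorption is routine.
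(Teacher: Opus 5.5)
There is a genuine gap: the per-step inequality that drives your whole argument points the wrong way. Your overall architecture is the paper's: a per-step drift inequality, a pathwise telescope, tail control of the potential martingales for $f_t$ and $f_t^2$, the variance identity, then substitution of \eqref{rc:eq:window} and AM--GM absorption.

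The supersolution invariant concerns the \emph{true} kernel, $f_t(s)\ge r+\gamma P_{sa}f_t$. It does not give $f_t(s_t)\ge r_t+\gamma U^+_{i_t}$. That inequality fails in general, because the update is $V^+(s)=\min\{V^-(s),\theta\lceil\max_a q_{sa}/\theta\rceil\}$ and the minimum may be attained by $V^-(s_t)$.

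Even granting your inequality, summing it yields only the lower bound $\alpha\sum_{t<T}f_t(s_{t+1})\ge R+\gamma E-\gamma Z_T-B$. That is a lower bound on $T$, not an upper bound. The phrase ``with the signs fixed so that time appears on the left'' reverses an inequality without justification. The energy step has the same defect: bounding $(f_t(s_t)-P_{i_t}f_t)_+$ from above needs an upper bound on $f_t(s_t)$, which your inequality does not supply. The $\alpha\sum f$ ``circularity'' you then work around is an artifact of the wrong direction.

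The missing fact comes from the backup, not from supersolution. $V_t(s_t)$ is at most the rounded maximal score, and the executed action maximizes the unrounded score, so
\[
f_t(s_t)\le r_t+\gamma U^+_{i_t}+\theta/g .
\]
Equivalently, $r_t+U^+_{i_t}-f_t(s_t)\ge\alpha U^+_{i_t}-\theta/g\ge\frac14-\frac1{256}$, using $U^+_{i_t}\ge P_{i_t}f_t\ge B/4$ on the scalar event. This gives two things at once:
\begin{itemize}
\item the time engine $P_{i_t}f_t-f_t(s_t)\ge\frac{63}{256}-r_t-e_t$;
\item the bound $(f_t(s_t)-P_{i_t}f_t)_+\le r_t+e_t$, whence $\Var_{P_{i_t}}(f_t)\le P_{i_t}f_t^2-f_t(s_t)^2+2B(r_t+e_t)$, with no $\alpha\sum f$ term at all.
\end{itemize}

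Once the direction is corrected, you also need the opposite telescope. You must bound $\sum_{t<T}[f_t(s_{t+1})^p-f_t(s_t)^p]$ from \emph{above}. There the stale vectors work against you, since $f_t(s_{t+1})\ge f_{t+1}(s_{t+1})$. You therefore pay the endpoint $B^p$ plus the total coordinatewise decrease of $V$ within the epoch, at most $SB^p$. That is the actual source of the $S$ in \eqref{rc:eq:energy}--\eqref{rc:eq:time}. Your bounds $f_0(s_0)$ and $B^2$ are the easy direction, which the argument does not use.

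After these corrections the rest goes through, with two remarks:
\begin{itemize}
\item Your masked Freedman bound with mask $\equiv1$ and budget $N$ fails with probability at most $(N+1)e^{-L}\le\delta_{j,h}$, so it is admissible. You need the lower tails of both martingales. The paper instead uses fixed-$\lambda$ exponential supermartingales, giving $-M_1\le\mathcal V/(16B)+16BL$ and $-M_2\le\mathcal V/4+16B^2L$ without square roots.
\item The backup rounding $\theta/g$ cannot be folded into $e_t$, which is defined from $U^+$; it must be charged in the per-step constant. Once it is, your split $2\sqrt{4B(R+E)L}\le(R+E)/2+8BL$ gives the coefficient $\frac{256}{63}\cdot\frac32>6$ in \eqref{rc:eq:time}. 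A different AM--GM weight is needed to reach the stated constant.
\end{itemize}
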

\begin{proof}
The selected backup and rounding give
\[
 r_t+U^+_{i_t}-f_t(s_t)
 \ge\alpha U^+_{i_t}-\theta/g
 \ge\tfrac14-\tfrac1{256}=\tfrac{63}{256}.
\]
In particular $P f_t-f_t(s_t)\ge-r_t-e_t$.
The variance identity and $0\le f_t\le B$ imply
\[
 \Var_P(f_t)=P f_t^2-f_t(s_t)^2+
 (f_t(s_t)-P f_t)(f_t(s_t)+P f_t)
 \le P f_t^2-f_t(s_t)^2+2B(r_t+e_t).
\]
For $p=1,2$, monotonicity gives the pathwise telescope
\[
 \sum_{t<T}\{f_t(s_{t+1})^p-f_t(s_t)^p\}\le(S+1)B^p.
\]
Indeed the endpoint difference is at most $B^p$, and intermediate
changes are nonnegative decreases whose sum over all coordinates is at
most $SB^p$.

Put $M_p=\sum(f_t(s_{t+1})^p-P_{i_t}f_t^p)$.
The bounded-increment exponential supermartingale with
$\lambda=1/(16B^p)$ yields, uniformly over stopped prefixes,
\[
 -M_1\le\mathcal V/(16B)+16BL,
 \qquad -M_2\le\mathcal V/4+16B^2L.
\]
To check the second variance coefficient, use
$\Var(f_t^2)\le4B^2\Var(f_t)$.
These supermartingales follow from
$e^x\le1+x+x^2$ for $|x|\le1/16$, followed by Ville's inequality.
They are constructed before intersecting with scalar confidence.

The square telescope now gives
$3\mathcal V/4\le2B(R+E)+B^2(S+1+16L)$,
which implies \eqref{rc:eq:energy}. Summing the productive inequality gives
\[
 \frac{63T}{256}\le R+E+(S+1)B+
 \mathcal V/(16B)+16BL.
\]
Substitute \eqref{rc:eq:energy}; the reward coefficient becomes $5/4$ and
the remaining term is $B(5S/2+1+35L/2)$.
Using $S,L\ge1$ gives the conservative bound \eqref{rc:eq:time}.

By \eqref{rc:eq:window}, rounding, \eqref{rc:eq:energy}, and $W\ge2^{20}BL$,
\[
 E\le3KBW+\frac{T}{64}+\frac{25}{2048}(R+E)
       +\frac{75}{1024}B(S+L)+\frac{T}{16384}+\frac{T}{256}.
\]
For clarity, the square-root term splits into
$\frac5{256}\sqrt{T(R+E)}$ and
$\frac{5\sqrt6}{256}\sqrt{TB(S+L)}$.
Young's inequality bounds each with $T/128$ and the corresponding
$25/2048$ or $75/1024$ term. Absorbing $25E/2048$ yields
\[
 E\le T/40+R/80+4KBW+B(S+L)/12.
\]
Substitution in \eqref{rc:eq:time} gives
\[
 T\le\frac{243}{34}R+\frac{480}{17}KBW
                +\frac{1930}{17}B(S+L).
\]
Since $S\le K$ and $W\ge1+L$, the last term is at most
$1930KBW/17$. Now $243/34<8$ and $2410/17<160$, proving
\eqref{rc:eq:absorbed}.
\end{proof}

\subsubsection{Invocation bound}
Let $D_0=\max\{1,D\}$, and define
\begin{align*}
 J&=1+\lceil\log_2D_0\rceil,&
 H&=2+\lceil\log_2(4D_0(N+1))\rceil,\\
 L_{\max}&=\left\lceil\log_2
 \frac{1200K(N+1)^6(J+1)^2(H+1)^2}{\delta}\right\rceil,&
 \mathcal A_N&=2^{33}KD_0^2 H L_{\max}.
\end{align*}
There are at most $H$ gain epochs per scale. The dyadic bounds
$\sum B_j^2\le64D_0^2/3$ and $\sum B_j\le8D_0$ imply
$\sum160KBW\le\mathcal A_N$. Union over the scalar and two potential
events costs at most
$3\sum_{j,h}\delta_{j,h}=\delta\pi^4/144<\delta$.
Conditional application at each epoch entry is legitimate; conditioning
on its later success is not. Thus all invocation prefixes obey
\[
 t\le8\sum_{u<t}b_u(i_u)/g_u+\mathcal A_N
\]
with probability at least $1-\delta$.

Let $\Omega(P)$ denote the nonnegative balanced state-action flows
whose total mass is one. For inverse counts, let
$\beta_N=\max_{x\in\Omega(P)}\sum_i x_i/(n_N(i)+1)$.
Flow balance and the supersolution give
$g_u\ge g^*(b_u)\ge\beta_N$.
The rowwise harmonic sum is at most $K(1+\ln(N+1))$;
if $N\ge2\mathcal A_N$, then
\[
 \beta_N\le16K(1+\ln(N+1))/N.
\]
For integer quotas $q$, a balanced flow dominating $q$ of total mass
$C_P(q)$ gives gain at least $m/C_P(q)$ when $m$ quota units remain.
Every productive transition removes one unit, so
$\sum b_u(i_u)/g_u\le C_P(q)H_{\sum_iq_i}$.
Surplus visits have zero reward but remain charged on the left side.
These bounds control capped exploration and collection. The statistical
and stopping arguments complete the policy-identification guarantee.

\begin{lemma}[Invocation bound for the combined controller]\label{lem:mixed-invocation}
For cap $N$, put $F=1+\ln(N+1)$,
$J=1+\lceil\log_2D\rceil$ and
$H=2+\lceil\log_2(4D(N+1))\rceil$.
Let $\bar L$ be the maximum of
\[
 \left\lceil\log_2\frac{2400K(N+1)^6(J+1)^2(H+1)^2}{\eta}\right\rceil,
 \qquad
 2+\left\lceil\log_2\frac{2^{15}KS(N+1)^4}{\eta}\right\rceil.
\]
The combined controller uses the maximum specified in
Appendix~\ref{app:mixed-spec} in both its radius and $W=2^{20}BL$.
On the confidence event, its visited scale and epoch indices satisfy
$j+1\le J+1$ and $h+1\le H+1$, so that implemented logarithm is at most
$\bar L$. The second envelope dominates
$\lceil\log_2(64KS(n+1)^2/\eta)\rceil$ for every retained $n\le N$.
Thus no parameter is enlarged only inside this proof.
We claim that, uniformly over prefixes,
\begin{equation}\label{eq:mixed-invocation}
 t\le8\sum_{u<t}b_u(i_u)/g_u+a(N,\eta),\qquad
 a(N,\eta)=2^{35}DH\bar L F\sum_i\min\{d_i,D\}.
\end{equation}
The probability of this event is at least $1-\eta$.
\end{lemma}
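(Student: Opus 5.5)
The plan is to rerun the epoch-level argument of the preceding lemma, which gives \eqref{rc:eq:time} and \eqref{rc:eq:absorbed}, for the combined controller. The only change is that the prediction error $E$ per epoch will be bounded rowwise. Rows with small support use the support prediction of Appendix~\ref{app:row-support}; rows with large support keep the scalar window bound \eqref{rc:eq:window}. The combined prediction is the minimum of two optimistic upper bounds. It therefore stays optimistic on the intersection of the scalar event and the nested-region containment event. Each event gets allowance $\eta/2$, which accounts for the probability $1-\eta$. The supersolution invariant and the restart-count lemma then go through verbatim, including the scale bound $d<2\max\{1,D\}$. This yields $j+1\le J+1$ and $h+1\le H+1$, so the implemented logarithm is at most $\bar L$. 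The check that the second envelope dominates the row-confidence logarithm $\lceil\log_2(64KS(n+1)^2/\eta)\rceil$ for $n\le N$ is direct arithmetic.

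The key step is a per-row error bound. For each row $i$ I would bound its total contribution to $E$ by the better of two estimates.

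\textbf{Scalar estimate.} Restricting \eqref{rc:eq:window} to row $i$ gives roughly $BW+\sqrt{T_i\mathcal V_iL/W}+BLT_i/W$. The dominant term is $BW\asymp B^2L$.

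\textbf{Support estimate.} Using the dyadic transport inequality with $a_i(n)\le 2d_i/(n+1)$, the $n$-th visit contributes
\[
8\sqrt{La_i\Var_{P_i}(f)}+20La_iB.
\]
Summing the second term over visits gives $O(d_iLB\log N)$. Summing the first term by Cauchy--Schwarz gives $\sqrt{d_iL\log N\cdot\mathcal V_i}$.

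Taking the minimum row by row, summing over rows, and using Cauchy--Schwarz across rows yields
\[
E\le \tfrac{T}{C}+O\Bigl(BLF\sum_i\min\{d_i,B\}\Bigr)+\sqrt{LF\textstyle\sum_i\min\{d_i,B\}\,\mathcal V}.
\]
I then substitute the energy bound \eqref{rc:eq:energy} for $\mathcal V$ and absorb with Young's inequality, exactly as in the proof of \eqref{rc:eq:absorbed}. The result is $T\le 8R+O(BLF\sum_i\min\{d_i,B\})$ per epoch.

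The $1/256$ support rounding is handled as before and costs at most $T/256$. Intersecting retained predictions with the current support prediction only decreases them, so optimism is preserved. Restarts discard scalar windows but keep the nested regions. I therefore pay the scalar warm-up $KBW$ per epoch only for the rows where the scalar branch is chosen, and the support sums telescope over the whole invocation via the monotone coefficient $a_i(n)$.

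Finally, I sum over at most $H$ gain epochs per scale and over dyadic scales $B_j\le 4D$. This gives $\sum_j B_j\min\{d_i,B_j\}\lesssim D\min\{d_i,D\}$, and the resulting additive term is $2^{35}DH\bar LF\sum_i\min\{d_i,D\}$. The main obstacle is justifying the min-split. The scalar warm-up cost is $B\cdot W\approx B^2L$ per row. So a row should use the scalar branch only when $d_i\gtrsim B$, and then $B^2L\approx BL\min\{d_i,B\}$, which is what makes the minimum come out right. The split is made only in the analysis, so it can be chosen per row and per scale after the fact, while the algorithm automatically benefits through the minimum of its two predictions. I expect the constant bookkeeping in this step, tracking it through to the factor $2^{35}$, to be the delicate part.
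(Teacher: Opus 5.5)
Your proposal follows essentially the paper's proof. The paper uses the same analysis-only split $C=\{i:d_i\le B\}$. On $C$ it applies sparse transport with $a_i\le 2d_i/(n_i+1)$, invocation-wide harmonic sums and Cauchy--Schwarz; on the other $m$ rows it uses the scalar-window bound with warm-up $3mBW$. It then combines the potential bounds \eqref{rc:eq:energy}--\eqref{rc:eq:time} with Young absorption to get $T\le 8R+O(BLF\sum_i\min\{d_i,B\})$ per epoch, and sums over at most $H$ gain epochs and the dyadic scales using $\sum_B B\min\{d_i,B\}\le 32D\min\{d_i,D\}$.

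You omit the lag charge, which the paper says cannot be discarded. The support prediction is computed at the pre-backup vector $V^-/g$, while the error and the variance in \eqref{rc:eq:energy} are measured at the post-backup $f_t$. Because $v\mapsto\max_{q\in\mathcal C_i}qv$ is $1$-Lipschitz in sup norm and $V/g\in[0,B]^S$ only decreases within an epoch, this costs an extra $SB$ per epoch, absorbed into the existing $B(S+L)$ term. Also, your $\eta/2$ scalar allowance must cover the two potential-martingale events as well, which the choice $\delta=\eta/2$ in $L_{\rm sc}$ does.
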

The proof follows here.

\subsubsection{Proof of Lemma~\ref{lem:mixed-invocation}}
\begin{proof}
Both predictions dominate the current true expectation on containment.
Monotone rounded backups therefore retain the true Bellman supersolution.
The hitting-time argument of Appendix~\ref{app:rounded-controller} gives
$B\le4D$. Both reward types are nonincreasing, and before completion
$\max b\ge1/(N+1)$; hence $g\ge1/[B(N+1)]$, rounding costs at most
$1/256$, and there are at most $H$ gain epochs per scale. At a new scale
the vector is constant, so every nonempty row support and the empty-set
fallback equal $B$. The first backup is at least $B-1\ge B/2$ and executes
a transition. Thus off-event zero-transition scale loops are also finite.

Fix the analysis-only row set $C=\{i:d_i\le B\}$, let $m=K-|C|$,
$Q=F\sum_{i\in C}d_i$, and $Z=S+L$. The sparse-transport inequality
\eqref{eq:sparse-transport}, with the dyadic coefficient bound
$a_i\le2d_i/(n_i+1)$, gives
\[
 E_C\le12\sqrt{LQ\mathcal V_C}+40BLQ+SB+T_C/256.
\]
Here the implemented support uses the pre-backup vector, whereas the
variance uses the post-backup vector. Support is 1-Lipschitz in sup norm;
their nonnegative decreases sum to at most $SB$ per epoch. This lag charge
cannot be discarded. The harmonic sum uses invocation-wide counts.
Restricting the scalar-window argument to the other rows gives
\[
 E_W\le3mBW+10\sqrt{T_W\mathcal V_WL/W}+64BLT_W/W+T_W/256.
\]
Every observation on these rows is itself a selected query on these rows,
so the same variance-overlap count applies. The common potential argument
gives $\mathcal V\le4B(R+E)+24B^2Z$ and
$T\le6(R+E)+96BZ$. For explicit absorption, the coordinate square-root
terms are at most $(R+E)/64+9216BLQ+30B(LQ+Z)$.
For the scalar square root, writing $X=R+E$ and using
$\mathcal V_W\le4BX+24B^2Z$, choose the explicit Young bounds
\[
20\sqrt{TBLX/W}\le X/128+12800BLT/W,\qquad
10\sqrt{24}\,B\sqrt{TLZ/W}\le BZ/4+2400BLT/W.
\]
Including the coordinate linear terms and lag therefore gives
\[
E\le\frac3{128}(R+E)+3mBW+9286BLQ+\frac{125}{4}BZ
       +\left(\frac{15264}{2^{20}}+\frac1{256}\right)T.
\]
After division by $125/128$, the coefficients of
$(R,mBW,BLQ,BZ,T)$ are respectively
$3/125$, $384/125$, $1188608/125$, $32$, and $121/6400$.
Rounding these upward gives
\[
 E\le T/48+R/32+4mBW+10000BLQ+32BZ,
\]
and consequently
\[
 T\le8R+28mBW+70000BLQ+330BZ
       \le8R+1024mBW+2^{17}BLQ.
\]
The last step uses $BZ\le2BLQ+2mBW$, also when $C$ is empty.
This startup is at most
\[
 2^{30}LF\sum_i B\min(d_i,B).
\]
Since $\sum_B B\min(d_i,B)\le32D\min(d_i,D)$ over the dyadic scales,
summing epochs proves \eqref{eq:mixed-invocation}.
For probability accounting, the nested event costs $\eta/2$ globally.
Scalar and the two potential events each get
$\eta/[24(j+1)^2(h+1)^2]$ at scale/gain entry. Their total cost is
$\eta\pi^4/288<\eta/2$. Conditioning is only on finite entry histories;
there is no union over data-selected directions or row partitions.

\end{proof}

\subsection{Flow Balance of Adaptive Trajectories}\label{app:history-proof}
For nonnegative row demands $q$, define the circulation cost
\[
 C_P(q)=\min_{\nu\ge q}\left\{\sum_i\nu_i:
       \sum_i\nu_iP_i(s)=\sum_a\nu_{sa}\ \text{for all }s\right\}.
\]
A regenerative cycle taking a prescribed row action and then returning
to its state has expected duration at most $D+1$. Its expected visit
counts are balanced and its prescribed row count is at least one.
Adding such flows shows finiteness and the bound
$C_P(q)\le(D+1)\sum_iq_i$. The same definition proves positive
homogeneity, monotonicity and subadditivity. No circulation is supplied
to the learner in the construction below.

\begin{proposition}[Uniform-in-time bound on collection flow cost]\label{rc:prop:vertex-history}
Let $K=SA$ and
\[
 V_*={K+2S\choose S},\qquad
 L_* = \max\{1,\ln(2V_*/\eta)\}.
\]
For any nonanticipating controller, with probability at least $1-\eta$,
simultaneously for every integer $t\ge0$,
\[
 C_P(n(t))\le\frac54t+15D L_*.
\]
There is no supplied time cap, discretization accuracy, or stationarity
assumption. The logarithmic factor is
$O(S\log(e(A+2))+\log(1/\eta))$, independent of diameter and horizon.
\end{proposition}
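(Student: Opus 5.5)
We follow the route sketched in the proof idea of Proposition~\ref{prop:main-flow}: LP duality, then a reduction to a finite, deterministic family of potentials, then a union bound over that family.

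\textbf{Step 1 (duality and normalization).} Write the circulation cost $C_P(n)$ as an LP in $\nu\ge n$ with balance constraints. Its dual is
\[
 C_P(n)=\max\Bigl\{\textstyle\sum_i n_ib_i:\ b\ge0,\ b_i+P_ih-h(s_i)\le1\ \forall i\Bigr\}.
\]
For fixed $h$ the best choice is $b_i=(1-P_ih+h(s_i))_+$. Two facts about the optimizing $h$ are needed.
\begin{itemize}
 \item \emph{Restriction to $P_ih-h(s_i)\le1$.} If $P_ih-h(s_i)>1$ for some row, that row contributes $b_i=0$. We show such rows can be ignored. Take an optimal primal $\nu$ of finite mass, which exists by the regenerative bound. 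Complementary slackness on the rows with $\nu_i>0$ shows that an optimal $h$ can be chosen as the value function of a stochastic shortest path / gain problem with unit costs. Such an $h$ satisfies $P_ih-h(s_i)\le1$ for all $i$: modify $h$ by taking a minimum with hitting-time potentials, which does not decrease the objective.
 \item \emph{Span bound.} With unit per-step cost, following a hitting policy gives $h(s)-h(t)\le D$, so $h$ can be shifted into $[0,D]^S$.
\end{itemize}
Hence $C_P(n)=\max_{h\in\mathcal F}\sum_i n_i\bigl(1-P_ih+h(s_i)\bigr)$, and the maximum over the polytope $\mathcal F$ is attained at a vertex. Each vertex is fixed by $S$ tight constraints chosen among the $K$ row constraints and the $2S$ box constraints, so there are at most $V_*=\binom{K+2S}{S}$ vertices.

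\textbf{Step 2 (residual identity).} For each fixed $h\in\mathcal F$,
\[
 \sum_i n_i(t)\bigl(1-P_ih+h(s_i)\bigr)=t+h(s_0)-h(s_t)+M_t(h),\qquad M_t(h)=\sum_{u<t}\bigl\{h(s_{u+1})-P_{i_u}h\bigr\}.
\]
The endpoint term is at most $D$. It therefore suffices to show that, simultaneously for all vertices $h$ and all $t$,
\[
 M_t(h)\le t/4+O(DL_*).
\]

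\textbf{Step 3 (martingale bound via a variance identity).} Use the same device as in the potential lemma of Appendix~\ref{app:exploration-proof}. The identity
\[
 \Var_{P_i}(h)=P_ih^2-h(s_i)^2+\bigl(h(s_i)-P_ih\bigr)\bigl(h(s_i)+P_ih\bigr),
\]
combined with $P_ih-h(s_i)\le1$ and $0\le h\le D$, gives
\[
 \sum_{u<t}\Var_{P_{i_u}}(h)\le -M_t(h^2)+D^2+2D\Bigl(\sum_u(1-P_{i_u}h+h(s_{i_u}))+\dots\Bigr),
\]
where the bracketed sum is itself $t+O(D)+M_t(h)$. Now apply two time-uniform Freedman/Ville bounds at fixed $\lambda\asymp1/D$ and $\lambda\asymp1/D^2$:
\begin{itemize}
 \item $M_t(h)\le\lambda\mathcal V_t+L/\lambda$;
 \item $-M_t(h^2)\le\mathcal V_t/4+O(D^2L)$, using $\Var(h^2)\le4D^2\Var(h)$.
\end{itemize}
Each bound holds with failure $\eta/(2V_*)$, so $L=L_*$. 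Solving the resulting linear system in $(\mathcal V_t,M_t(h))$ gives $M_t(h)\le t/4+cDL_*$. The constants are tuned by the choice of $\lambda$ so that the final coefficients are $5/4$ and $15$.

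\textbf{Step 4 (union and selection).} Take a union over the $V_*$ vertices and both supermartingales; the total failure is at most $\eta$. Because each bound is uniform in $t$ by Ville's inequality, no union over time is needed. On this event the bound holds at every vertex. The maximizer of the dual objective, which is linear in $h$ for the fixed $n(t)$, is selected afterwards and is a vertex, so the bound applies to it. Finally, $\ln V_*\le S\ln(e(K+2S)/S)=O(S\log(e(A+2)))$, giving the stated form of the logarithmic factor.

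\textbf{Main obstacle.} The delicate step is Step~1: justifying that the dual optimum can be taken inside the bounded polytope $\mathcal F$ independently of the realized counts. Rows with $b_i=0$ may violate $P_ih-h(s_i)\le1$, and the span bound must be proved by a hitting-time argument. If the complementary-slackness argument fails, I would instead use a truncation: replace $h$ by $\min\{h,\tilde h\}$, where $\tilde h$ is a shortest-path potential, and check that the objective does not decrease.
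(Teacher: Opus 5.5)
Your outline is correct and follows the paper's skeleton. It uses LP duality, restriction to the bias polytope $\mathcal F$ with at most $V_*$ vertices, and a per-vertex variance identity with two all-time exponential-supermartingale bounds at allowance $\eta/(2V_*)$ each. The optimizer is then selected post hoc; maximizing the linear objective at a vertex is equivalent to the paper's extension from vertices to all of $\mathcal F$ by linearity of $M_t(h)$.

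The obstacle you flag in Step 1 does not exist, and your description of it is wrong. Since $b\ge0$, every dual-feasible pair already satisfies $P_ih-h(s_i)\le1-b_i\le1$. So no row can have $P_ih-h(s_i)>1$ with $b_i=0$: such an $h$ is infeasible, and the positive part in your formula $b_i=(1-P_ih+h(s_i))_+$ is spurious. Drop the complementary-slackness/SSP construction and the unexplained ``minimum with hitting-time potentials''. Your span bullet then applies to \emph{every} feasible $h$. Under any policy, $h(s_u)-u$ is a supermartingale, and optional stopping at $\tau_y\wedge n$ along a policy with $\mathbb E_x\tau_y\le D$ gives $h(y)-h(x)\le D$. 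This is shorter than the paper's route, which replaces $h$ by a Bellman bias for reward $b$ (vanishing discount, $g\le1$, then $\spn\le gD$).

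Where your sketch does not deliver what it claims is the constant $15$ in Step 3. Keeping $h\ge0$ forces the bound $(h(s_i)-P_ih)(h(s_i)+P_ih)\le2D(1-P_ih+h(s_i))$, which puts $M_t(h)$ back into the variance bound. The resulting coupled system closes. However, with the two inequalities as you wrote them ($M_t\le\lambda\mathcal V_t+L/\lambda$ and $-M_t(h^2)\le\mathcal V_t/4+O(D^2L)$), enforcing the $t/4$ coefficient leaves an additive term near $20DL_*$, not $15DL_*$. To get below $15$ on your route you would need the sharper Bernstein exponent $\lambda^2\sigma^2/[2(1-\lambda b/3)]$.

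The paper avoids the coupling by working with $z=h-D\in[-D,0]^S$. Then $-2z(s_i)d_i\le2D$ uses only $d_i\le1$, so $V_t\le D^2+2Dt-Q_t$ with no $M_t$ term. In effect, $Q_t=M_t(h^2)-2DM_t(h)$ is controlled as a single martingale. The bounds with $\lambda=1/(16D^2)$ and $\lambda=1/(12D)$ then give $V_t\le3Dt+23D^2L_*$ and $M_t\le t/4+14DL_*$. Adding the endpoint term $D$ yields the stated $15DL_*$.
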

\begin{proof}
Finite-dimensional LP duality gives
\[
 C_P(q)=\max_{b\ge0,\,h}\{q\cdot b:
                   b_i+P_i h-h(s_i)\le1\ \text{for all }i\}.
\]
We first justify that an optimizing $b$ admits a feasible bias $h$
of span at most $D$. A dual-feasible reward $b$ has optimal gain at
most one: multiply its constraint by any stationary probability flow.
The regenerative row flow also gives $b_i\le D+1$, so rewards are finite.
For completeness, a communicating finite MDP has an optimality-equation
solution for these rewards by the following compactness argument.
Discounted values have span at most $D\max_i b_i$, by following a
target-hitting policy and then the optimal discounted policy.
Center these values and take a convergent subsequence as the discount
approaches one; the Bellman equation gives
$g+h(s)=\max_a\{b_{sa}+P_{sa}h\}$.
Moreover, a normalized original dual-feasible potential $h_0\ge0$
makes $1/\alpha+h_0$ a discounted supersolution, so the limiting
$g\le1$. Nonnegativity of rewards gives $g\ge0$.
Stopping the Bellman inequality along a true target-hitting policy
gives $h(s)\ge h(z)-gD$; thus $\operatorname{span}(h)\le gD\le D$.
This replacement bias remains dual-feasible. Normalize it to $[0,D]$.

The preceding replacement yields an optimizing bias in the deterministic
feasible-bias polytope
$\mathcal F=\{h\in[0,D]^S:P_i h-h(s_i)\le1\}$.
It is compact and full dimensional: the constant vector $D/2$ satisfies
every inequality strictly. It is described by $K+2S$ inequalities.
At each vertex, $S$ linearly independent active inequalities determine
the point uniquely. Assign one such basis to each vertex; two different
vertices cannot have the same basis. Hence the number of vertices is
at most $V_*$. Every point of $\mathcal F$ is a convex combination of
these vertices. Both facts are finite-polytope properties; the vertices
depend on the true kernel only for this analysis and are never enumerated
by the learner.

Fix one vertex of $\mathcal F$, write $z=h-D\in[-D,0]^S$, and define
\[
 V_t=\sum_{u<t}\Var_{P_{i_u}}z,\quad
 M_t=\sum_{u<t}(z(s_{u+1})-P_{i_u}z),\quad
 Q_t=\sum_{u<t}(z(s_{u+1})^2-P_{i_u}z^2).
\]
The drift $d_i=P_i z-z(s_i)$ is at most one. The exact identity
\[
 \Var_{P_i}z=P_i z^2-z(s_i)^2-2z(s_i)d_i-d_i^2
             \le P_i z^2-z(s_i)^2+2D
\]
and the fixed-vector telescope imply $V_t\le D^2+2Dt-Q_t$.
Using the bounded-increment exponential argument already proved,
with $\lambda=1/(16D^2)$ and
$\Var(z(s_{u+1})^2\mid\mathcal F_u)\le4D^2\Var_{P_{i_u}}z$,
we obtain $-Q_t\le V_t/4+16D^2L_*$, uniformly over all integer $t\ge0$, outside
probability $e^{-L_*}$. Hence
\[
 V_t\le(8/3)Dt+(4/3)D^2+(64/3)D^2L_*
        \le3Dt+23D^2L_*.
\]
With $\lambda=1/(12D)$, a second event gives
\[
 M_t\le V_t/(12D)+12DL_*\le t/4+14DL_*.
\]
The all-time assertion follows by applying the nonnegative
supermartingale maximal inequality, or by increasing finite horizons
with the same threshold and taking the union of the nested crossing
events. Each crossing costs at most $e^{-L_*}$; both processes and all
vertices together cost at most $\eta$.

For each fixed realized path and time, $M_t(h)$ is linear in $h$.
Therefore its upper bound at every vertex extends by convex combination
to every $h\in\mathcal F$. Selecting the optimizing bias
after seeing counts is now valid on this one simultaneous event.
Duality and the endpoint telescope give
\[
 C_P(n(t))\le t+h(s_0)-h(s_t)+M_t(h)
             \le\tfrac54t+D+14DL_*
             \le\tfrac54t+15DL_*.
\]
There is no grid approximation term. Finally
$\binom{K+2S}{S}\le[e(K+2S)/S]^S=[e(A+2)]^S$.
\end{proof}

\subsection{Pilot Quality and Quota Collection}
\paragraph{Pilot allocation.}
Run inverse counts for $n$ steps and freeze
$w_i=(c_i+1)/(n+K)$. At the endpoint let
$\beta=\max_{x\in\Omega(P)}\sum_i x_i/(c_i+1)$.
Every preceding optimistic gain dominates $\beta$.
If $n\ge2a(n,\eta)$ and $n\ge K$, the harmonic sum gives
\[
 G_P(w)=(n+K)\beta\le32K(1+\ln(n+1)),\qquad
 1/w_{\min}\le n+K.
\]
Let $\mathcal C_P(v)$ be minimum balanced-flow mass dominating $v$.
It is homogeneous and subadditive. Execute a row once and return to its
source to obtain $\mathcal C_P(\mathbf1)\le K(D+1)$.
Proposition~\ref{rc:prop:vertex-history} gives
\[
 \mathcal C_P(w)\le
 \frac{5n/4+15DL_*+K(D+1)}{n+K}\le\frac32
\]
when $n\ge60DL_*+4K(D+1)$, where
$L_* =\max\{1,\ln[2\binom{K+2S}{S}/\eta]\}$.
As $A_{\rm mix}\ge KD$, all pilot conditions hold at
$n=\widetilde O(A_{\rm mix})$. None is tested by the learner.

\paragraph{Statistical demand and actual collection.}
For pilot-fixed $M\ge2/w_{\min}$ and $q_i=\lceil Mw_i\rceil$,
\[
 \mathcal C_P(q)\le3M/2+K(D+1).
\]
If $m$ quota units remain, a balanced flow dominating $q$ gives unfinished
reward gain at least $m/\mathcal C_P(q)$. Summing productive visits
therefore gives $R\le\mathcal C_P(q)H_{\sum_iq_i}$.
By \eqref{eq:mixed-invocation}, any cap exceeding
\[
 8[3M/2+K(D+1)](1+\ln(M+K))+a(\mathrm{cap},\eta)
\]
forces completion. Otherwise the prefix inequality at the cap is a
contradiction. Surplus interactions and rounding are both charged.

The statistical lemma without pilot-based projection in Appendix~\ref{app:span-adaptive} is
sound conditional on any finite pilot and positive frozen weights.
Its sufficient mass, using auxiliary bias span $C_0\le DG_P(w)$, is
\[
 \widetilde O\!\left(
 G_P(w)(B_*+1)/\epsilon^2+DG_P(w)/\epsilon+1/w_{\min}\right).
\]
Thus a good pilot has sufficient mass
$\widetilde O(n+K(B_*+1)/\epsilon^2+KD/\epsilon)$.
Use exactly its discount grid, common confidence parameter and corrected
numerical endpoints, with complete rounded fallback. Virtual fixed row
prefixes are defined before intersecting collection success.

\subsection{Quantile concentration}\label{app:quantile-concentration}
This calculation is used with the finite source-absorption grids in
Appendices~\ref{app:auxiliary-coverage} and~\ref{app:span-adaptive}.
Condition on a finite pilot history. An equivalent construction assigns
independent infinite successor streams to the rows after that history,
revealing the next unused entry when a row is visited. The Markov property
gives the original interaction law. Frozen weights and prescribed mass
fix every row-prefix length; unused entries define virtual prefixes even
if quotas are never completed. Thus conditioning here is on the pilot.
For each row, consider $n$ independent fresh successors and a finite collection of random vectors independent of
that row's samples. Let $R$ be the number of row--vector pairs, including
all coordinate-level truncations, and let $\ell\ge1$. After a union,
the bounds below hold on this collection and transfer to fitted targets
within sup distance $1/N$. Set $A_0=16\ell$ and
$\beta=A_0/\max\{n-1,1\}$.
They imply one-sided validity of the quantile functionals without pilot-based projection and,
for each covered fitted target $h$ of span $H$,
\[
 0\le Ph-F^-(h)\le
 16\sqrt{\frac{A_0\operatorname{Var}_P(h)}n}
       +\frac{8A_0H}n+\frac{32}N,
\]
with the reflected statement for $F^+$ at its covered targets.
The grid arguments supply the
required independence and approximation; no successful-collection event
is used for conditioning.

For one independent vector $X$, write $B=\spn(X)$, $v=\Var_P(X)$,
$\widehat v=\Var_{\widehat p}(X)$ and $b=\ell/n$. Scalar Bernstein for
$X$ and $(X-PX)^2$ gives, with failure at most $4e^{-\ell}$,
\[
 |(\widehat p-P)X|\le\sqrt{2vb}+\tfrac23Bb,\qquad
 |\widehat p(X-PX)^2-v|\le\sqrt{2B^2vb}+\tfrac23B^2b.
\]
The second application uses variance at most $B^2v$ and range at most $B^2$.
Subtracting the squared empirical mean error yields
$\widehat v\le2v+2B^2b$. For $b\le1/16$, the inequalities
$\sqrt{2B^2vb}\le v/4+2B^2b$ and
$((\widehat p-P)X)^2\le4vb+(8/9)B^2b^2$ give
$\widehat v\ge v/2-3B^2b$. Hence
$|(\widehat p-P)X|\le2\sqrt{b\widehat v}+5Bb$.
A union over $R$ row--vector pairs costs at most $4Re^{-\ell}$.

For $b\le1/16$, standard deviations change by at most sup distance, means by sup distance
and spans by twice that distance. Transfer to the covered targets within
$1/N$, including their clipped transforms, to obtain
\begin{align*}
 |(\widehat p-P)h|&\le2\sqrt{b\Var_P(h)}+b\spn(h)+3/N,\\
 \sqrt{\Var_{\widehat p}(h)}&\le2\sqrt{\Var_P(h)}+2\spn(h)\sqrt b+4/N,\\
 |(\widehat p-P)h|&\le2\sqrt{b\Var_{\widehat p}(h)}+5b\spn(h)+4/N.
\end{align*}
If $\beta\le1$, then $n\ge A_0+1$, $b\le1/16$ and $\beta\ge16b$.
Half the sum of the two penalty branches dominates
$2\sqrt{b\Var_{\widehat p}(z)}+5b\spn(z)$, and their maximum is at
least that half-sum. The last concentration bound and the $5/N$ subtraction
give $F^-(h)\le Ph$. The minimum floor cannot violate
this inequality. The case $\beta>1$ is immediate; reflection gives the
upper bound.

\paragraph{Quantile clipping and row error.}
For $\beta\le1/2$, let $c$ be the threshold and $J_>=\{h>c\}$.
Maximality gives $\widehat p(J_>)<\beta$. Put
$d=\widehat p(h-\min(h,c))$. If $c\ge\widehat p h$, then
$d\le\sqrt{\beta\widehat v}$ by Cauchy--Schwarz. Otherwise
$\widehat p h-c\le d$, so $d\le\sqrt{\beta\widehat v}+\beta d$.
Thus $d\le2\sqrt{\beta\widehat v}$. Clipping cannot increase variance,
by the independent-copy squared-difference formula, and therefore
\[
 0\le\widehat p h-F^-(h)
 \le3\sqrt{\beta\widehat v}+\beta\spn(h)+5/N.
\]
Substitute the preceding concentration bounds, $b\le\beta/16$ and
$\spn(h)\le H$ to get
$Ph-F^-(h)\le7\sqrt{\beta\Var_P(h)}+3\beta H+20/N$.
Since $\beta=A_0/(n-1)\le2A_0/n$, this implies the displayed row-error bound above.
For $\beta>1/2$ use the trivial range bound $H\le4A_0H/n$ for $n\ge2$;
the $n=1$ case is immediate. Reflection finishes the upper sign.

\subsection{Nonuniform estimation}
\label{app:auxiliary-coverage}

This section establishes the fixed-mass statistical guarantee used in
the stopping proof: quality at a prescribed post-pilot sample mass.
Appendix~\ref{app:span-adaptive} turns it into an observable stopping rule.
The planner is the pessimistic quantile operator without pilot-based projection of
\citet{zzc}, with conservative confidence and numerical constants.
In particular, \citet[Theorem~4]{zc} already obtain the uniform-generative
oracle rate below using span-constrained planning and parameter selection.

\subsubsection{Algorithm and theorem}
Let $P$ be communicating, $r\in[0,1]^{SA}$ be known, and let positive
rational weights $w$, summing to one, and an integer $M\ge1$ be fixed
before the fresh observations. Set $n_i=\lceil Mw_i\rceil$,
$N=\sum_i n_i$, $a=(M+1)^{-1}$, $\gamma=1-a$, and
\begin{align*}
 L&=\lceil\log_2(4M)\rceil,\qquad J=\lceil128N/a^2\rceil,\\
 \alpha&=16\left\lceil\log_2
 \frac{32SA(S+1)((J+1)^2+1)(L+1)}{\eta}\right\rceil.
\end{align*}
If $M<2/w_{\min}$, return any policy and lower bound zero. Otherwise,
use the chronological first $n_i$ fresh samples from each row. Let $F_i$
be the lower quantile functional of Appendix~\ref{app:nonuniform-statistic}, with penalty parameter $\alpha$ and
additive correction $5/N$. Compute the fixed point of
\[
 (TV)_s=\max_b\{r_{sb}+\gamma F_{sb}(V)\}
\]
and return its discounted-greedy policy. Numerical acceptance is specified
below. No true kernel, diameter, bias span, or auxiliary potential is an input.

For analysis only, put $t_i=\alpha/n_i$. Consider any scalar $\kappa$ and
potential $z\in\mathbb R^S$ satisfying the familiar Bellman supersolution
inequalities for the auxiliary reward $t$:
\begin{equation}\label{eq:auxiliary-witness}
 t_i+P_i z\le\kappa+z(s(i))\quad\text{for every row }i.
\end{equation}
Write $c_z=\operatorname{span}(z)$.

\begin{theorem}[Fixed-prefix oracle bound]\label{thm:auxiliary-oracle}
With probability at least $1-\eta$, simultaneously for every deterministic
stationary comparator $\pi$ with constant gain $g^\pi$ and a finite
Poisson bias of span $B_\pi$, every witness
\eqref{eq:auxiliary-witness}, and every initial state $s$, the exact output obeys
\begin{equation}\label{eq:auxiliary-oracle}
 g^{\widehat\pi}(s)\ge g^\pi
       -512\sqrt{(B_\pi+1)\kappa}-32768c_z.
\end{equation}
For the numerically certified output, subtract $4\nu+q$ on the right,
where $\nu$ bounds the sup-norm fixed-point error and $q$ is the largest
outward row-enclosure width. The same event permits taking the best
comparator and the best witness; neither is computed by the learner.
\end{theorem}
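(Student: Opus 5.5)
The plan has three parts: build one confidence event, run a stopped comparison against the comparator, and absorb the transient term via \eqref{eq:main-transient-absorption}.

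\textbf{Confidence event.} We build a single event of probability at least $1-\eta$ on which the row-error bound of Appendix~\ref{app:quantile-concentration} holds at the actual fitted fixed point $V$ and at a finite family of auxiliary vectors. The fixed point is fitted to the same samples, so we use source-state decoupling. For each state $s$ and scalar $u$ on the grid of size $J$, we replace the Bellman output at $s$ by $u$, and add a second scalar cap for the other outputs. The resulting virtual fixed points, together with their reflected and truncated versions, do not use the outgoing samples of $s$. This gives independence rowwise. A union over states, grid pairs, signs and the $L+1$ dyadic cap heights accounts for the logarithm in $\alpha$. Nonexpansiveness transfers the bounds to the actual fixed point within $1/N$.

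On this event, every row satisfies
\[
 0\le P_iv-F_i(v)\le 16\sqrt{t_i\Var_{P_i}(v)}+8t_i\spn(v)+32/N
\]
at every covered target $v$. Since $t_i=\alpha/n_i\ge A_0/n_i$, this matches the displayed row error. The same event serves every comparator $\pi$ and every witness $(\kappa,z)$, because these enter only the analysis.

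\textbf{Stopped comparison.} Fix $\pi$ with bias $h^\pi$ of span $B_\pi$. Let $H=\max\{1,4B_\pi\}$ and let $U=U^\pi_\gamma$ be its true discounted value, capped at $\min U+H$; then $\spn(U)\le 2B_\pi+1$, so $\Delta=H-\spn(U)\ge H/2$ after adjusting constants. By monotonicity of $F$, the fixed point $V$ dominates the virtual pessimistic value obtained by running $\pi$ through $F$. Start at $\arg\min V$ and stop at the first hitting time $\tau$ of the cap. Summing the per-row deficits $P_iU-F_i(U)$ along the discounted occupation $d$ before $\tau$ gives the following, writing $b=\mathbb E\gamma^\tau$ and $\lambda=a/[\gamma(1-b)]$:
\begin{itemize}
\item the statistical loss, at most $\sum_i d_i\bigl(16\sqrt{t_i\Var_{P_i}U}+8t_iH\bigr)$;
\item the gain credit, reflecting that $(1-\gamma)\min V\approx g^\pi$;
\item the retained negative term $-ab\Delta/(1-b)$ from reaching the cap.
\end{itemize}
Cauchy--Schwarz together with the total-variance bound $\sum_i d_i\Var_{P_i}U\lesssim H$ bounds the square-root part by $u\sqrt{H\,d^\top t}$.

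The witness \eqref{eq:auxiliary-witness}, summed against the occupation $d$, telescopes: the terminal imbalance at the stop contributes the span term, giving $d^\top t\le \kappa+\lambda c_z$. Hence the statistical loss is at most $u\sqrt{H\kappa}+u\sqrt{H\lambda c_z}+O(H\kappa+\lambda c_z H)$. The linear pieces are absorbed either into $-\Delta\lambda/2$ or, in the regime where $H\kappa\le\sqrt{H\kappa}$, into the square-root term. In the opposite regime the claimed bound exceeds one and is trivial. The identity \eqref{eq:main-transient-absorption} then turns $u\sqrt{H\lambda c_z}-\Delta\lambda/2$ into $O(c_z)$.

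\textbf{Transfer to the output.} We pass from $(1-\gamma)\min V$ to $g^{\widehat\pi}(s)$ for every $s$ using the exact-fixed-point telescope of Section~\ref{sec:algorithm}, with $F\le P$ holding on the event:
\[
 r_{\widehat\pi}+P_{\widehat\pi}V-V\ge(1-\gamma)\min V.
\]
The discount bias $|(1-\gamma)U^\pi_\gamma-g^\pi|\le aB_\pi$ with $a=1/(M+1)$ is dominated by $\sqrt{(B_\pi+1)\kappa}$, because $\kappa\ge\alpha/M$. For the numerically certified output, the enclosure argument of Appendix~\ref{app:nonuniform-statistic} loses $2\nu$ in the residual and another $2\nu+q$ in the row evaluations.

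\textbf{Main obstacle.} The delicate step is the stopped comparison. It requires extracting the negative term $-ab\Delta/(1-b)$ and the terminal-imbalance term $\lambda c_z$ with the same $\lambda$, so that the absorption works. It must also handle the case where the cap is never reached ($b=0$). I would first treat $b$ near $1$ and $b$ near $0$ separately, then track constants to reach $512$ and $32768$.
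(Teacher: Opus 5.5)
Your outline of the confidence event and of the final telescope to $g^{\widehat\pi}(s)$ is close to the paper. The gap is in the stopped comparison, which you rightly call the crux: you cap the true value $U$ and take the row deficits $P_iU-F_i(U)$ at $U$.

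\textbf{Deficits at $U$ give no occupation-weighted bound.} Because $F$ is monotone and shift-equivariant but not linear, deficits at $U$ yield no recursion along the true-kernel occupation. From $V\ge r_\pi+\gamma F_\pi(V)$ and $U=r_\pi+\gamma P_\pi U$, the only available link is $F_\pi(V)\ge F_\pi(U)-\max(U-V)$. This gives the sup-norm bound $a\max(U-V)\le\gamma\max_s e_{s\pi(s)}(U)$. That bound involves $\max_s t_{s\pi(s)}$, which can be of order $\alpha/(Mw_{\min})$. The occupation $d$ and the witness \eqref{eq:auxiliary-witness} never enter, so the nonuniform $\kappa$ is lost. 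Moreover, the vectors $U^\pi$, and your $\pi$-specific evaluation fixed point, are not covered simultaneously over all $A^S$ comparators by an event whose union runs only over states, grid pairs, signs and cap heights.

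The paper instead takes the deficit at the fitted vector. Off the cap, $V=TV\ge r_\pi+\gamma F_\pi(V)$, so $W=U-V$ satisfies $W\le\gamma P_\pi W+\gamma e$, with $e_i=P_ih-F_i(h)$ and $h=V-\min V$. This inequality is linear under the true kernel. It produces $E_d=d^\top e$, lets $d^\top t\le\kappa+\lambda c_z$ enter, and is why concentration must hold at fitted fixed points.

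\textbf{The cap belongs on the pessimistic value.} The fitted span is not controlled by $B_\pi$. The paper therefore uses the virtual fixed point $V_H=C_HTV_H$, with $C_Hx=\min\{x,\min x+H\}$, which is one of the $L+1$ heights covered by the event. This gives $\spn(h)\le H$ in \eqref{eq:auxiliary-row}, and $V_H\le V_\infty$ passes the bound on $a\min V_H$ to the computed output. Since $\spn(U)\le 2B_\pi<H$, your cap on $U$ is inactive: $\tau=\infty$, $b=0$, and no negative term appears. In the paper the negative term comes from $W\le W(s_0)-\Delta$ on $\{h=H\}$. There $V$ has been pushed down to $\min V+H$, while $U$ exceeds $U(s_0)$ by at most $\spn(U)$.

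\textbf{The variance step is self-referential.} It concerns the fitted $h$, not $U$, and gives $d^\top\mathbb V_Ph\le 8H+2HE_d$. This follows from the flow identities for $h$ and $h^2$, with $h(s_0)=0$ and $h=H$ at the stop, and the off-cap drift $P_\pi h-h\le(g+aH)/\gamma+e$. It must be closed by absorbing $16\sqrt{2H(d^\top t)E_d}\le E_d/2+256H\,d^\top t$. A fixed total-variance bound for $U$ is not the relevant quantity.

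\textbf{Minor points.} The bound $\spn(U)\le 2B_\pi$ holds exactly; with your $2B_\pi+1$, the claim $\Delta\ge H/2$ fails at $H=4B_\pi$. Also, the available lower bound is $\kappa\ge\alpha/N\ge 2\alpha/(3M)$, not $\alpha/M$.
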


Two witness choices clarify the result. First,
$\kappa=\max_i t_i,z=0$ is always feasible. Equal row counts $n_i=n$
therefore give
\[
 g^{\widehat\pi}(s)\ge
 \sup_\pi\left\{g^\pi-512\sqrt{(B_\pi+1)\alpha/n}\right\},
\]
up to numerical error, with no diameter or policy-hitting-radius condition.
Second, let $A_t=\max_{x\in\Omega(P)}x^\top t$. An optimal Bellman bias
for reward $t$ supplies a witness with $\kappa=A_t$ and $c_z\le DA_t$.
Indeed, $t\ge0$ and \eqref{eq:auxiliary-witness}, stopped along a controlled
path from $x$ to $y$, imply $z(y)-z(x)\le A_t\mathbb E_x\tau_y$.
Bounded $z$ and integrable hitting justify stopping first at $\tau_y\wedge n$
and taking limits. Optimizing navigation and varying $x,y$ gives the bound.
Thus the theorem also recovers the diameter-dependent transport guarantee,
but can be strictly sharper. The supersolution inequality itself is a
standard average-reward tool; the new step is its use with the retained
negative clipping term below.

\subsubsection{Statistical event and virtual clipping}
For $H_j=2^j$, $j=0,\ldots,L$, define
$V_j=C_{H_j}TV_j$, where $C_Hx=\min\{x,\min(x)+H\}$ coordinatewise.
Quantile monotonicity, shift equivariance and range preservation
\citep[Lemma~B.2 and Appendix~E.1]{zzc} make these maps monotone
$\gamma$-contractions on $[0,1/a]^S$. Consequently
$V_j\le V_\infty$, where $TV_\infty=V_\infty$; the largest cap is at least
$M+1$, so its fixed point equals $V_\infty$. Common $\alpha,\gamma$ across
the virtual caps are essential. Only $V_\infty$ is actually computed.

Here global clipping requires a two-parameter decoupling. For a source
state $s$, replace its output by a deterministic $u$, and every other
output by $\min\{(TX)_j,l\}$, with $0\le u\le l\le1/a$. The resulting
fixed point is independent of all samples originating at $s$. Choosing
$u=V_j(s)$ and $l=\min\{\min(V_j)+H_j,1/a\}$ recovers $V_j$.
Round both parameters upward on $\{d/(aJ):0\le d\le J\}$.
Ordering is preserved and contraction bounds the fixed-point error by
$1/(a^2J)\le1/(128N)$. Include the vector and its $S$ coordinate-level
truncations, whose approximation error is at most $1/(64N)$.

There are at most $SA(S+1)(J+1)^2(L+1)$ row--vector pairs.
Conditional scalar Bernstein bounds for a vector and its squared centered
value, as derived in Appendix~\ref{app:quantile-concentration}, and a union bound
with $\ell=\alpha/16$ have failure at most $\eta$. In particular, with
$x=\ell/n\le1/16$, covered fitted vectors satisfy
\begin{align*}
 |(\widehat P-P)h|&\le2\sqrt{x\mathbb V_P h}+x\operatorname{span}(h)+3/N,\\
 \sqrt{\mathbb V_{\widehat P}h}&\le2\sqrt{\mathbb V_Ph}
                   +2\operatorname{span}(h)\sqrt{x}+4/N,\\
 |(\widehat P-P)h|&\le2\sqrt{x\mathbb V_{\widehat P}h}
                   +5x\operatorname{span}(h)+4/N.
\end{align*}
These apply also to the covered truncation. If
$\beta=\alpha/(n-1)\le1$, then $\beta\ge16x$, and the maximum of the
two penalty branches dominates $2\sqrt{x\mathbb V_{\widehat P}h}
+5x\operatorname{span}(h)$. The $5/N$ correction covers the remainder.
If $\beta>1$, the operator returns the minimum directly. Its quantile
truncation loses at most $2\sqrt{\beta\mathbb V_{\widehat P}h}$ when
$\beta\le1/2$; the larger-$\beta$ case follows from range preservation.
The calculation in Appendix~\ref{app:quantile-concentration} therefore yields,
at every exact virtual fixed point and with $h=V_j-\min V_j$,
\begin{equation}\label{eq:auxiliary-row}
 0\le e_i:=P_ih-F_i(h)
 \le16\sqrt{t_i\mathbb V_{P_i}h}+8H_jt_i+32/N.
\end{equation}

\subsubsection{Stopped comparison}
Fix a comparator of bias span $B$. Its true discounted value $U$ satisfies
$\operatorname{span}(U)\le2B$ and $|aU_s-g^\pi|\le aB$ by applying
the linear discounted Bellman operator to $g^\pi/a+h^\pi$.
Choose a virtual $H\ge\max\{1,4B\}$, and put
$V=C_HTV$, $h=V-\min V$, $g=a\min V$, and
$\Delta=H-\operatorname{span}(U)\ge H/2$.
Start $\pi$ at a minimizer $s_0$ of $h$, stopping at the first visit
$\tau$ to $\{h=H\}$. Infinite $\tau$ is allowed, with $\gamma^\infty=0$.
Define
\[
 b=\mathbb E\gamma^\tau,\quad
 d_i=\frac{a}{1-b}\mathbb E\sum_{t<\tau}\gamma^t
             \mathbf1\{(s_t,\pi(s_t))=i\},\quad
 \lambda=\frac{a}{\gamma(1-b)}.
\]
Then $d$ is a probability occupation, $b\le\gamma<1$, and
$dP-d_S=\lambda(\xi-\delta_{s_0})$, where
$\xi=(1-b)d_S+\nu^{\rm stop}$ and
$\nu^{\rm stop}_s=\mathbb E[\gamma^\tau\mathbf1\{s_\tau=s\}]$.
Multiplying \eqref{eq:auxiliary-witness} by $d$ gives
\begin{equation}\label{eq:auxiliary-transport}
 d^\top t\le\kappa+\lambda(z(s_0)-\xi^\top z)
             \le\kappa+\lambda c_z.
\end{equation}

Off the cap, $W=U-V$ satisfies $W\le\gamma P_\pi W+\gamma e$;
on it, $W\le W(s_0)-\Delta$. Nonnegativity of $W$ is not needed,
including for a suboptimal comparator. Discounted stopping and bounded
terminal remainder yield
\begin{equation}\label{eq:auxiliary-stop}
 g^\pi-g\le aB+\gamma E_d-\frac{ab\Delta}{1-b},\qquad E_d=d^\top e.
\end{equation}
Assume $aH\le1$. Off the cap,
$P_\pi h-h\le(g+aH)/\gamma+e$. The flow identities applied to $h,h^2$
and $h(s_0)=0$, $h=H$ on the cap imply
\[
 V_d:=d^\top\mathbb V_Ph
 \le 2H(g+aH)/\gamma+2HE_d\le8H+2HE_d.
\]
For clarity, writing $v=P_\pi h-h$, expand the variance as
$Ph^2-h^2-2hv-v^2$, use $-2hv\le2Hv_-$ and
$v_-=v_+-v$. The remaining flow terms are
$\tfrac a\gamma d^\top(h^2-2Hh)-\lambda bH^2\le0$.
This proves the displayed bound without treating $d$ as stationary.

Cauchy--Schwarz in \eqref{eq:auxiliary-row} and absorption of
$16\sqrt{2H(d^\top t)E_d}\le E_d/2+256H(d^\top t)$ give
\[
 E_d\le u\sqrt{Hd^\top t}+vHd^\top t+2\rho,
 \qquad u=64\sqrt2,\quad v=528,\quad\rho=32/N.
\]
Substitute \eqref{eq:auxiliary-transport} into
\eqref{eq:auxiliary-stop}, retain its negative term, and use
$ab/(1-b)=\gamma\lambda-a$. If $vc_z\le1/4$, the terms depending
on $\lambda$ are bounded by
\[
 u\sqrt{Hc_z\lambda}-\Delta\lambda/2
 \le\frac{u^2Hc_z}{2\Delta}\le8192c_z.
\]
It follows that
\begin{equation}\label{eq:auxiliary-intermediate}
 g^\pi-g\le\tfrac54aH+u\sqrt{H\kappa}
                    +vH\kappa+8192c_z+64/N.
\end{equation}

\subsubsection{Policy guarantee}
A stationary average of \eqref{eq:auxiliary-witness} shows
$\kappa\ge A_t\ge\alpha/N$; pointwise it gives $t_i\le\kappa+c_z$.
In the early branch, some $n_i\le2$, so $\kappa+c_z\ge\alpha/2$.
Either $32768c_z\ge1$ or $512\sqrt\kappa>1$, making the theorem trivial.
Otherwise $M\ge2/w_{\min}\ge2SA$, hence $N\le3M/2$ and
$1/M\le3\kappa/(2\alpha)\le3\kappa/2$.

The claim is again trivial if $512\sqrt{(B+1)\kappa}\ge1$ or
$32768c_z\ge1$. In the remaining regime, $vc_z<1/4$ and
$M\ge8(B+1)$: otherwise $(B+1)\kappa>\alpha/12$ contradicts that regime.
Choose the first dyadic $H\ge\max\{1,4B\}$, so $H\le8(B+1)\le M$.
With $x=\sqrt{(B+1)\kappa}<1/512$,
\eqref{eq:auxiliary-intermediate} becomes
\[
 g^\pi-g\le256x+4303x^2+8192c_z
             \le512x+32768c_z.
\]
Since $V_H\le V_\infty$, its minimum is inherited by the computed fixed
point. At $V_\infty$, the pessimistic discounted-greedy action and
\eqref{eq:auxiliary-row} yield
$r_{\widehat\pi}+P_{\widehat\pi}h-h\ge a\min V_\infty$.
Telescoping bounded $h$ proves the gain statement from every initial state.
The argument holds for each comparator and witness without a union bound.

For finite computation, let $W$ have certified distance at most $\nu$ from
$V_\infty$, and let lower row enclosures have width at most $\zeta$.
Choose the action maximizing $r+\gamma F_{\rm lo}(W)$.
Its discounted residual is at least
$TV_\infty-V_\infty-(1+\gamma)\nu-\gamma\zeta$.
The enclosure width and the exact functional's minimum floor give
\[
 F_{\rm lo}(W)\ge F(W)-\zeta\ge\min W-\zeta
 \ge\min V_\infty-\nu-\zeta.
\]
Adding $aF_{\rm lo}(W)$ to the discounted residual and using
$a+\gamma=1$ therefore yields
\[
\begin{aligned}
 r_{\widehat\pi}+F_{\rm lo}(W)-W
 &=r_{\widehat\pi}+\gamma F_{\rm lo}(W)-W+aF_{\rm lo}(W)\\
 &\ge-(1+\gamma)\nu-\gamma\zeta
      +a(\min V_\infty-\nu-\zeta)\\
 &=a\min V_\infty-2\nu-\zeta.
\end{aligned}
\]
One-sidedness at the exact fixed point and Lipschitz continuity give
$F_{\rm lo}(W)\le PW+2\nu$. Thus subtracting $2\nu$ from the computed
residual is safe, and its minimum satisfies
\[
 L_{\rm raw}\ge a\min V_\infty-4\nu-\zeta.
\]
The resulting numerical loss is at most $4\nu+\zeta$.

For a rational normalized proposal $h$, enclose $Th-h$ coordinatewise
by $[l_s,u_s]$, put $c=(\min l+\max u)/2$ and
$R=(\max u-\min l)/2$. The shifted vector $W=h+c/a$ has residual
at most $R$, so $\nu=R/a$ is certified. A complete fallback iterates $T$
with downward dyadic rounding of mesh $d\le a^2\nu/128$ and row error
at most $d$. After
$\lceil\lceil\log_2(32/(a^2\nu))\rceil/a\rceil$ iterations its distance
is at most $3a\nu/64$, and the final residual check passes.
Re-rounding bounds intermediate bit lengths, making this fallback
polynomial in the \emph{raw} data size, $S,A$, reward bit length and
$\log(1/\nu)$.

\subsection{Stopping with unknown bias span}
\label{app:span-adaptive}
This section proves when the nonuniform statistical test becomes narrow
enough to stop. It uses the fixed-mass result of
Appendix~\ref{app:auxiliary-coverage}, independently of how the pilot was obtained. Let $B_*=\spn(h^*)$ for an
optimal \emph{Bellman} bias, satisfying
$g^*+h^*_s=\max_a\{r_{sa}+P_{sa}h^*\}$. The discount grid, row parameter and outward
intervals are specified in Appendix~\ref{app:nonuniform-statistic}.

\subsubsection{Stopping lemma}
Fix any finite pilot history, its positive frozen weights $w$, and a
prescribed batch $M\ge2/w_{\min}$. Write $N=\sum_i\lceil Mw_i\rceil$,
$t_i=\alpha/n_i$, and consider an analysis-only witness
\[
 t_i+P_i z\le\kappa+z_s,\qquad C=\spn(z).
\]
Every stationary occupation gives $\kappa\ge\alpha/N$.
Let $b=B_*+1$. We show that the observable rule stops at this batch if
\begin{equation}\label{eq:span-sufficient}
 \kappa\le\frac{\epsilon^2}{2^{30}b},\qquad
 C\le\frac{\epsilon}{2^{26}}.
\end{equation}

The source-absorption construction of Appendix~\ref{app:auxiliary-coverage}
applies to both quantile signs and the finite discount grid. For the upper
operator, replace a source output by $u$ and cap other outputs at $l$;
at its unclipped fixed point take $u=V_s,l=1/(1-\gamma)$.
Both parameters lie on the same bounded rectangle as before. Rounding
upward incurs at most $1/((1-\gamma)^2J)\le1/(128N)$ error, since
$J\ge128N/a_{\min}^2$. Include the negative vectors and their
coordinate-level truncations for the reflected sign. The factor two for
signs and the additional discount count are covered by the displayed
$64(L_g+2)$ factor in $\alpha$.
At every exact upper fixed point, $h=V-\min V$, $H=\spn(V)$, the event gives
\begin{equation}\label{eq:upper-row-span}
 0\le e_i:=F_i^+(h)-P_i h
 \le16\sqrt{t_i\mathbb V_{P_i}h}+8Ht_i+\rho,
 \qquad\rho=32/N.
\end{equation}
Here $H$ is the \emph{actual} fitted span: transferring scalar Bernstein
bounds from the grid uses the target vector's own span plus the explicit
rounding slack. The lower virtual
points simultaneously satisfy Theorem~\ref{thm:auxiliary-oracle} with the
same larger $\alpha$.

\subsubsection{Optimistic value span}
Fix a grid discount $\gamma\ge1/2$, put $a=1-\gamma$, and let $V$ be the
optimistic fixed point. Its invariant cube is $[0,1/a]^S$, so $aH\le1$.
Let $\pi_+$ be discounted greedy and $d$ its normalized infinite discounted
occupation under the \emph{true} kernel from an arbitrary initial state $s$.
The flow identity gives
\[
 dP-d_S=\frac a\gamma(d_S-\delta_s),\qquad
 d^\top t\le\kappa+\frac a\gamma C=:\kappa'.
\]
Write $g=a\min V$ and $E=d^\top e$. On greedy rows,
$P_{\pi_+}h-h=(g-r+ah)/\gamma-e\ge-1/\gamma-e$.
Expanding conditional variance and using the flow identity,
\begin{align*}
 d^\top\mathbb V_Ph
 &=d^\top(Ph^2-h^2)-2d^\top[h(Ph-h)]-d^\top(Ph-h)^2\\
 &\le (a/\gamma)H^2+2H(1/\gamma+E)\le6H+2HE.
\end{align*}
Combining this with \eqref{eq:upper-row-span} yields
$E\le16\sqrt{\kappa'(6H+2HE)}+8H\kappa'+\rho$.
Use $\sqrt{x+y}\le\sqrt x+\sqrt y$ and
$16\sqrt{2H\kappa'E}\le E/2+256H\kappa'$ to obtain
\begin{equation}\label{eq:upper-energy-bound}
 E\le u\sqrt{H\kappa'}+vH\kappa'+2\rho,
 \qquad u=64\sqrt2,\quad v=528.
\end{equation}
The constants also bound the sharper coefficient $32\sqrt6$.

Let $U_\gamma$ be the true discounted optimal value. Since
$T_PV\le V$, monotone contraction gives $V\ge U_\gamma$.
Greediness and optimality respectively give
$V-U_\gamma\le\gamma P_{\pi_+}(V-U_\gamma)+\gamma e$.
Iterating this bounded inequality and applying \eqref{eq:upper-energy-bound}
shows $\max(V-U_\gamma)\le(\gamma/a)\sup_s E_s$.
The optimal Bellman equation, applied to $h^*+g^*/a$, bounds
$\spn(U_\gamma)\le2B_*$. Nonnegativity of $V-U_\gamma$ then implies
\begin{equation}\label{eq:upper-bootstrap}
 H\le2B_*+\sqrt{u^2\gamma T/a}\sqrt H+vTH+2\gamma\rho/a,
 \qquad T=\gamma\kappa/a+C.
\end{equation}
No small-span assumption was made to derive this inequality.

Since $N\le3M/2$, conditions~\eqref{eq:span-sufficient} imply
$M\ge2^{29}b/\epsilon^2$. The dyadic grid therefore contains
\[
 \frac{\epsilon}{128b}\le a\le\frac{\epsilon}{64b}.
\]
For this point,
$T\le9\epsilon/2^{26}$,
$u^2\gamma T/a\le9b/64$, $vT<1/1000$, and
$2\gamma\rho/a\le b/131072$.
If $H>4b$, dividing \eqref{eq:upper-bootstrap} by $H$ bounds its right
side by less than $1/2+3/16+1/1000+1/(4\cdot131072)<1$, a contradiction.
Consequently $H\le4b$.

For any true recurrent-class occupation $x$ of $\pi_+$, stationary flow
gives $x^\top t\le\kappa$ and
$x^\top\mathbb V_Ph\le4H+2Hx^\top e$.
The same absorption gives
$x^\top e\le u\sqrt{H\kappa}+vH\kappa+2\rho$.
Averaging the discounted equation yields
$g= x^\top r-a x^\top h+\gamma x^\top e\le g^*+x^\top e$.
Thus the exact upper endpoint obeys
\[
 a\max V-g^*\le aH+u\sqrt{H\kappa}+vH\kappa+2\rho
 \le\epsilon\left(\frac1{16}+\frac{2u}{2^{15}}
                         +\frac{2112+64}{2^{30}}\right)<\epsilon/8.
\]
The lower theorem, using a Bellman-greedy optimal comparator, gives
$g^*-L\le512\sqrt{b\kappa}+32768C
\le\epsilon/64+\epsilon/2048<\epsilon/32$.
The lower numerical endpoint loses at most $5\nu$, and an accepted upper
endpoint adds at most $2a\nu\le\nu$, where $\nu=\epsilon/128$.
Their total width is therefore less than $\epsilon$.
The complete rounded fallback in Appendix~\ref{app:auxiliary-coverage}
also applies to $F^+$: reflection preserves monotonicity, constant-shift
equivariance and the invariant cube.

\subsubsection{Conditional guarantees}
Let $G=\max_{x\in\Omega(P)}\sum_i x_i/w_i$. An optimal auxiliary Bellman
bias $z_0$ for reward $1/w_i$ has span $C_0\le DG$, by the nonnegative-reward
stopped hitting argument already proved in Appendix~\ref{app:auxiliary-coverage}.
Then $\kappa=\alpha G/M,z=(\alpha/M)z_0$ satisfy the witness inequalities.
The preceding lemma ensures stopping by a dyadic mass of order
\[
 \widetilde O\!\left(\frac{G(B_*+1)}{\epsilon^2}
                       +\frac{C_0}{\epsilon}+\frac1{w_{\min}}\right).
\]

For the empirical pilot in the main proof, $G=\widetilde O(K)$,
$1/w_{\min}=\widetilde O(A_{\rm mix})$ and $C_0\le DG$.
Appendix~\ref{app:rowwise-mixed} charges the resulting mass using its
combined quota collector and sums the failure allowances over attempts.
The statistical event here is conditional only on the observed finite
pilot. It is first proved for virtual row prefixes and then intersected
with actual collection. Conditioning uses only the finite pilot history.

\paragraph{Attribution.}
Span-agnostic generative stopping is due to \citet[Theorem~2]{zc}.
The additional calculation here is the nonuniform optimistic span bound
\eqref{eq:upper-bootstrap}, controlled by an auxiliary Bellman potential.
The interaction improvement additionally requires the affordable pilot
and actual-collection argument of the main proof.

\subsection{Proof of the Main Theorem}\label{app:rowwise-mixed}
Write $d_i=|\operatorname{supp}P_i|$, $K=SA$, and
$A_{\rm mix}=D\sum_i\min\{d_i,D\}$. Support and diameter remain unknown.
The following refinement combines the nested support interface with the
rounded scalar-window controller, and then uses the statistical
certificate of Appendix~\ref{app:nonuniform-statistic}.

We now combine the pilot, collection, and statistical guarantees to prove
Theorem~\ref{thm:main-total}. The combined invocation bound uses the nested
row transport inequality~\eqref{eq:sparse-transport} and the scalar-window
bounds. Proposition~\ref{rc:prop:vertex-history} then controls the flow cost
of the empirical pilot allocation. Quota collection converts its
nonuniform sample requirements into interactions, and the width test
certifies the output policy. With the data boundaries in
Appendix~\ref{app:mixed-spec}, it remains to show that enumeration reaches
a sufficient attempt and to combine its conditional confidence events.

\paragraph{Total interaction cost.}
Use the outer enumeration of Appendix~\ref{app:outer-schedule}, with $H_j=2^j$ and four event allowances $\eta_j=\delta/[64(j+1)^4]$ per attempt. Each attempt costs at most $2H_j$.
Conditional union and integration over finite histories give total failure
$\sum_j4(j+1)^2\eta_j=\delta\pi^2/96<\delta$.

\paragraph{A common sufficient attempt.}
Write $Y=A_{\rm mix}+K(B_*+1)/\epsilon^2+KD/\epsilon+1$ and set
\[
 \ell=1+\left\lceil\log_2
 \frac{2^{16}KS(D+1)(B_*+2)}{\delta\epsilon}\right\rceil,
 \qquad \ell_J=\ell+J.
\]
These are analysis quantities. For cap $N\le2^J$ and allowance $\eta_J$,
the explicit logarithms of Lemma~\ref{lem:mixed-invocation} satisfy
$H\le4\ell_J$, $\bar L\le32\ell_J$ and $F\le2\ell_J$.
In particular, $a(N,\eta_J)\le2^{50}A_{\rm mix}\ell_J^3$.
Choose the deterministic dyadic pilot length
\[
 n_J=2^{p_J}:=2^{\lceil\log_2(2^{52}A_{\rm mix}\ell_J^3)\rceil}.
\]
Whenever $n_J\le2^J$, it meets $n_J\ge2a(n_J,\eta_J)$,
$n_J\ge K$, and $n_J\ge60DL_*+4K(D+1)$ with allowance $\eta_J$.
Consequently every pilot history on the two pilot events satisfies
\[
 G_P(w)\le\overline G_J:=64K\ell_J,\qquad
 w_{\min}^{-1}\le n_J+K,\qquad \mathcal C_P(w)\le3/2.
\]
These common bounds remove the pilot randomness from the sufficient
mass. For $2/w_{\min}\le M\le2^J$, the statistical total is at most
$3M/2$. Its grid size is at most $3072M^3+1$, so the definition of
$\alpha$ in Appendix~\ref{app:nonuniform-statistic}, including
$\eta_J=\delta/[64(J+1)^4]$, gives $\alpha\le\overline\alpha_J:=512\ell_J$.
Take $M_J=2^{m_J}$ to be the smallest dyadic number at least
\[
 \max\left\{2(n_J+K),\quad
 \frac{2^{30}\overline\alpha_J\overline G_J(B_*+1)}{\epsilon^2},\quad
 \frac{2^{26}\overline\alpha_JD\overline G_J}{\epsilon}\right\}.
\]
Thus $M_J\ge2/w_{\min}$, and both inequalities in
\eqref{eq:span-sufficient} hold uniformly over those histories, using
$C_0\le DG_P(w)$.

The dyadic rounding gives $n_J<2^{53}Y\ell_J^3$ and
$M_J<2^{56}Y\ell_J^3$. Hence it suffices to choose
\[
 2^J>2^{64}Y\ell_J^4.
\]
This ensures $n_J,M_J\le2^J$, so $p_J,m_J\le J$ and the logarithmic
bounds just used apply. It also makes the collector cap exceed
\[
 8[3M_J/2+K(D+1)](1+\ln(M_J+K))+a(2^J,\eta_J),
\]
which is at most $2^{62}Y\ell_J^4$. Quotas must therefore finish,
and the corrected statistical test accepts on its event.
For an explicit sufficient stage, put
\[
 v=\lceil\log_2(Y+2)\rceil,\qquad u=v+\ell+2,\qquad
 J=v+8\lceil\log_2u\rceil+128.
\]
Then $J+\ell\le137u$ and $2^{128}>2^{64}137^4$, proving the required
strict inequality. Also $2^J=\widetilde O(Y)$. If the learner has not
stopped earlier, the fixed pair $(p_J,m_J)$ occurs at stage $J$ and
accepts on the joint event. Summing every preceding attempt costs less
than $4(J+1)^22^J=\widetilde O(Y)$, as claimed.

The computational assertion follows from Appendix~\ref{app:computation}.

\section{Proof of the Lower Bound}\label{app:matching-summary}
Theorem~\ref{thm:lower} uses one known reward table across a family of
unknown kernels. Fix $S\ge12$, $A\ge3$, $0<\delta\le1/16$,
$0<\epsilon\le1/32$, and
$\overline D\ge\max\{32,8(\lceil\log_{A-1}\lfloor S/3\rfloor\rceil+1)\}$.
The class contains communicating models with at most $S$ states, at most
$A$ actions per state, diameter at most $\overline D$, and total support
at most $J_0$, with $2SA\le J_0\le S^2A$.
The learner must satisfy timely stopping and all-start-state correctness
jointly on every instance, with deterministic stationary output. For a
common deterministic stopping budget, the lower bound is
$\Omega(SA\overline D\epsilon^{-2}\log(1/\delta))$, including with
generative access. The proof immediately following supplies the constants.

\subsection{Bounded-time lower bound}
\label{app:lower-proof}
The following construction concerns a class of fixed known rewards: the
reward table is the same for every transition kernel in the hard family.
It does not assert a lower bound for every reward table. In particular,
constant rewards require no policy identification.

\begin{theorem}[Bounded-time lower bound]\label{thm:lower}
Let $S\ge12$, $A\ge3$, $m=\lfloor S/3\rfloor$,
$h=\lceil\log_{A-1}m\rceil$, and
$\overline D\ge\max\{32,8(h+1)\}$. Let
$0<\epsilon\le1/32$ and $0<\delta\le1/16$.
Consider the finite family constructed below, whose kernels share one known
reward table and have at most $S$ states, $A$ actions per state, diameter at
most $\overline D$, and at most two positive successors per row. Suppose a
learner is correct on every member of this family: it returns a deterministic
stationary policy that is $\epsilon$-optimal from every state, jointly with
stopping by one common deterministic budget $T$, with probability at least
$1-\delta$ on each member.
Then
\[
 T\ge\frac{[m(A-1)-2]\overline D}{8192\epsilon^2}
           \log(1/\delta).
\]
The conclusion holds even with generative access and therefore also for
one continuing trajectory. Only correctness on this sparse hard family is
required, so a learner promised two-successor kernels is also covered.
Logarithms in this section are natural.
\end{theorem}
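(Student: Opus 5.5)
The plan is a standard change-of-measure argument on the routing-tree family sketched in Section~\ref{sec:lower-overview}, carried out for a generative learner. A continuing learner is simulated by one query per transition, so the bound transfers.

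\textbf{Construction and checks.} Build a $(A-1)$-ary routing tree of height at most $h$ whose leaves are $m$ gadgets. Gadget $j$ has a reward-one state $x_j$ and a reward-zero state $y_j$. Each candidate action $l\le A-1$ at $x_j$ moves to $y_j$ with probability $p_{jl}$ and otherwise stays at $x_j$. State $y_j$ returns to $x_j$ with probability $q=8/\overline D$ and otherwise stays. The last action at $x_j$ exits to the tree root, and tree states route to any child or parent. The reward table is fixed across all kernels, with $1$ on $x_j$ and $0$ elsewhere.

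The base kernel $P_0$ has every $p_{jl}=q$. Alternative $P_{jl}$ changes only row $(x_j,l)$ to $q(1-8\epsilon)$. I will verify the following in order:
\begin{enumerate}
\item the state count satisfies $3m-1\le S$;
\item every row has at most two successors;
\item the diameter is at most $1/q+2h+1/[q(1-8\epsilon)]\le\overline D$, using $\overline D\ge 8(h+1)$ and $\epsilon\le1/32$;
\item under $P_{jl}$ the optimal gain is $1/(2-8\epsilon)>1/2+2\epsilon$, and every other recurrent class has gain at most $1/2$.
\end{enumerate}
The last point shows that any $\epsilon$-optimal-from-all-states deterministic policy must use action $l$ at $x_j$ and route into gadget $j$. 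Define $E_{jl}$ as the event that the output uses action $l$ at $x_j$ and the tree routes toward $j$. These events are pairwise disjoint over $(j,l)$.

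\textbf{Information argument.} Truncate the learner at the common budget $T$. Let $\mathcal E_{jl}=\{\tau\le T\}\cap E_{jl}$, which is measurable with respect to the first $T$ observations. Correctness gives $\mathbb P_{jl}^T(\mathcal E_{jl})\ge1-\delta$. Disjointness gives $\sum_{jl}\mathbb P_0^T(\mathcal E_{jl})\le1$, so all but at most two pairs have $\mathbb P_0^T(\mathcal E_{jl})\le1/2$. For each such pair, the data-processing inequality applied to a Bernoulli event gives
\[
\mathrm{KL}(\mathbb P_0^T\Vert\mathbb P_{jl}^T)\ge \mathrm{kl}(1/2,1-\delta)\ge \tfrac12\log(1/\delta)-\log 2\ge c\log(1/\delta),
\]
using $\delta\le1/16$. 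The adaptive chain rule equates the left side with $\mathbb E_0N_{jl}(T)\,\mathrm{kl}(q,q(1-8\epsilon))$. Bound the binary divergence by the $\chi^2$ estimate $(8\epsilon q)^2/[q(1-8\epsilon)(1-q)]\le 128\epsilon^2 q=1024\epsilon^2/\overline D$. Since each query hits one row, summing over the $m(A-1)-2$ good pairs gives $T\ge\sum\mathbb E_0N_{jl}(T)$, which yields the stated bound. The constant $8192$ should come out after collecting the factors above.

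\textbf{Main obstacle.} I expect the hard step to be the gain comparison that forces the output events. A policy that ignores gadget $j$, or that is multichain, must be shown to have some starting state whose gain falls short by more than $\epsilon$. I would handle this by taking starting state $x_j$ itself. Any policy not in $E_{jl}$ either stays in gadget $j$ with a base action, giving gain $1/2$, or leaves to a recurrent class with gain at most $1/2$. Either way the gap exceeds $2\epsilon-0>\epsilon$ from that state.

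A secondary care point is the truncation. Events must be defined on the $T$-step data so that the KL chain rule applies with a finite horizon, even though the learner's stopping time is random.
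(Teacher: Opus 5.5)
Your proposal is correct and follows the paper's proof essentially step for step. It uses the same gadget-and-tree family, the adaptive KL chain rule, the $\chi^2$ bound on $\mathrm{kl}(q,q(1-8\epsilon))$, binary data processing on disjoint output events, and truncation at the common budget $T$; your sharper constant $1024\epsilon^2/\overline D$ in fact gives $4096$ in place of $8192$.

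The one thing to tighten is the output event. Define $E_{jl}$ as the paper does: action $l$ at $x_j$ with $\{x_j,y_j\}$ the sole recurrent class. Then run your gain comparison from every start state, not only from $x_j$. This matters because your exit-to-root variant lets an all-state optimal policy detour through some $x_k$'s exit, so correctness would not imply a literal ``every tree node points toward $j$'' event.
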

\begin{proof}
Put $b=A-1$ and $q=8/\overline D\le1/4$. Construct a rooted tree with
$m$ leaves, at most $m-1$ internal vertices, height at most $h$, and
at most $b$ children per vertex. One construction recursively divides
the leaves into at most $b$ nonempty groups of sizes differing by at most
one; every non-leaf has at least two children. Internal actions traverse
one parent or child edge deterministically, with unused actions holding.
The root has no parent. All these rewards are zero.

Replace leaf $j$ by a state $x_j$ and add a state $y_j$. At $x_j$, each
of $b$ candidate actions has reward one and moves to $y_j$ with probability
$p_{jl}$, otherwise holding. Its remaining action moves to its parent
with reward zero. At $y_j$ every action has reward zero and returns to
$x_j$ with probability $q$, otherwise holding. There are at most $3m-1$
states. In the base model all $p_{jl}=q$. Alternative $(j,l)$ changes
only $p_{jl}$ to $q(1-8\epsilon)$. All models have identical support
and known rewards.

A controlled route between any two states uses at most one initial
$y$-to-$x$ wait, at most $2h$ tree transitions, and at most one final
$x$-to-$y$ wait. Consequently its mean is at most
\[
 1/q+2h+1/[q(1-8\epsilon)]
 \le7\overline D/24+2h\le\overline D.
\]
The models communicate. Their diameters are also at least
$1/q=\overline D/8$.

Under a deterministic stationary policy, choosing a candidate at $x_j$
makes $\{x_j,y_j\}$ a closed irreducible pair with gain
$q/(q+p_{jl})$. Exiting at $x_j$ gives reward zero there. Every other
recurrent class therefore has gain at most $1/2$. In alternative $(j,l)$,
the changed pair has gain
\[
 \frac1{2-8\epsilon}>\frac12+2\epsilon.
\]
A policy routing every other state to this pair attains that gain, so it
is the all-state optimum. Conversely, all-state $\epsilon$-optimality
requires the output policy to have this pair as its sole recurrent class
and to choose action $l$ at $x_j$.

Call this global output event $E_{jl}$. It is determined by the output
policy and the common support, so it has the same definition under every
model. The $N=m(A-1)$ such events are disjoint. In particular, under
the base transcript law $\mathbb P_0$, at least $N-2$ of them have
$\mathbb P_0(E_{jl})\le1/2$. 

First consider a learner using at most $T$ samples. Include its independent
random tape in the transcript and pad after stopping with deterministic
null observations. If $N_{jl}(T)$ is the number of queries to the changed
row, conditional KL chain rules give
\[
 \mathrm{KL}(\mathbb P_0^T\Vert\mathbb P_{jl}^T)
 =\mathbb E_0N_{jl}(T)\,\mathrm{kl}(q,q(1-8\epsilon)).
\]
The action-selection rule contributes no divergence, even if queries are
adaptive. Bernoulli KL is bounded by its chi-squared divergence, giving
\[
 \mathrm{kl}(q,q(1-8\epsilon))
 \le\frac{(8q\epsilon)^2}{q(1-8\epsilon)[1-q(1-8\epsilon)]}
 \le256q\epsilon^2=2048\epsilon^2/\overline D.
\]
For an event with base probability $p\le1/2$ and alternative probability
$v\ge1-\delta$, binary data processing and binary entropy at most
$\log2$ imply
\[
\begin{aligned}
 \mathrm{KL}(\mathbb P_0^T\Vert\mathbb P_{jl}^T)
 &\ge\mathrm{kl}(p,v)\\
 &\ge\tfrac12\log(1/\delta)-\log2\\
 &\ge\tfrac14\log(1/\delta).
\end{aligned}
\]
Correctness in alternative $(j,l)$ supplies $v\ge1-\delta$. Sum the
resulting count lower bounds over the at least $N-2$ events. Since
$\sum_{jl}N_{jl}(T)\le T$, this gives the displayed claim.

Finally truncate the stated learner at its common budget $T$ and output
any deterministic policy on timeout. The joint guarantee implies that
this truncated learner remains correct with probability at least
$1-\delta$ on every model in the finite hard family. The fixed-budget
argument applies. No expected-time guarantee on rare failure histories
is assumed or inferred.
\end{proof}

Combining this lower bound with the diameter bound~\eqref{eq:diameter-bound}
gives the matching regimes in Appendix~\ref{app:matching-summary}.
For unrestricted support the sufficient condition is
$\epsilon^2\min\{S,\overline D\}\le1$; for at most two successors per
row it holds throughout the stated accuracy interval.

\paragraph{Matching consequence.}
The diameter bound~\eqref{eq:diameter-bound} matches up to
logarithms under the condition
\[
 \epsilon^2\min\{J_0/(SA),\overline D\}\le1.
\]
Its startup is at most
$\overline D\min\{J_0,SA\overline D\}$, so this condition absorbs it
into the precision term. The lower family has at most two successors per
row, so the two-successor class matches throughout the stated epsilon
range under the same remaining restrictions. For dense classes the
sufficient condition is $\epsilon^2\min\{S,\overline D\}\le1$.
This assertion fixes a reward across the hard kernels. Constant rewards
illustrate why the latter assertion would fail.

\section{Computational Complexity}\label{app:computation}
\subsection{One statistical batch}
\begin{lemma}[Bit cost of one statistical batch]\label{lem:stat-bit-cost}
For rational inputs as above, the rounded contraction fallback
computes the corrected certificate using
\[
M\,\operatorname{poly}(S,A,L_{\rm in},L_w,\log(M+2),
\log(1/\eta),\log(2/\epsilon))
\]
bit operations, where $L_w$ is the
encoding length of the frozen weights. This bound concerns one completed
batch and makes no assumption about the pilot or collector algorithm.
\end{lemma}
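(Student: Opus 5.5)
We account for the cost of one completed batch by bounding three quantities: the number of rational operations in each phase, the bit length of every number that appears, and the cost of each arithmetic operation on numbers of that length. The phases are forming the inputs and parameters, the lower fallback solve at $\gamma_-$, and the $L_g$ upper fallback solves. First, the parameters $L_g,L_h,a_{\min},J,\alpha$ and $\beta_i$ of Appendix~\ref{app:nonuniform-statistic} are integers or dyadic rationals. Their encoding lengths are $O(\log M+\log(1/\eta)+\log(S A))$, using $N\le3M/2$. The empirical rows $\widehat P_i$ have denominators $n_i=\lceil Mw_i\rceil$, so their bit length is $O(L_w+\log(M+2))$. Forming all histograms from the stored prefixes takes $O(N\log S)$ operations.

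Second, we bound one evaluation of $F_i^\pm$ at a dyadic vector $v$ of mesh $d$. The plan is to sort the coordinates, find the quantile threshold $\xi_i$ by a cumulative scan, and clip. We then compute the clipped mean, variance and span exactly in rationals. The square root in the penalty cannot be computed exactly. We therefore enclose $\sqrt{\beta_i\operatorname{Var}(z)}$ by bisection to width $\epsilon/128$, which costs $O(\log(1/\epsilon)+\text{bit length})$ steps and gives the outward row enclosure of width at most $\epsilon/128$ required there. Every quantity is a rational whose bit length is polynomial in $L_{\rm in},L_w,\log(M+2)$ and the bit length of $v$. Hence one backup $T^\pm_\gamma$ costs $\operatorname{poly}(S,A,\cdot)$ bit operations.

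Third, we count the iterations of the fallback. The complete rounded fallback of Appendix~\ref{app:auxiliary-coverage} runs $\lceil\lceil\log_2(32/(a^2\nu))\rceil/a\rceil$ iterations. Here $a\ge 1/(M+1)$ and $\nu=\epsilon/128$, so the count is $O(M\log(M/\epsilon))$. This count is the source of the single linear factor $M$ in the claim. Each iterate is re-rounded downward to the dyadic mesh $d\le a^2\nu/128$, and the invariant cube is $[0,1/a]^S$. Every coordinate is therefore an integer multiple of $d$ of size at most $(M+1)/d$. Its bit length is $O(\log M+\log(1/\epsilon))$ and does not grow with the iteration count.

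Finally, the residual certificate costs one more backup and a few comparisons, and policy extraction requires one argmax per state. Multiplying the iteration count by the per-backup cost and summing over the $L_g+1=O(\log M)$ discounts gives the stated bound.

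The main obstacle is to bound the bit length of the intermediate rationals inside $F_i$ before rounding. The variance of a clipped vector under $\widehat P_i$ has denominator up to $n_i^2d^{-2}$, and the threshold comparisons mix the data denominators with the mesh. We would show that these lengths are additive in $L_w$, $\log M$ and $\log(1/d)$ and are never compounded across iterations. The reason is that each iteration recomputes from the re-rounded dyadic iterate rather than from the previous exact rational.
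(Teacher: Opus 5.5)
Your cost accounting follows the paper's proof. The fallback's iteration count $O(M\log((M+2)/\epsilon))$ supplies the single factor $M$. Re-rounding to a fixed dyadic mesh keeps denominators from compounding. Each operator evaluation costs polynomial work via sorting, histogram sums and square-root bracketing. The $O(\log(M+2))$ discounts add a logarithmic factor.

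One step fails as written. You bracket the square-root penalty, and hence each row functional, only to width $\epsilon/128$, taking this from the certificate's requirement in Appendix~\ref{app:nonuniform-statistic}. The iteration count you import from Appendix~\ref{app:auxiliary-coverage} is valid only when the row error at every update is at most the mesh $d\le a^2\nu/128$. The paper's recursion then gives distance at most $\gamma^k/a+2d/a\le 3a\nu/64$ after $k$ updates. With per-update error of order $\nu=\epsilon/128$, the same recursion yields only $\gamma^k/a+O(\nu/a)$, and $\nu/a$ can be of order $\nu M$. So neither the distance bound nor the final test $R/a\le\nu$ follows, and the count $O(M\log(M/\epsilon))$ no longer produces a certificate. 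The final outward evaluation has the same problem. A row width $\zeta$ alone allows a residual-enclosure half-width of order $\gamma\zeta$, so guaranteeing $R\le a\nu$ needs $\zeta=O(a\nu)$, not $\nu$.

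The repair is the paper's choice: bracket every row evaluation, including the last one, to width $d$. This costs $O(\log(1/d))=O(\log(M+2)+\log(2/\epsilon))$ bisection steps, so your per-backup polynomial bound and the overall $M\operatorname{poly}(\cdot)$ count survive unchanged.

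A minor point: the smallest gap on the upper grid is $2^{-L_g}\ge 1/[2(M+1)]$, not $\ge 1/(M+1)$. This affects only constants.
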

\begin{proof}
Put $\nu=\epsilon/128$. At discount gap $a$, choose a dyadic mesh
$d\le a^2\nu/128$. Sorting the $S$ successor values identifies each
quantile; rational sums compute its mean and variance; integer square-root
bracketing encloses the row functional to width at most $d$. Histograms
suffice, so a row evaluation does not expand the individual observations.
Round each Bellman update downward on this mesh. Row evaluation and
rounding cause at most $2d$ error per update. Starting at zero, sup-norm
error after $k$ updates is at most $\gamma^k/a+2d/a$.
For
\[
 k=\left\lceil\frac{\lceil\log_2(32/(a^2\nu))\rceil}{a}\right\rceil
\]
this is at most $3a\nu/64$. A further outward evaluation therefore
bounds the residual by $(1+\gamma)3a\nu/64+d<a\nu$,
which implies the required $R/a\le\nu$.
Both signs preserve the cube $[0,1/a]^S$. Fixed-mesh storage prevents
repeated growth of rational denominators across iterations; intermediate
sums have polynomial bit length in the displayed inputs and count sizes.
The smallest gap is at least $1/[2(M+1)]$ and there are
$O(\log(M+2))$ discounts. Thus at most
\(O(M\log(M+2)\log((M+2)/\epsilon))\)
operator evaluations suffice.
The virtual source-absorption grid contributes only to the logarithmic
confidence coefficient.
Together with $N\le3M/2$, these facts give the asserted bit bound.
\end{proof}

\subsection{Representation and full schedule}
\paragraph{Bit computation.}
At dyadic witnesses, coordinate bisection endpoints and unseen-set caps
have dyadic denominators of $O(\log(N+2))$ bits. The unseen sets form a chain; after contracting
lower bounds, sorted greedy filling under the laminar capacities computes
row support exactly in $O(S^2\log(N+2))$ rational operations with the
simple implementation. Rounded integer backups and scalar predictions
have polynomial bit length. On the good event the number of backups is
$N+O(\log(D+2)\log(D(N+1)+2))$.
At every completed statistical attempt $M\le H_j$ and
$N_{\rm stat}\le M+K\le3M/2$. The rounded contraction calculation in
Lemma~\ref{lem:stat-bit-cost} bounds its work by $H_j$
times a fixed polynomial in dimensions, input lengths and logarithms.
An incomplete collection performs no statistical solve. Summing across
the paid schedule proves the computational claim of the theorem.

\section{Comparison with Prior Work}\label{app:comparison}

\paragraph{Navigation constraints.}
\citet{navigation}, Sections 3--5, restrict information-directed allocations
to stationary flows and track a learned allocation along a controlled
trajectory. Their Theorem 6 concerns identifying a unique discounted-optimal
policy and bounds expected stopping time asymptotically as $\delta\to0$.
A finite uniform budget in diameter, support and accuracy requires
control of allocation learning and collection. We bound arbitrary
adaptive trajectory flows, handle decreasing auxiliary rewards with a
finite-cap controller, and convert nonuniform statistical requirements
into actual collection cost. These steps give Theorem~\ref{thm:main-total}'s
finite-time average-reward guarantee without structural inputs.
The principle of respecting navigation constraints in allocation is shared;
the new guarantee and its finite-time analysis are the distinction.

\paragraph{Variance-dependent online regret.}
FOCUS~\citep{focus26} is a tractable online comparator: its published
Theorem~3 bounds discounted regret by
$\widetilde O(\sqrt{SA\,\mathrm{Var}^*_\gamma}+\Gamma HSA)$
when the supplied $H$ bounds the true discounted optimal span; here
$\Gamma$ is the maximum row support size. Its average-reward and prior-free
consequences are important precedents.

\paragraph{Planner--collector composition.}
Theorem 4 of \citet{tdk} gives Online-DFE the interaction bound
$\widetilde O(SAD^2/\epsilon^2+S^2AD^3)$.
It is insufficient to compare only with this bound. Let
$H_* = \spn(h^*)$ for an optimal Bellman bias.
\citet{zc}, Algorithm 2 and Theorem 2, provide observable
fixed-accuracy stopping with generative access and
$\widetilde O((H_*+1)/\epsilon^2)$ samples per row, without knowing $H_*$.
GOSPRL~\cite{gosprl}, Theorem 1, collects $B$ prescribed samples online
at cost $\widetilde O(BD+S^2AD^{3/2})$, without knowing $D$.
Combining these guarantees gives the following comparator, whose
derivation we provide below:
\begin{equation}\label{eq:comparator}
 \widetilde O\bigl(SAD(H_*+1)/\epsilon^2+S^2AD^{3/2}\bigr).
\end{equation}

To see this, implement the generative algorithm's geometric row requests
$n_j=2^j$ with fresh GOSPRL calls, starting each from the current state.
Keep only the first requested successors of each row and charge every
intervening transition. Allocate $\delta/2$ to the simultaneous statistical
certificates and $\delta/[4(j+1)^2]$ to collection call $j$. Conditional
on each finite call-start history, unused row stacks have the product
law required by the statistical argument. Define their fixed-size
prefixes even when collection has not finished. The source's marginal
confidence unions therefore remain valid; neither cross-stage independence
nor conditioning on successful collection is needed.
Its proof supplies a deterministic sufficient row count
$n_* = \widetilde O((H_*+1)/\epsilon^2)$, used only in analysis.
Through the first geometric request exceeding $n_*$, the total requested
samples are $O(SAn_*)$ and the number of calls is logarithmic.
Sum the conditional collector bounds and intersect their events with
the statistical event. This yields~\eqref{eq:comparator}, retaining
the observable stop and all-state policy guarantee without a reset.

Since $H_*\le D$ and $D\ge1$, the diameter-only consequence of
\eqref{eq:comparator} is
$\widetilde O(SAD^2/\epsilon^2+S^2AD^{3/2})$.
The diameter-based upper bound improves its polynomial dependence on $D$ in both terms.
The comparison is between the displayed upper guarantees. For Theorem~\ref{thm:main-total},
$A_{\rm mix}\le S^2AD$ and, since $\epsilon\le1$ and $H_*+1\ge1$,
$SAD/\epsilon\le SAD(H_*+1)/\epsilon^2$.
Its displayed polynomial terms are therefore bounded, up to constants,
by those of~\eqref{eq:comparator}. This comparison of upper expressions
leaves instancewise runtime comparisons and other oracle inequalities open.

\paragraph{Other access and output criteria.}
SAVIC+ of \citet{lee}, Theorem 3.4, also gives
prior-free all-state policy identification on a single trajectory.
Writing $B_Q=\spn(Q^*)$, its guarantee is
$\widetilde O(t_{\rm cov}B_Q/\epsilon+
B_Q^2/(d_{\min}^{\pi_b}\epsilon^2))$ for $\epsilon\le1\le B_Q$.
Its cover time and minimum stationary state--action mass refer to the
behavior policy, so this is not a diameter-only comparison. Favorable
behavior distributions can give advantages not captured by our bound.

\citet{lrp}, Theorem 6, analyze expected
stopping for exact discounted policy identification under additional
assumptions, including a known reference transition measure.
\citet{jiao}, Section 3.1, use synchronous generative sampling.
\citet{fqi}, Section 4.3, analyze offline behavior-policy data
under stationary mixing assumptions. These results address adjacent questions.

\paragraph{Attribution and scope.}
The final contribution is an exploration history with simultaneous
coverage and affordable collection, combined with nonuniform statistical
stopping on the continuing trajectory. Unfinished-quota rewards and optimistic
sample collection have precedents in GOSPRL and its 0/1-UCRL comparison;
quantile monotonicity is explicitly imported from \citet{zzc} in
Appendix~\ref{app:nonuniform-statistic}.
Theorem~\ref{thm:lower} reconstructs the familiar diameter-based lower
order associated with~\cite{wwy}, using a bounded-time global output
event to match the stopping-and-output requirement used here. Full-range optimality and practical acceleration remain open.

\section{Experimental Setup}\label{app:experimental-details}
\subsection{Cyclic MDP}\label{app:cycle-protocol}
States are
$(i,b)$ with $i\in\{0,\ldots,m-1\}$ and $b\in\{0,1\}$, so $S=2m$,
and actions $0,1,2$. Actions $0,1$ receive known reward $b$; action $2$
receives zero. For action $a\in\{0,1\}$, the cycle index advances to
$i+1\pmod m$ with probability $q$ and otherwise stays at $i$; independently,
the next bit is one with probability $3/4$ if $a=i\bmod2$, and $1/4$
otherwise. Action $2$ always advances and sets the next bit by a fair coin.
Thus $q$ controls cycle progress under the rewarding actions. The Bellman equations give $g^*=3/4$ and $h^*(i,b)=b$, with span one.
We vary $S=16,32$, $q=1/16,1/4096$, and samples per row
$n=2^{12},2^{14},2^{16}$; nominal mass is $M=3Sn$.
Uniform random actions collect
every prescribed prefix along one continuing path. True-model quantities
appear only in simulation and post-hoc evaluation; the planner receives
counts and known rewards. Three fresh paired seeds per cell are fixed in
advance. Figure~\ref{fig:prefix-error} shows optimal-gain-minus-lower-certificate
error and its three-seed ranges.
The random seeds are 97001--97003. Numerical acceptance requires
verification of the single lower solve: exact outward residual distance $R/(1-\gamma)\le10^{-4}$ and
maximum row-interval width at most $10^{-4}$. This yields a numerically
verified lower certificate on the statistical event; it is not an
$\epsilon$-width upper/lower policy stopping test. The fixed-prefix rule uses
$\eta=1/20$, $\gamma=1-1/(M+1)$ with a numerical clamp inactive at the invariant fixed point; no diameter
is supplied to the solver.

\subsection{Connection to the bound}\label{app:experiment-bound}
For uniform row proportions, $w_i=1/(SA)$ and $n_i=n$, so the auxiliary
reward $1/w_i=SA$ is constant. The witness $z_0=0$ gives $G_P(w)=SA$
and $C_0=0$, hence $\kappa=\alpha/n$ and $c_z=0$ in
Theorem~\ref{thm:auxiliary-oracle}. The confidence factor depends only on
$S,A,n$ and the fixed confidence target.
Thus the theorem bounds $g_P^*-L_n$ by a multiple of
$\sqrt{(B_*+1)\alpha/n}$ at exact fixed points; the numerical residual
corrections are included in the reported $L_n$.
The experiment isolates this dependence at fixed sample sizes;
the theorem's constants and collection complexity require separate analysis. Under the optimal
rewarding policy $\pi^*$, the hitting radius is
\[
 \min_y\max_x\mathbb E_x^{\pi^*}\tau_y=(m-1)/q+m/3
\]
for the values of $q$ used here. A target with bit one attains the
minimum: reaching its cycle index takes at most $(m-1)/q$ steps in
expectation, with an additional $m/3$ from the chance of arriving with
bit zero. The resulting radius ratios are about 250 for both state sizes.

\clearpage
\section{Numerical Illustration}\label{app:additional-results}
Under the uniform-sampling protocol of Appendix~\ref{app:cycle-protocol},
$G_P(w)=SA$ and $C_0=0$, giving lower-certificate error
$O(\sqrt{(B_*+1)\alpha/n})$ in Theorem~\ref{thm:auxiliary-oracle}.
The varied quantity is the optimal policy's hitting radius, not controlled diameter.

All 36 solves satisfy the residual and row-interval criteria in
Appendix~\ref{app:cycle-protocol}. Figure~\ref{fig:prefix-error} includes
every seed and all 18 ratios paired by $S,n$ and seed, ranging from
$0.9889$ to $1.0232$. This is consistent with the bound on these instances.

\begin{figure}[H]\centering
\includegraphics[width=\linewidth]{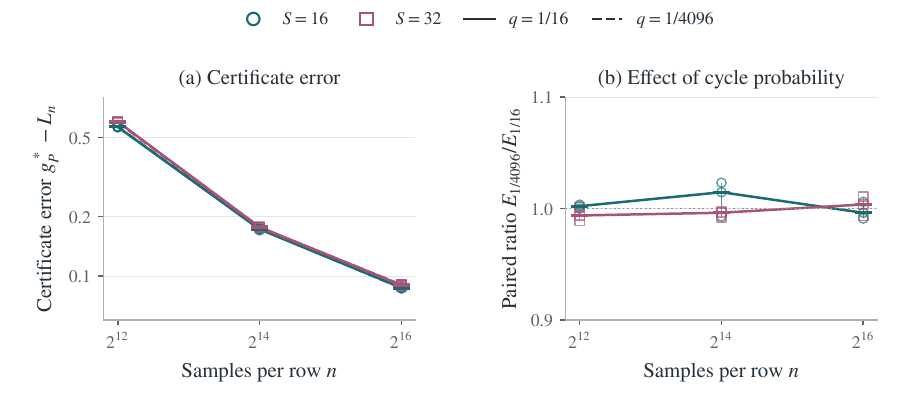}
\caption{Certificate errors are similar across cycle probabilities at fixed
span. (a) Absolute error $E_q=g_P^*-L_n$. (b) $E_{1/4096}/E_{1/16}$,
paired by $S,n$ and seed; one denotes equality. All three seeds, medians
and observed min--max are shown.
Color and shape identify $S$; line style identifies $q$. Policy gaps are reported below.}
\label{fig:prefix-error}\end{figure}

Six outputs at $q=1/4096$, $n=2^{16}$ have small nonzero policy gaps
(approximately $3.66\times10^{-5}$ to $1.83\times10^{-4}$); the comparison policy
used to define the hitting radius is the true optimal policy.

\end{document}